\def\eqref#1{equation~\ref{#1}}
\def\1{\bm{1}}
\def\vh{{\bm{h}}}
\def\vx{{\bm{x}}}
\def\vz{{\bm{z}}}
\DeclareMathAlphabet{\mathsfit}{\encodingdefault}{\sfdefault}{m}{sl}
\SetMathAlphabet{\mathsfit}{bold}{\encodingdefault}{\sfdefault}{bx}{n}
\newcommand{\vH}{\mathbf{H}}
\newcommand{\vomega}{\boldsymbol{\mathbf{\omega}}}
\definecolor{Maroon}{cmyk}{0, 0.87, 0.68, 0.32}
\definecolor{ForestGreen}{rgb}{0.13, 0.55, 0.13}
\definecolor{Sangria}{rgb}{0.57, 0.0, 0.04}
\definecolor{Green(matplotlib)}{rgb}{0.0, 0.5, 0.0}
\definecolor{Purple(matplotlib)}{cmyk}{0.00	1.00	0.00   0.25}
\def\fpr{FPR95\@\xspace}
\def\auc{AUROC\@\xspace}
\newcommand{\eg}{\textit{e.g.}}
\newcommand{\ie}{\textit{i.e.}}
\newcolumntype{C}[1]{>{\centering\let\newline\\\arraybackslash\hspace{0pt}}m{#1}}
\title{NECO: NEural Collapse Based Out-of-distribution detection}
\author{Mou\"{i}n Ben~Ammar\textsuperscript{$\diamond,\dagger$,$*$}, Nacim Belkhir\textsuperscript{$\dagger$}, Sebastian Popescu\textsuperscript{$\diamond$}, Antoine Manzanera\textsuperscript{$\diamond$},  Gianni Franchi\textsuperscript{$\diamond$}
\thanks{corresponding author} \\
U2IS Lab ENSTA Paris\textsuperscript{$\diamond$},  Palaiseau, FRANCE \\
SafranTech\textsuperscript{$\dagger$}, Chateaufort 78117, FRANCE \\
\texttt{\{first.last\}@enstaparis.com\textsuperscript{$\diamond$}, safrangroup.com\textsuperscript{$\dagger$}}
}
\newtheorem{theorem}{Theorem}[section]
\newtheorem{lemma}[theorem]{Lemma}
\newtheorem{remark}{Remark}[section]
\begin{document}
\maketitle
\begin{abstract}
Detecting out-of-distribution (OOD) data is a critical challenge in machine learning due to model overconfidence, often without awareness of their epistemological limits. 
We hypothesize that ``neural collapse'', a phenomenon affecting in-distribution data for models trained beyond loss convergence, also influences OOD data. To benefit from this interplay,
we introduce NECO, a novel post-hoc method for OOD detection, which leverages the geometric properties of ``neural collapse'' and of principal component spaces to identify OOD data. 
Our extensive experiments demonstrate that NECO achieves state-of-the-art results on both small and large-scale OOD detection tasks while exhibiting strong generalization capabilities across different network architectures. Furthermore, we provide a theoretical explanation for the effectiveness of our method in OOD detection. Code is available at \url{https://gitlab.com/drti/neco}.
\end{abstract}

\section{Introduction}
In recent years, deep learning models have achieved remarkable %
success
across various domains \citep{openai-gpt3, ramesh2021dall, jumper2021highly}. 
However, a critical vulnerability often plagues these models: they tend to exhibit unwarranted confidence in their predictions, even when confronted with inputs that deviate from the training data distribution. 
This issue gives rise to the challenge of Out-of-Distribution (OOD) detection for Deep Neural Networks (DNNs). 
OOD detection holds significant safety implications. 
For instance, %
in medical imaging a
DNN may fail to make an accurate diagnosis when presented with data that falls outside its training distribution (\eg, using a different scanner).
A reliable DNN classifier should not only correctly classify known In-Distribution (ID) samples but also flag any OOD input as ``unknown''. OOD detection plays a crucial role in ensuring the safety of applications such as medical analysis \citep{medical}, industrial inspection \citep{industrial}, and autonomous driving \citep{KittGL10}.
 
There are various approaches to distinguish ID and OOD data that fall into three main categories:
confidence-based \citep{Liu2020EnergybasedOD, msp, Hendrycks2022ScalingOD, DBLP:journals/corr/abs-2110-00218,ODIN},
features/logits-based \citep{sun2022dice, react, wang2022vim,djurisic2022extremely} 
and distance/density based \citep{hyperspherical, mahalanobis, sun2022outofdistribution}. 
OOD detection can be approached in a supervised manner, primarily employing outlier exposure methods \citep{DBLP:journals/corr/abs-1812-04606}, by training a model on OOD datasets.
However, in this work, we focus on post-hoc (unsupervised) OOD detection methods. 
These methods do not alter the network training procedure, hence avoid harming performance %
and increasing the training cost.
Consequently, they can be seamlessly integrated into production models.
Typically, such methods leverage a trained network to transform its latent representation into a scalar value that represents the confidence score in the output prediction. 
The underlying presumption is that ID samples should yield high-confidence scores, while the confidence should notably drop for OOD samples.
Post-hoc approaches make use of a model-learned representation, such as the model's logits or deep features which are typically employed for prediction, to compute the OOD score.

A series of recent studies \citep{DBLP:journals/corr/abs-2002-08709, NC} have shed light on the prevalent practice of training DNNs well beyond the point of achieving zero error, aiming for zero loss.
In the Terminal Phase of Training (TPT), 
occurring after zero training set error is reached,
a ``Neural Collapse'' (NC) phenomenon emerges, particularly in the penultimate layer and in the linear classifier of DNNs \citep{NC}, and it is characterized by four main properties:

\begin{enumerate}
    \item \label{NC_equation1}\textbf{Variability Collapse (NC1):} 
during the TPT, the within-class variation in activations becomes negligible as each activation collapses toward its respective class mean.
    \item \label{NC_equation2}\textbf{Convergence to Simplex ETF (NC2):} the class-mean vectors converge to having equal lengths, as well as having equal-sized angles between any pair of class means. This configuration corresponds to a well-studied mathematical concept known as Simplex Equiangular Tight Frame (ETF).
    \item \label{NC_equation3}\textbf{Convergence to Self-Duality (NC3):} 
    in the limit of an ideal classifier, the class means and linear classifiers of a neural network converge to each other up to rescaling, implying that the decision regions become geometrically similar and the class means to lie at the centers of their respective regions.
    \item \label{NC_equation4}\textbf{Simplification to Nearest Class-Center (NC4):}  the network classifier progressively tends to select the class with the nearest class mean for a given activation, typically based on standard Euclidean distance.
\end{enumerate}
These NC properties provide valuable insights into how DNNs behave during the TPT. Recently it was demonstrated that collapsed models exhibit improved OOD detection performance \citep{haas2023linking}. Additionally, they found that applying L2 normalization can accelerate the model's collapse process. However, to the best of our knowledge, no one has yet 
evidenced
the following interplay between %
NC
of ID and OOD data:
\begin{enumerate}
  \setcounter{enumi}{4}
  \item \label{NC_equation5}\textbf{ID/OOD Orthogonality (NC5):} As the training procedure advances, OOD and ID data tend to become increasingly more orthogonal to each other. In other words, the clusters of OOD data become more perpendicular to the configuration adopted by ID data (\ie, the Simplex ETF).
\end{enumerate}
Building upon the insights gained from the aforementioned properties of NC, as well as the novel observation of ID/OOD orthogonality (NC5), we introduce a new OOD detection metric called ``NECO'', which stands for NEural Collapse-based Out-of-distribution detection. NECO involves calculating the relative norm of a sample within the subspace occupied by the Simplex ETF structure. This subspace preserves information from ID data exclusively and is normalized by the norm of the full feature vector.

We summarize our contributions as follows:
\begin{itemize}
    \item We introduce and empirically validate a novel property of %
    NC in the presence of  OOD data.
     \item We proposed a novel OOD detection method \textbf{NECO}, a straightforward yet highly efficient post-hoc method that leverages the concept of NC. Furthermore, we offer a comprehensive theoretical analysis that sheds light on the underlying mechanisms of \textbf{NECO}.
      \item NECO demonstrates superior performance compared to state-of-the-art methods  across different benchmarks and  architectures, such as ResNet-18 on CIFAR10/CIFAR100, and vision transformer networks on ImageNet-1K.
\end{itemize}

\section{Related work}
\paragraph{Neural Collapse}
is a set of intriguing properties that are exhibited by DNNs when they enter the TPT. %
NC in its essence, represents the state at which the within-class variability of the penultimate layer outputs collapse to a very small value. Simultaneously, the class means collapse to the vertices of a Simplex ETF. This empirically emergent structure simplifies the behavior of the DNN classifier. Intuitively, these properties depict the tendency of the network to maximize the distance between the class cluster mean vectors while minimizing the discrepancy within a given cluster. \cite{NC} have shown that the collapse property of the model induces generalization power and adversarial robustness, which persists across a range of canonical classification problems, on different neural network architectures
(\eg, VGG  \citep{simonyan2015deep}, ResNet \citep{7780459}, and DenseNet \citep{8099726} and on a variety of standard datasets (\eg,
MNIST \citep{deng2012mnist}, CIFAR-10 and CIFAR-100 \citep{Krizhevsky2009LearningML}), and ImageNet  \citep{imagenet15russakovsky}).
NC behavior has been empirically observed when using either the cross entropy \citep{NC} or the mean squared error (MSE) loss \citep{han2022neural}.
Many recent works attempt to theoretically analyze the NC behavior
\citep{yang2022inducing,kothapalli2023neural,ergen2021revealing,NEURIPS2021_f92586a2,tirer2022extended}, usually using a mathematical framework based on variants of the unconstrained features model, proposed by \cite{mixon2020neural}.

\cite{haas2023linking} states that collapsed models exhibit higher performance in the OOD detection task. However, to our knowledge, no one has attempted to directly leverage the emergent properties of NC to the task of OOD detection.
\paragraph{OOD Detection} has attracted a growing research interest in recent years. It can be divided into two pathways: supervised and unsupervised OOD detection. Due to the post-hoc nature of our method, we will focus on the latter. Post-hoc approaches can be divided into three main categories.
Firstly, confidence-based methods. These methods utilize the network final representation to derive a confidence measure as an OOD scoring metric \citep{devries2018learning,huang2021mos,msp,Liu2020EnergybasedOD, Hendrycks2022ScalingOD,ODIN,DBLP:journals/corr/abs-2110-00218}. Softmax score \citep{msp} is the common baseline for post-hoc methods as they use the model softmax prediction as the OOD score. 
Energy \citep{Liu2020EnergybasedOD} elaborates on that principle by computing the energy (\ie, the logsumexp on the logits), with demonstrated advantages over the softmax confidence score both empirically and theoretically. ODIN \citep{ODIN} enhances the softmax score by perturbing the inputs and rescaling the logits.
Secondly, distance/density based \citep{abati2019latent,NEURIPS2018_abdeb6f5,sabokrou2018adversarially,zong2018deep,mahalanobis,hyperspherical,ren2021simple,sun2022outofdistribution,techapanurak2019hyperparameterfree,Zaeemzadeh_2021_CVPR,vanamersfoort2020uncertainty}. These approaches identify OOD samples by leveraging the estimated density on the ID training samples. Mahalanobis \citep{mahalanobis} utilizes a mixture of class conditional Gaussians on the distribution of the features. \citep{sun2022outofdistribution} uses a non-parametric nearest-neighbor distance as the OOD score.
Finally, the feature/logit based methods utilize a combination of the information within the model's logits and features to derive the OOD score. \citep{wang2022vim} utilizes this combination to create a virtual logit to measure the OOD nature of the sample. ASH \citep{djurisic2022extremely} utilizes feature pruning/filling while relying on sample statistics before passing the feature vector to the DNN classifier.
Our method lies within the latter subcategory, bearing different degrees of similarity with some of the methods. In the same subcategory, methods like NuSA \citep{cook2020outlier} or ViM \citep{wang2022vim} leverage the principal/Null space to compute their OOD metric. More details are presented at \ref{NECO_presentation}.
\section{Preliminaries}
\subsection{background and  hypotheses}
In this section, we will establish the notation used throughout this paper. We introduce the following symbols and conventions:
\begin{itemize}
 \item We represent the training and testing sets as $D_l=(\vx_i , y_i)_{i=1}^{n_l}$ and $D_{\tau}=(\vx_i,y_i)_{i=1}^{n_{\tau}}$, respectively. Here, $\vx_i$ represents an image, $y_i \in \llbracket 0,C \rrbracket$ denotes its associated class identifier, and $C$ stands for the total number of classes.  It is assumed that the data in both sets are independently and identically distributed (i.i.d.) according to their respective unknown joint distributions, denoted as $\mathcal{P}_{l}$ and $\mathcal{P}_{\tau}$.
 \item In the context of anomaly detection, assumeion that $\mathcal{P}_{l}$ and $\mathcal{P}_{\tau}$ exhibit a high degree of similarity. However, we also introduce another test dataset denoted as $D_{\text{OOD}}=(\vx_i^{\text{OOD}},y_i^{\text{OOD}})_{i=1}^{n_{\text{OOD}}}$, where the data is considered to be i.i.d. according to own unknown joint distribution, referred to as $\mathcal{P}_{\text{OOD}}$, which is distinct from both $\mathcal{P}_{l}$ and $\mathcal{P}_{\tau}$.
 \item The DNN is characterized by a vector containing its trainable weights, denoted as $\vomega$. We use the symbol $f$ to represent the architecture of the DNN associated with these weights, and $f_{\vomega}(\vx_i)$ denotes the output of the DNN when applied to the input image $\vx_i$.
 \item To simplify the discussion, we assume that the DNN can be divided into two parts: a feature extraction component denoted as $h_{\vomega}(\cdot)$ and a final layer, which acts as a classifier and is denoted as $g_{\vomega}(\cdot)$. Consequently, for any input image $\vx_i$, we can express the DNN's output as $f_{\vomega}(\vx_i) =(g_{\vomega}\circ h_{\vomega})(\vx_i)$.
 \item In the context of image classification, we consider the output of $h_{\vomega}(\cdot)$ to be a vector, which we denote as $\vh_i = h_{\vomega}(\vx_i) \in \mathbb{R}^D$ for image $\vx_i$ with $D$ the dimension of the feature space.
\item We define the matrix $\vH \in M_{n_{l},D}(\mathbb{R})$ as containing all the $ h_{\vomega}(\vx_i)$ values where $\vx_i$ belongs to the training set $D_{l}$. Specifically, $\vH= \begin{bmatrix}
h_{\vomega}(\vx_1) & \ldots & h_{\vomega}(\vx_{n_{l}})
\end{bmatrix}$ represents the feature space within the ID data.
\item We introduce $D_l^c$ as a dataset consisting of data points belonging to class $c$, and $\vH^c$ represents the feature space for class $c \in \llbracket 0,C \rrbracket$.
\item For a given combination of dataset and DNN, we define the empirical global mean $\mu_G=1/\mbox{card}(D_l)\sum_{\vx_i \in D_l} h_{\vomega}(\vx_i) \in \mathbb{R}^D$ and the empirical class means $\mu_c=1/\mbox{card}(D_l^c) \sum_{\vx_i \in D_l^c} h_{\vomega}(\vx_i) \in \mathbb{R}^D$, where $\mbox{card}(\cdot)$ represents the number of elements in a dataset.
\item In the context of a specific dataset and DNN configuration, we define the empirical covariance matrix of $\vH$  to refer to $\Sigma_T \in M_{D \times D}(\mathbb{R})$. This matrix encapsulates the total covariance and can be further decomposed into two components: the between-class covariance, denoted as $\Sigma_B$, and the within-class covariance, denoted as $\Sigma_W$. This decomposition is expressed as $\Sigma_T = \Sigma_B + \Sigma_W$.
\item Similarly for a given combination of an OOD dataset and a DNN, we define the OOD empirical global mean $\mu_G^{\mbox{OOD}}$ and the OOD empirical class means $\mu_c ^{\mbox{OOD}}$, and the OOD  feature matrix $\vH^{\mbox{OOD}} \in M_{n_{\text{OOD}},D}(\mathbb{R})$.
\end{itemize}
In the context of unsupervised OOD detection with a post-hoc method, we train the function $f_{\vomega}(\cdot)$ using the dataset $D_l$. Following the training process, we evaluate the performance of $f_{\vomega}$ on a combined dataset consisting of both $D_{\text{OOD}}$ and $D_{\tau}$. 
Our objective is to obtain a confidence score that enables us to determine whether a new test data point originates from $D_{\text{OOD}}$ or $D_{\tau}$.
\subsection{neural collapse}
Throughout the training process of $f_{\vomega}$, it has been demonstrated that the latent space, represented by the output of $h_{\vomega}$, exhibits four distinct properties related to NC. In this subsection we  delve  deeper into the first two, with the remaining two being detailed in \ref{NC_sup}.
\textbf{The first Neural Collapse (NC1)} property is related to the Variability Collapse of the DNN. As training progresses, the within-class variation of the activations diminishes to the point where these activations converge towards their respective class means, effectively making $\Sigma_W$ approach zero. To evaluate this property during training, \citep{NC} introduced 
the following operator:
\begin{equation} \label{eq:NC1} 
  \operatorname{NC1}=\operatorname{Tr}\left[ \frac{\Sigma_W \Sigma_B^{\dagger} }{C} \right] ~
\end{equation}
Here, $[.]^{\dagger}$ signifies the Moore-Penrose pseudoinverse. While the authors of \cite{NC} state that the convergence of $\Sigma_W$ towards zero is the key criterion for satisfying NC1, they also point out that $\operatorname{Tr}\left[\Sigma_W \Sigma_B^{\dagger}\right]$ is commonly employed in multivariate statistics for predicting misclassification. This metric measures the inverse signal-to-noise ratio in classification problems. This formula is adopted since it scales the intra-class covariance matrix $\Sigma_W$ (representing noise) by the pseudoinverse of the inter-class covariance matrix $\Sigma_B$ (representing signal). This scaling ensures that NC1 is expressed in a consistent reference frame across all training epochs. When NC1 approaches zero, it indicates that the activations are collapsing towards their corresponding class means.\\
\textbf{The second Neural Collapse (NC2)} property is associated with the phenomenon where the empirical class means tend to have equal norms and to spread in such a way as to equalize angles between any pair of class means as training progresses. Moreover, as training progresses, these class means tend to maximize their pairwise distances, resulting in a configuration akin to a Simplex ETF.
This property manifests during training through the following conditions:

Here, $\| \cdot \|_2$ represents the L2 norm of a vector, $|\cdot|$ denotes the absolute value, %
$\langle \cdot, \cdot \rangle$ is the inner product,
and $\delta_{\cdot \cdot}$ is the Kronecker delta symbol.
The convergence to the Simplex ETF is assessed through 
two metrics that each verify the following properties: the ``equinormality'' of class/classifier means, along with their ``Maximum equiangularity''.
Equinormality of class means is measured using its variation, as %
follows:
\begin{align} 
    \operatorname{EN_{class-means}} &= \frac{\operatorname{std}_c
 \left\{ 
 \| \mu_c - \mu_G \|_2 
 \right\}  
    }{\operatorname{avg}_c
    \left\{
    \| \mu_c - \mu_G \|_2
    \right\}
    } \label{eq:2.1} ~,
\end{align}
where $\operatorname{std}$ and $\operatorname{avg}$ represent the standard deviation and average operators, respectively. The second property, maximum equiangularity, is verified through:
\begin{equation}\label{eq:2.3}
  \operatorname{Equiangularity_{class-means}} = \mbox{Avg}_{c,c'} \left| \frac{\langle \mu_c -\mu_G , \mu_{c'} -\mu_G \rangle + \frac{1}{C-1}}{\| \mu_c -\mu_G\|_2 \| \mu_{c'} -\mu_G\|_2} 
 \right| 
\end{equation}
As training progresses, if the average of all class means is approaching zero, this indicates that equiangularity is being achieved.
\section{Out of Distribution Neural Collapse}
\label{NC_analysis}
\paragraph{Neural collapse in the presence of OOD data.}
NC has traditionally been studied in the context of ID scenarios. However, recent empirical findings, as demonstrated in \cite{haas2023linking} have shown that NC can also have a positive impact on  OOD detection, especially for Lipschitz 
DNNs \citep{virmaux2018lipschitz}. It has come to our attention that NC can influence OOD behavior as well, leading us to introduce a new property:
\textbf{(NC5) ID/OOD orthogonality:} This property suggests that, as training progresses, each of the vectors representing the empirical ID class mean tends to become orthogonal to the vector representing the empirical OOD data global mean. In mathematical terms, we express this property as follows: 
\begin{equation} \label{eq:NCOOD1}
\forall c,~ \frac{\langle \mu_c  , \mu_G^{\text{OOD}}  \rangle}{\| \mu_c  \|_2 \| \mu_G^{\text{OOD}}  \|_2} \rightarrow 0
\end{equation}
To support this observation, we examined the following metric:
\begin{equation}\label{eq:NCOOD5}
\operatorname{OrthoDev}_{classes-OOD} = \text{Avg}_{c} \left| \frac{\langle \mu_c  , \mu_G^{\text{OOD}}  \rangle}{\| \mu_c  \|_2 \| \mu_G^{\text{OOD}}  \|_2   } \right| 
\end{equation}
This metric assesses the deviation from orthogonality between the ID class means and the OOD mean. 
As training progresses, 
this deviation decreases towards zero if NC5 is satisfied.
To validate this hypothesis, we conducted experiments using CIFAR-10 as the ID dataset and CIFAR-100 alongside SVHN \citep{Netzer2011ReadingDI} as OOD datasets. We employed two different architectures, ResNet-18 \citep{DBLP:journals/corr/HeZRS15} and ViT \citep{DBLP:journals/corr/abs-2010-11929}, and trained each of them for 350 epochs and 6000 steps (with a batch size of 128), respectively. 
During training, we saved network parameters at regular intervals and evaluated the  
metric of
\eqref{eq:NCOOD5} for each saved network.
The results of these experiments on NC5 ( eq. \ref{eq:NCOOD5}) are shown in Figure \ref{fig:NC5_cifar10}, which illustrates the convergence of the OrthoDev. Both of these models were trained using ID data and were later subjected to evaluation in the presence of out-of-distribution (OOD) data. Remarkably, these models exhibited a convergence pattern characterized by a tendency to maximize their orthogonality with OOD data. This observation is substantiated by the consistently low values observed in the orthogonality equation \ref{eq:NCOOD5}. It implies that during the training process, OOD data progressively becomes more orthogonal to ID data during the TPT.
In \ref{NC_sup}, we present additional experiments conducted on different combinations of ID and OOD datasets, and in \ref{ETF_diimension_details} we give further details on NECO. We find this phenomenon intriguing and believe it can serve as the basis for a new OOD detection criterion, which we will introduce in the next Subsection.
\begin{figure}[!ht]
\begin{center}
\includegraphics[width=.49\textwidth]{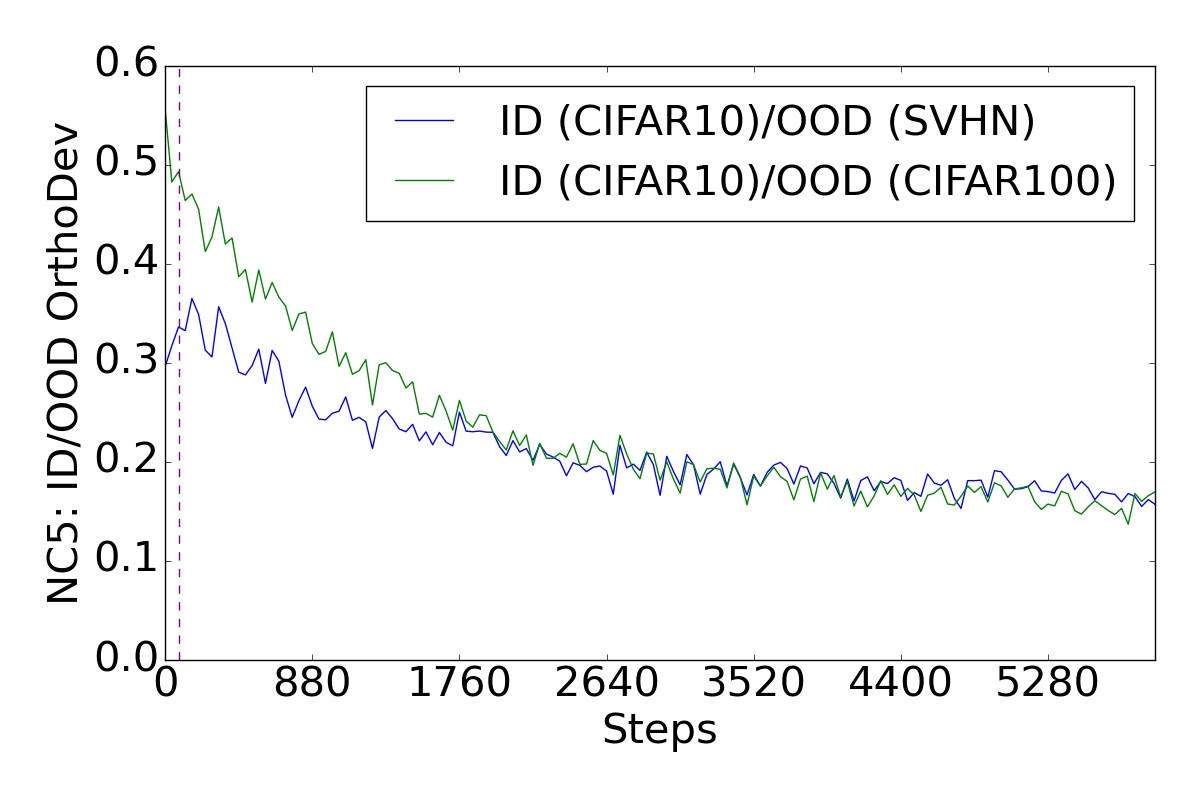}
\includegraphics[width=.49\textwidth]{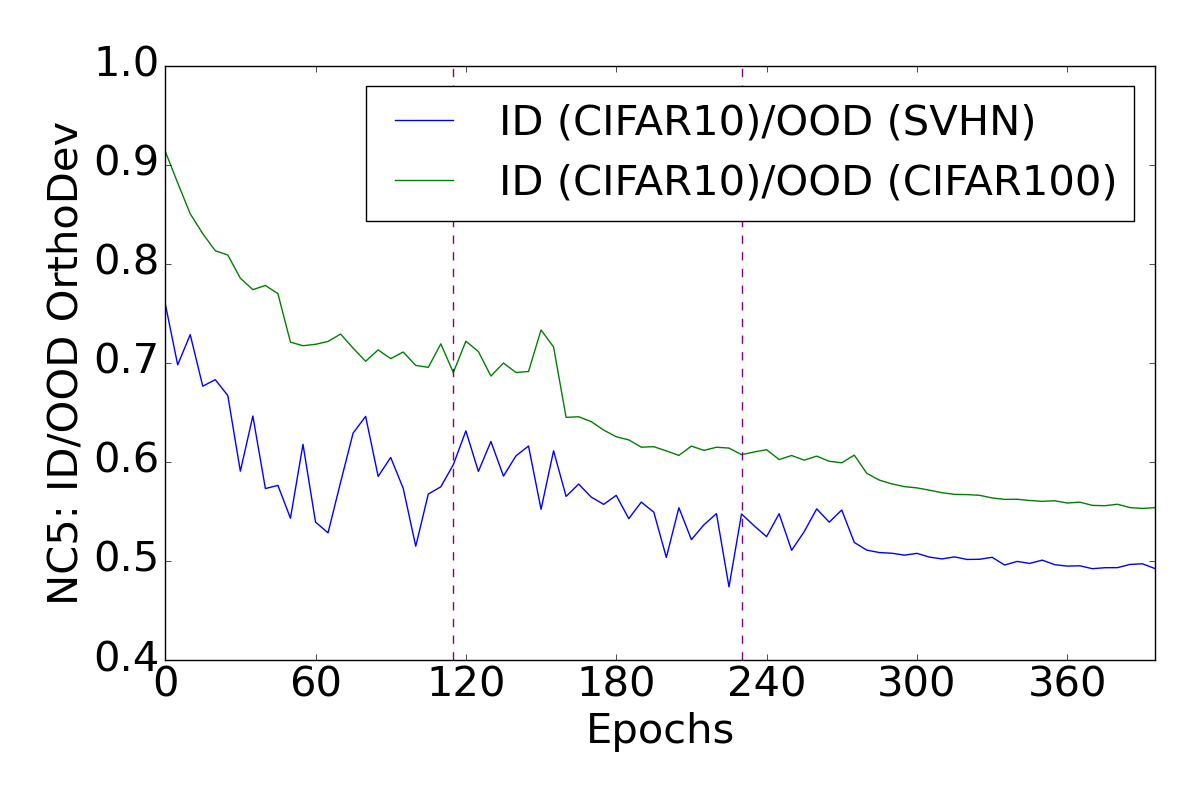}
\end{center}
\vspace{-0.1in}
\caption{Convergence to ID/OOD orthogonality for ViT-B (left), Resnet-18 (right) both trained on CIFAR-10 as ID and tested in the presence of OOD data. Dashed purple lines indicate the end of warm-up steps in the case of ViT and learning rate decay epochs for ResNet-18.  }
\label{fig:NC5_cifar10}
\end{figure} 
\subsection{NEural Collapse based Out of distribution detection (NECO) method}
\label{NECO_presentation}

Based on this observation of orthogonality between ID and OOD samples, previous works \citep{wang2022vim, cook2020outlier} have utilized the null space for performing OOD detection. 
We introduce notations and details of these methods.
Given an image $\vx$ represented by feature vector $h_{\vomega}(x)$, we impose that $f_{\vomega}(\vx)=W \times h_{\vomega}(\vx)$, where $W$ is the matrix of the last fully connected layer. \cite{cook2020outlier, wang2022vim} have highlighted that features $h_{\vomega}(\vx)$ can be decomposed into two components: $h_{\vomega}(\vx) = h_{\vomega}(\vx)^{W} + h_{\vomega}(\vx)^{W^{\perp}}$. In this decomposition, $f_{\vomega}(\vx)=W \times h_{\vomega}(\vx)^{W}$, and importantly, $W \times h_{\vomega}(\vx)^{W^{\perp}} = 0$. Hence, the component $h_{\vomega}(\vx)^{W^{\perp}}$ does not directly impact classification but plays a role in influencing OOD detection.
In light of this observation, NuSA introduced the following score: $\mbox{NuSA}(\vx) = \frac{\sqrt{\| h_{\vomega}(\vx) \| - \| h_{\vomega}(\vx)^{W^{\perp}} \| }}{\| h_{\vomega}(\vx) \|}$, and ViM introduced $\mbox{ViM}(\vx)= \| h_{\vomega}(\vx)^{W^{\perp}} \|$. To identify the null space, NuSA optimizes the decomposition after training. 
On the other hand,
ViM conducts PCA on the latent space $H$ (also after training) and decomposes it
into a principal space, defined by the $d$-dimensional projection with the matrix $P \in M_{d,D}(\mathbb{R})$ spanned by the $d$ eigenvectors corresponding to the largest $d$ eigenvalues of the covariance matrix of $H$, and a null space, obtained by projecting on the remaining eigenvectors.
In contrast, based on the implications of NC5, we propose a novel criterion that circumvents having to find the null space, specifically:
\begin{equation}\label{eq:NECO}
\mbox{NECO}(\vx)= \frac{ \| P h_{\vomega}(\vx) \| }{ \| h_{\vomega}(\vx) \| }=\frac{\sqrt{h_{\vomega}(\vx)^\top P P^\top h_{\vomega}(\vx)}}{\sqrt{h_{\vomega}(\vx)^\top h_{\vomega}(\vx)}}
\end{equation}
Our hypothesis is that if the DNN is subject to the properties of NC1, NC2, and NC5, then it should be possible to separate any subset of ID classes from the OOD data. By projecting the feature into the first $d$ principal components extracted from the ID data, we should obtain a latent space representation that is close to the null vector for OOD data and not null for ID data. Consequently, by taking the norm of this projection and normalizing it with the norm of the data, we can derive an effective OOD detection criterion.
However, in the case of vision transformers, the penultimate layer representation cannot straightforwardly be interpreted as features. Consequently, the resulting norm needs to be calibrated in order to serve as a proper OOD scoring function. To calibrate our NECO score, we multiply it by the penultimate layer biggest logit (MaxLogit). This has the effect of injecting class-based information into the score in addition to the desired scaling. It is worth noting that this scaling is also useful when the penultimate layer size is smaller than the number of classes, since in this case, it is not feasible to obtain maximum OrthoDev between all classes. We refer the reader to \ref{sec:complementary} for empirical observations on the distribution of NECO under the presence of OOD data. 

\subsection{Theoretical justification}
To gain a better understanding of the impact of NECO, we visualize the Principal Component Analysis (PCA) of the pre-logit space in Figure~\ref{fig:pca_id_ood},  with ID data (colored points) and the OOD (black). This analysis is conducted on the ViT model trained on CIFAR-10, focusing on the first two principal components. 
Notably, the ID data points have multiple clusters, one by class, which can be attributed to the influence of NC1, and the OOD data points are positioned close to the null vector. To elucidate our criterion, we introduce an important property:
\begin{theorem}[NC1+NC2+NC5 imply NECO]
\label{NECO_theorem}
We consider two datasets living in $\mathbb{R}^{D}$, $\{ D_{\text{OOD}}, D_{\tau} \}$ and a DNN $f_{\vomega}(\cdot)=(g_{\vomega}\circ h_{\vomega})(\cdot)$ that satisfy NC1, NC2 and NC5. 
There $ \exists ~d \ll D$ for PCA on $D_{\tau}$ s.t.
$\mbox{NECO}(\mu_G ^{\mbox{OOD}}) = 0$. Conversely, for $\vx \in D_{\tau}$ and considering $\vx \neq \vec{0}$ we have that $\mbox{NECO}(\vx) \neq 0$.
\end{theorem}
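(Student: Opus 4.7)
The plan is to take $d$ equal to the dimension of the linear subspace spanned by the ID class means and to argue that, under NC1 and NC2, the PCA projector $P$ computed on $D_\tau$ recovers exactly this subspace. Concretely, NC1 asserts $\Sigma_W \to \mathbf{0}$, so every ID feature $h_{\vomega}(\vx)$ collapses onto its class mean $\mu_{y(\vx)}$; hence the column space of $\vH$ is contained in $V := \mathrm{span}\{\mu_c\}_{c=1}^{C}$, which by NC2 (equinormal, equiangular ETF) has dimension at most $C$ (or $C-1$ for mean-centered PCA) and thus $d := \dim V \ll D$ in standard deep-learning regimes. In the same limit $\Sigma_T = \Sigma_B + \Sigma_W \to \Sigma_B$, whose range is exactly $V$, so the top-$d$ eigenvectors span $V$ and $P$ is the orthogonal projector onto $V$.

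For the first conclusion I would invoke NC5 directly: $\langle \mu_c, \mu_G^{\mathrm{OOD}} \rangle = 0$ for every class $c$, so by linearity $\mu_G^{\mathrm{OOD}}$ is orthogonal to every element of $V$, and hence $P \mu_G^{\mathrm{OOD}} = \vec{0}$. Substituting into the definition of NECO makes the numerator vanish while the denominator $\|\mu_G^{\mathrm{OOD}}\|$ is generically nonzero, so $\mbox{NECO}(\mu_G^{\mathrm{OOD}}) = 0$. For the converse direction, any $\vx \in D_\tau$ with $h_{\vomega}(\vx) \neq \vec{0}$ satisfies $h_{\vomega}(\vx) = \mu_{y(\vx)} \in V$ by NC1, so $P h_{\vomega}(\vx) = h_{\vomega}(\vx)$ and $\mbox{NECO}(\vx) = 1 \neq 0$.

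The main delicate point is matching the PCA subspace to $V$ rigorously, since NC1 is a limiting statement rather than an exact equality. One either states the theorem in the ideal regime where $\Sigma_W = 0$ and NC1, NC2, NC5 hold exactly, or supplements the argument with a Davis--Kahan-type perturbation bound showing that the top-$d$ eigenvectors of $\Sigma_T$ are $O(\|\Sigma_W\|/\mathrm{gap})$-close to those of $\Sigma_B$, which in turn controls $\|P h_{\vomega}(\vx) - h_{\vomega}(\vx)\|$ and $\|P \mu_G^{\mathrm{OOD}}\|$. A secondary choice is whether the PCA is centered; with centered PCA one replaces $V$ by $\mathrm{span}\{\mu_c - \mu_G\}$ and uses the consequence $\mu_G^{\mathrm{OOD}} \perp \mu_G$ of NC5 (obtained by averaging the inner products over $c$) to preserve the orthogonality argument. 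Neither modification affects the qualitative conclusion.
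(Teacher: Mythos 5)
Your argument is correct in the idealized (exact-collapse) regime and reaches the same destination as the paper, but by a cleaner, more direct route. The paper proceeds by contradiction: it first proves an ``orthogonality conservation'' lemma stating that PCA projection preserves orthogonality of two vectors (its justification invokes $PP^{\top}=I_D$, which only holds when $d=D$, so the lemma as proved is shaky for a genuinely truncated PCA), then argues that under NC1+NC2 each class concentrates along $\mu_c$, that the $d=C$ principal subspace is spanned by the projected class means, and finally combines the lemma with NC5 to force $\mbox{NECO}(\mu_G^{\mbox{OOD}})=0$, contradicting the assumed negation. You instead identify the principal subspace spectrally --- NC1 gives $\Sigma_T\to\Sigma_B$, whose range is $V=\mathrm{span}\{\mu_c\}$, so $P$ becomes the orthogonal projector onto $V$ --- and then conclude directly: NC5 puts $\mu_G^{\mbox{OOD}}\perp V$, hence $P\mu_G^{\mbox{OOD}}=\vec{0}$, while collapsed ID features lie in $V$ and are fixed by $P$, giving $\mbox{NECO}(\vx)=1$. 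This buys you three things the paper does not make explicit: you never need the orthogonality-conservation lemma (you only project one vector onto a subspace it is orthogonal to, which is unproblematic); you acknowledge that NC1/NC2/NC5 are limiting statements and sketch how a Davis--Kahan bound would turn the idealized claim into a perturbative one; and you handle the centered-PCA variant by deriving $\mu_G^{\mbox{OOD}}\perp\mu_G$ from NC5. One small caveat: your claim that the ID score equals exactly $1$ holds only in the exact-collapse limit; off the limit you should phrase it, as the theorem does, merely as $\mbox{NECO}(\vx)\neq 0$ (or bounded away from $0$ via the perturbation argument you mention).
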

The proof for this theorem can be found in \ref{theoretical}
\begin{figure}[ht]
\begin{center}
\includegraphics[width=0.32\linewidth]{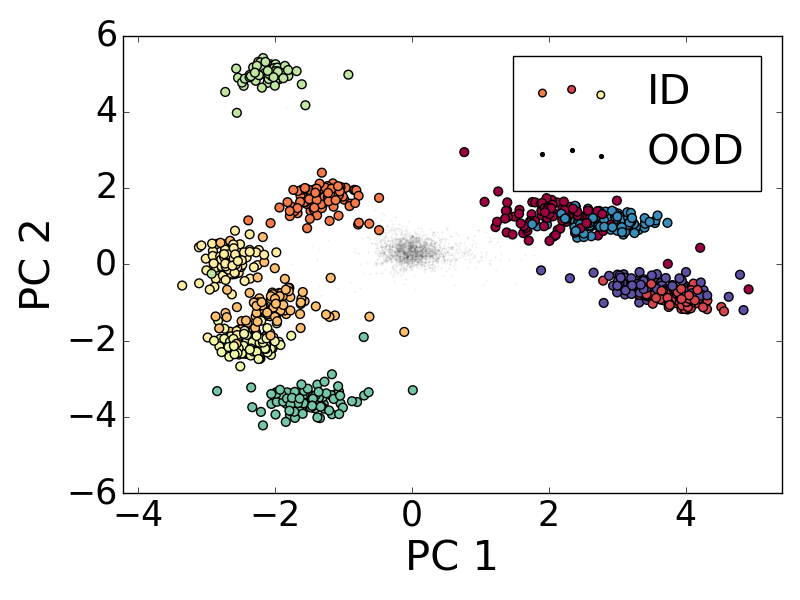}
\includegraphics[width=0.32\linewidth]{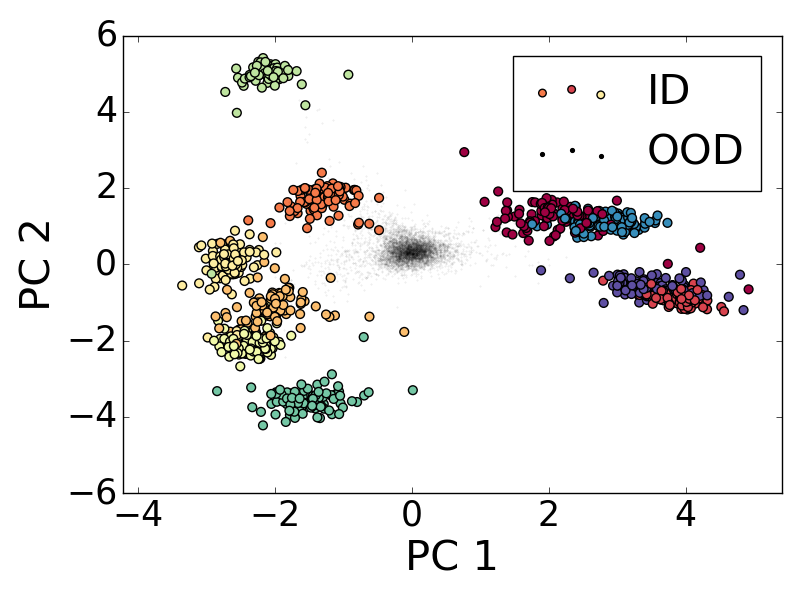}
\includegraphics[width=0.32\linewidth]{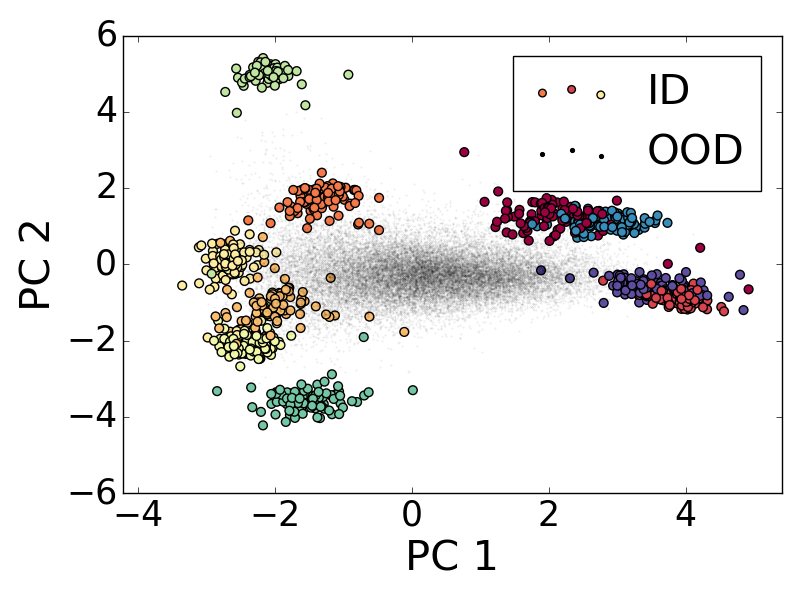}
\end{center}
\vspace{-0.1in}
\caption{Feature projections on the first 2 principal components of a PCA fitted on CIFAR-10 (ID) using ViT penultimate layer representation. OOD  data are ImageNet-O (left), Textures (middle), and SVHN (right). The Figure~shows how NC1 (\ref{NC_equation1}) property is satisfied by ID data, and that OOD data lie around the origin.}
\label{fig:pca_id_ood}
\end{figure}
\begin{remark}
We have established that $\mbox{NECO}(\mu_G ^{\mbox{OOD}}) = 0$, which does not imply that $\mbox{NECO}( \vx) = 0$ for all $\vx \in D_{\text{OOD}}$. 
However, in addition to NC5, we can put forth the hypothesis that NC2 also occurs on the mix of ID/OOD data, while NC1 doesn't occur on the OOD data by themselves. resulting in an OOD data cluster that is equiangular and orthogonal to the Simplex ETF structure. We refer the reader to Section \ref{NC_sup} for justification. Furthermore, the observations made in Figure~\ref{fig:pca_id_ood} appear to support our hypothesis.
\end{remark}
\section{Experiments \& Results\label{sec:xp_results}}
In this section, we compare our method with state-of-the-art algorithms, on 
small-scale and large-scale OOD detection benchmarks. Following prior work, we consider ImageNet-1K, CIFAR-10, and CIFAR-100  as the ID datasets. We use both Transformer-based and CNN-based models to benchmark our method. Detailed experimental settings are as follows.
\paragraph{OOD Datasets.}
For experiments involving ImageNet-1K as the inliers dataset (ID), we assess the model's performance on five OOD benchmark datasets: Textures \citep{Cimpoi_2014_CVPR}, Places365 \citep{DBLP:journals/corr/ZhouKLTO16}, iNaturalist \citep{DBLP:journals/corr/HornASSAPB17}, a subset of 10\,000 images sourced from \citep{Huang_2021_CVPR}, ImageNet-O \citep{Hendrycks_2021_CVPR} and SUN \citep{Xiao2010SUNDL}. 
For experiments where CIFAR-10 (resp. CIFAR-100) serves as the ID dataset, we employ CIFAR-100 (resp. CIFAR-10)
alongside the SVHN dataset \citep{Netzer2011ReadingDI} as OOD datasets in these experiments. The standard dataset splits, featuring 50\,000 training images and 10\,000 test images, are used in these evaluations. 
Further details are provided in \ref{ood_datasets}. We refer the reader to \ref{sec:complementary} for additional testing results on the OpenOOD benchmark \citep{zhang2023openood}.
\paragraph{Evaluation Metrics.}
We present our results using two widely adopted OOD metrics \citep{yang2021generalized}. Firstly, we consider 
\fpr, the false positive rate when the true positive rate (TPR) reaches $95\%$, with smaller values indicating superior performance. Secondly, we use the \auc\ metric, which is threshold-free and calculates the area under the receiver operating characteristic curve (TPR). A higher value here indicates better performance. Both metrics are reported as percentages.
\paragraph{Experiment Details.} 
We evaluate our method on a variety of neural network architectures, including Transformers and CNNs. ViT (Vision Transformer) \citep{DBLP:journals/corr/abs-2010-11929} is a Transformer-based image classification model which treats images as sequences of patches. We take the official pretrained weights on ImageNet-21K \citep{DBLP:journals/corr/abs-2010-11929} and fine-tune them on ImageNet-1K, CIFAR-10, and CIFAR-100.
Swin \citep{DBLP:journals/corr/abs-2111-09883} is also a transformer-based classification model. We use the officially released SwinV2-B/16 model, which is pre-trained on ImageNet-21K and fine-tuned on ImageNet-1K. From the realm of CNN-based models, we use ResNet-18 \citep{DBLP:journals/corr/HeZRS15}. When estimating the simplex ETF space, the entire training set is used. Further details pertaining to model training are provided in \ref{fine-tuning}. As a complementary experiment, we also assess our approach on DeiT \cite{touvron2021training} in \ref{sec:complementary}
\paragraph{Baseline Methods.}
In our evaluation, we compared our method against twelve prominent post-hoc baseline approaches, which we list and provide ample details pertaining to the implementation and associated hyper-parameters in \ref{baselines}.
\begin{table}[htbp]
\begin{center}
\resizebox{0.9\textwidth}{!}{%
\begin{tabular}{cccccccc}\hline
Model & Method 
&\begin{tabular}{@{}c@{}} \textbf{ImageNet-O}\\AUROC$\uparrow$ FPR$\downarrow$\end{tabular} 
&\begin{tabular}{@{}c@{}} \textbf{Textures}\\AUROC$\uparrow$ FPR$\downarrow$\end{tabular} 
&\begin{tabular}{@{}c@{}} \textbf{iNaturalist}\\AUROC$\uparrow$ FPR$\downarrow$\end{tabular} 
&\begin{tabular}{@{}c@{}} \textbf{SUN}\\AUROC$\uparrow$ FPR$\downarrow$\end{tabular} 
&\begin{tabular}{@{}c@{}} \textbf{Places365}\\AUROC$\uparrow$ FPR$\downarrow$\end{tabular}
&\begin{tabular}{@{}c@{}} \textbf{Average}\\AUROC$\uparrow$ FPR$\downarrow$\end{tabular}\\
 \hline
 \multirow{2}{3em}{ViT-B/16} & Softmax score & 85.31 52.65 & 86.64 49.40  &97.21 12.14 &86.64 53.75   & 85.52 56.93 &88.26 44,97   \\
 & MaxLogit  & 92.23 36.40 &  91.69 36.90 &98.87 5.72 & 92.15 39.88  &  90.05 46.75 &92.99 33.13  \\
  & Energy  & 93.03 31.65 &\underline{92.13} \underline{34.15}  & 99.04 4.76 & 92.65  36.90& \underline{90.38}   43.72& \underline{93.45 } 30.24 \\
   & Energy+ReAct  &  93.08  31.25&  \underline{92.08} \underline{34.50} & 99.02 4.91 &  92.56 36.94 & \underline{90.19} 44.25 &93.39 30.37 \\ 
      & ViM  & \underline{94.01} \underline{28.95 } &  91.63  38.22 &  \underline{99.59 } \underline{2.00} &   \underline{92.73} 34.00 &  89.67    44.12& \underline{93.56}  \underline{29.46}  \\
       & Residual  &  92.10 38.35&  87.71 52.34 & \underline{99.36}  \underline{ 2.79} &  89.46 40.91 &  85.60 53.93& 90.85 37.66   \\
        & GradNorm  &   84.70 34.85&    86.29 35.76& 97.45 6.17 & 86.67 39.49&  83.59 46.94& 87.74 32.64  \\
      & Mahalanobis  & \underline{94.00}   30.90 &   91.69 37.93& \textbf{ 99.67} \textbf{ 1.55}  &  91.38 38.48 &  88.66 46.09& 93.08 30.99 \\
       & KL-Matching  &  82.90 53.40&    84.61 52.38&  95.07 15.31 & 83.25 62.10&82.00 65.99&85.57 49.84  \\
             & ASH-B   &  69.28 85.25 &66.05 83.29&72.62 80.62 &59.36  91.53 & 55.08 92.68&  64.48  86.67\\  
         & ASH-P   &  93.31 \underline{29.00} & 91.55 37.58& 98.75 6.25 &\underline{92.96}  \underline{33.50}  &  90.07 \underline{41.81}  &  93.33 \underline{29.63}\\      
         & ASH-S   & 92.85 29.45& 91.00 39.73& 98.50 7.28 &92.61 \textbf{33.12} &  89.55 \textbf{41.68}&  92.90  30.25\\
         &NuSA &  92.48 34.30 &   88.28 51.24& 99.30 3.12   &  89.26 40.08  &  85.28 54.51 &  90.92 36.65  \\
       &  \textbf{NECO  (ours) } &\textbf{94.53} \textbf{25.20}   & 
  \textbf{92.86} \textbf{32.44}& 99.34 3.26&   \textbf{93.15} \underline{33.98} &\textbf{ 90.38} \underline{42.66} & \textbf{94.05}  \textbf{27.51}   \\  \hline   
 \multirow{2}{3em}{SwinV2} & Softmax score & 61.09 89.60 &  81.72 60.91 & \underline{88.59} \underline{47.66}  &\underline{81.24}   \underline{66.85} &  \underline{81.09} \underline{ 68.27} &   78.75 \underline{66.66 }  \\
 & MaxLogit  & 61.34 87.95 & 80.36   \underline{59.55}  & 86.47 \underline{50.51} &  78.17 \underline{68.50} &  77.43  \underline{69.41}& 76.75 67.18 \\
  & Energy  & 61.32 85.75 &  77.91 64.44 & 81.85 63.44 & 73.80 76.89  & 73.09 76.68&73.59 73.44 \\
   & Energy+ReAct  &\underline{68.38} 83.85  & \textbf{84.56} \underline{59.86}  & 90.23 51.97& \textbf{82.61} 69.27  &  \underline{81.41}  70.19& \textbf{81.44}   \underline{67.03} \\
      & ViM  & \underline{69.06}  \underline{83.45} &   81.50  61.18  &  87.54  54.09  &  75.24 73.92    &    73.12 76.46 & 77.29 69.82  \\
       & Residual & 66.82 \underline{83.80} &  77.36 65.00 &  83.23 59.77 &  71.03 76.41 &   68.90 78.53&  73.47 72.70 \\
        & GradNorm  &  37.39 95.95 &  33.84 93.31 &31.82 95.01 &  31.97 96.29  & 33.06 95.73&33.62 95.26  \\
      & Mahalanobis  & \textbf{71.87} 86.05 & \underline{84.51} 63.35  & \underline{ 89.81} 57.10 &  80.28 75.39  & 78.52 77.10 & \underline{80.99} 71.80  \\
      & KL-Matching  & 58.60 87.50 &  75.30 71.69  & 82.93 58.29  & 73.72   76.53& 72.11   78.91 & 72.53 74.58  \\
         & ASH-B   & 47.07 96.35 & 38.59 97.50 &48.62 97.55  &   52.11 95.64 &   52.93 96.18 & 47.86 96.64  \\
            & ASH-P   & 37.38 97.95  & 26.51 98.90 & 20.73 99.28 &24.49 99.08 &  26.12 98.93&  27.05  98.83\\
               & ASH-S   &  40.36 95.10 & 36.08 94.63 &  16.15 99.50  &22.21 98.53   &  23.72 98.05 &  27.70  97.16\\
     & NuSA &  56.50 91.95  &  62.72 83.14    &   64.01 83.58   &   55.97 91.28  &  54.44 92.71  &   58.73 88.53 \\
       &  \textbf{NECO  (ours) }& 65.03 \textbf{80.55} & \underline{82.27} \textbf{54.67}  & \textbf{91.89} \textbf{34.41}  & \underline{82.13 }  \textbf{62.26}  & \textbf{81.46} \textbf{64.08}&  \underline{80.56} \textbf{59.19}  \\  \hline   
\end{tabular}}
\end{center}
\caption{OOD detection for NECO and baseline methods. The ID dataset is ImageNet-1K, and OOD datasets are  Textures, ImageNet-O, iNaturalist, SUN, and Places365. Both metrics \auc  and \fpr. are in percentage. A pre-trained ViT-B/16 model and SwinV2 are tested. The best method is emphasized in bold, and the 2nd and 3rd ones are underlined.\label{tab:tabel1}}
\vspace{-0.1\in}
\end{table}
\paragraph{Result on ImageNet-1K.}
In Table \ref{tab:tabel1} we present the results for the ViT model in the first half. The best \auc\ values are highlighted in bold, while the second and third best results are underlined.
Our method demonstrates superior performance compared to the baseline methods. Across four datasets, we achieve the highest average \auc\ and the lowest \fpr. Specifically, we attain a \fpr\ of $27.51\%$, surpassing the second-place method, ViM, by $1.95\%$. The only dataset where we fall slightly behind is iNaturalist, with our \auc\ being only $0.33\%$ lower than the best-performing approach.
In Table \ref{tab:tabel1}, we provide the results for the SwinV2 model in the second half. Notably, our method consistently outperforms all other methods in terms of \fpr\ on all datasets. On average, we achieve the third-best \auc\ and significantly reduce the \fpr\ by $7.47\%$ compared to the second-best performing method, Softmax score.
We compare NECO with MaxLogit \citep{Hendrycks2022ScalingOD} to illustrate the direct advantages of our scoring function. On average, we achieve a \fpr\ reduction of $5.62\%$ for ViT and $7.99\%$ for Swin when transitioning from MaxLogit to NECO multiplied by the  MaxLogit.
This performance enhancement clearly underscores the value of incorporating the NC concept into the OOD detection framework. 
Moreover, NECO is straightforward to implement in practice, requiring simple post-hoc linear transformations and weight masking.
\paragraph{Results on CIFAR.}
Table~\ref{tab:CIFAR10_CIFAR100ID_perf} presents the results for the ViT model and  Resnet-18 both on  CIFAR-10 and CIFAR-100 as ID dataset, tested against different OOD datasets.
On the first half of the table, we show the results using a ViT model. On the majority of the OOD datasets cases we outperform the baselines both in terms of \fpr and \auc. Only ASH outperforms NECO on the use case CIFAR-100 vs SVHN. On average we surpass all the baseline on both test sets.
In the second half, we show the results using a ResNet-18  model. Similarly to ViT results, on average we surpass the baseline strategies by at least 1.28\% for the \auc on the CIFAR-10  cases and lowered the best baseline performance by 8.67\% in terms of \fpr, on the CIFAR-100 cases.
On average, our approach outperforms baseline methods in terms of \auc. 
However, we notice that our method performs slightly worse on the CIFAR-100-ID/CIFAR-10-OOD task.
\begin{table}[!t]
\centering
\resizebox{0.9\textwidth}{!}{%
\begin{tabular}{lcccc|ccccc}
\hline
& &\begin{tabular}{@{}c@{}} \textbf{CIFAR-100}\\AUROC$\uparrow$ FPR$\downarrow$\end{tabular} 
&\begin{tabular}{@{}c@{}} \textbf{SVHN}\\AUROC$\uparrow$ FPR$\downarrow$\end{tabular} 
& \begin{tabular}{@{}c@{}} \textbf{Average}\\AUROC$\uparrow$ FPR$\downarrow$\end{tabular} 
&\begin{tabular}{@{}c@{}} \textbf{CIFAR-10}\\AUROC$\uparrow$ FPR$\downarrow$\end{tabular} 
&\begin{tabular}{@{}c@{}} \textbf{SVHN}\\AUROC$\uparrow$ FPR$\downarrow$\end{tabular} 
& \begin{tabular}{@{}c@{}} \textbf{Average}\\AUROC$\uparrow$ FPR$\downarrow$\end{tabular} \\
\hline
\multirow{3}{3em}{ViT}&  Softmax score   &98.36 7.40 &  99.64  0.59  & 99.00 3.99 &  92.13 34.74 &  91.29 39.56      &   91.71 37.15 \\ 
& MaxLogit   & 98.60 5.94 & 99.90 0.23   &99.25 3.09 & 92.81 26.73 & 96.20 18.68    & 94.51 22.71\\ 
& Energy   & 98.63 5.93 & \underline{99.92}   \underline{0.21}    &  \underline{99.28} \underline{ 3.07} & 92.68 26.37& \underline{96.68}  15.72   &  94.68  \underline{21.05} \\  
& Energy+ReAct   &98.68   \underline{5.92}& \underline{99.92}  \underline{0.23}& \underline{99.30}   3.08  &93.43 \underline{25.93 
} &  96.65  15.88 & \underline{95.04 }  \underline{20.91} \\  
& ViM   &  \underline{98.81} \underline{4.86} & 99.50 0.82    & 99.16 \underline{2.84} &\underline{94.78} 26.10  &    95.55 26.33      & \underline{95.17 } 26.22 \\
& Residual   &  98.49 7.09&  96.95 12.64    & 97.72 9.87 & 94.63 28.42& 92.43 44.65     & 93.53 36.54  \\ 
& GradNorm   &95.82 10.92 &99.85 0.48    & 97.84  5.7 &92.10 26.20 &  94.85 16.69   & 93.48 21.45  \\ 
& Mahalanobis   & \underline{98.73} 5.95 &  95.52 18.61    & 97.13 12.28 &\textbf{95.42} \underline{24.02}  &  93.91 37.11    & 94.67 30.57 \\
& KL-Matching   & 83.48 20.65& 95.07  6.09   & 89.28 13.37 & 79.47 38.34 &  80.19 41.73    & 79.83  40.04\\
 & ASH-B   &  
95.15 17.49 & 99.09 4.81 & 97.12 11.15& 78.07 53.01 &   82.99 53.97&  80.53  53.49\\
  & ASH-P   &  97.89 7.22 &  99.90 0.26 & 98.90 3.74 &85.12 29.82 & \textbf{ 97.04} \textbf{14.06}&  91.08  21.94\\
& ASH-S  &  97.56 7.73 & 99.89 0.32& 98.73 4.03&83.30 30.96 & \underline{97.04}  \underline{14.55}&  90.17  22.56\\
&  NuSA  & 98.56 6.79  &   99.60 1.03  & 99.08 3.91 &  94.50 28.77 & 94.27 35.91  & 94.39  32.34 \\ 
&  \textbf{NECO  (ours) } &\textbf{98.95} \textbf{4.81 } &\textbf{99.93} \textbf{0.12}         & \textbf{99.44} \textbf{2.47} & \underline{95.17}  \textbf{23.39}& 96.65  \underline{15.47}    &  \textbf{95.41} \textbf{19.43} \\ 
 \midrule 
\multirow{3}{5em}{ResNet-18}& Softmax score   &  85.07 67.41&  92.14 46.77 & 88.61 57.09 &75.35   \underline{ 83.09 }  &77.30 85.61   & 76.33 84.35  \\ 
& MaxLogit   & 85.36  \underline{59.19}  &  93.73 31.52& 89.55  45.36& \underline{75.44} 83.12&  77.24 87.59& 76.34  85.36 \\ 
& Energy   &   \underline{85.46} \underline{58.68} &   93.96 29.71 & 89.71  44.20 &  \underline{75.25 } 83.76 & 76.40 90.77& 75.83 87.27   \\ 
& Energy+ReAct   &84.22 60.04 & 92.31 35.17 &  88.27 47.61 &  70.00  85.99  &74.14 91.05 &  72.07 88.52 \\ 
& ViM   &   85.11  63.76&   \underline{94.87}  27.35 &  \underline{89.99} 45.56  &  67.61  90.08  &   \underline{86.13}  \underline{67.31}& \underline{76.87} \underline{78.70} \\ 
& Residual   &  76.06 76.53 &  90.21 48.18  &  83.14  62.36 &   45.99 96.19  & 72.52 86.60 & 59.23 91.40\\ 
& GradNorm   & 60.83 69.98 &78.28 42.83 &69.56 56.41 &  72.62 84.64 &  69.66 91.58 & 59.55 88.11 \\ 
& Mahalanobis   & 81.23 72.35 &  90.39 54.51 & 85.81 63.43 &  55.85 95.41  & 79.96 81.63& 67.91 88.52   \\ 
& KL-Matching   & 77.83 68.12& 86.73 49.47& 82.28 58.80 & 73.52 \underline{83.05}  &  77.32  74.82&  67.91  78.94\\
& ASH-B   & 73.89 64.45 &  83.79 44.77  & 78.84 54.61 &   74.10  84.00   &  78.14 82.02 &  76.12  83.01\\
& ASH-P   & 83.96 59.90& 94.77 \underline{26.04}  & 89.37 \underline{42.97}  &  \textbf{75.46} \textbf{82.98}  &  77.28 89.43  &  76.37  86.21\\
& ASH-S   &  82.48 62.38& 91.54 39.94& 87.01 51.16
&72.17 85.20 & 80.18 74.39&  76.18  80.00\\
& NuSA &\underline{85.65} 59.52 &  \underline{95.77}   \underline{21.82}   & \underline{90.71} \underline{40.67}  &  70.32 86.36   &   \underline{88.44} \underline{57.58}& \underline{79.38}  \underline{71.97} \\ 
&  \textbf{NECO  (ours) } &  \textbf{86.61}   \textbf{57.81} &\textbf{95.92}   \textbf{20.43}& \textbf{91.27} \textbf{39.12}   &  70.26 85.70 & \textbf{88.57} \textbf{54.35}  & \textbf{79.42} \textbf{70.03}  \\  \hline  
\end{tabular} }
 \caption{ \label{tab:CIFAR10_CIFAR100ID_perf}OOD detection for NECO vs baseline methods. The ID dataset are CIFAR-10/CIFAR-100, and OOD datasets are  CIFAR-100/CIFAR-10 alongside SVHN. Both metrics \auc and \fpr are in percentage. The best method is emphasized in bold, and the 2nd and 3rd ones are underlined.}
\end{table}
\section{Conclusion}
This paper introduces a novel OOD detection method that capitalizes on the Neural Collapse (NC) properties inherent in DNNs. Our empirical findings demonstrate that when combined with over-parameterized DNNs, our post-hoc approach, NECO, achieves state-of-the-art OOD detection results, surpassing the performance of most recent methods on standard OOD benchmark datasets. 
While many existing approaches focus on modeling the noise by either clipping it or considering its norm, NECO takes a different approach by leveraging the prevalent ID information in the DNN, which will only be enhanced with model improvements. The latent space contains valuable information for identifying outliers, and NECO incorporates an orthogonal decomposition method that preserves the equiangularity properties associated with NC.\\
We have introduced a novel NC property (NC5) that characterizes both ID and OOD data behavior. Through our experiments, we have shed light on the substantial impact of NC on the OOD detection abilities of DNNs, especially when dealing with over-parameterized models.
We observed that NC properties, including NC5, NC1, and NC2, tend to converge towards zero as expected when the network exhibits over-parameterization. 
This empirical observation provides insights into why our approach performs exceptionally on a variety of models and on a broad range of OOD datasets (we refer the reader to the complementary results in \ref{sec:complementary}), hence demonstrating the robustness of NECO against OOD data.
While our approach excels with Transformers and CNNs, especially in the case of over-parametrized models, we observed certain limitations when applying it to DeiT (see the results in \ref{sec:complementary}). These limitations may be attributed to the distinctive training process of DeiT (\ie, including a distillation strategy) which necessitates a specific setup that we did not account for in order to prevent introducing bias to the NC phenomenon. 
We hope that this work has shed some light on the interactions between NC and OOD detection. Finally, based on our results and observations, this work raises new research questions on the training strategy of DNNs that lead to NC in favor of OOD detection.
\clearpage

\bibliography{iclr2024_conference}
\bibliographystyle{iclr2024_conference}
\clearpage

\appendix

\renewcommand\thefigure{\thesection.\arabic{figure}}  
\renewcommand\thetable{\thesection.\arabic{table}}

\startcontents
\renewcommand\contentsname{Table of Contents - Supplementary Material}
{
\hypersetup{linkcolor=black}
\printcontents{ }{1}{\section*{\contentsname}}{}
}

\clearpage

\section{Theoretical justification}\label{theoretical}
Before proceeding with the proof of Theorem \ref{NECO_theorem}, we first introduce some notation and a useful lemma for the subsequent proof.

Given a vector $\vz_{1} \in \mathbb{R}^{D}$ we define $\vz_{1}^{PCA}$ as the projection of $\vz_1$ onto a subspace obtained by applying PCA to an external dataset $D$ of same dimensionality as $\vz_{1} $. This subspace has dimensions limited to $d \leq D$.

\begin{lemma}[Orthogonality conservation]
We consider a single dataset, $D_1=\{ \vh_{i} \}_{i=1}^n \in \mathbb{R}^D $, along with two vectors, $\{ \vz_1, \vz_2\} \in \mathbb{R}^D$, that are not necessarily part of $D_1$ with the condition that $\langle \vz_1 , \vz_2\rangle = 0$. Considering a PCA on $D_{1}$, it follows that $ \langle\vz_1^{PCA} , \vz_2^{PCA} \rangle=0$.
\end{lemma}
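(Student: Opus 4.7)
The plan is to exploit the orthonormal basis structure produced by PCA. PCA on $D_1$ yields orthonormal eigenvectors $\{u_1,\ldots,u_D\}$ of the sample covariance of $D_1$, and the projection onto the $d$-dimensional principal subspace is $P=\sum_{k=1}^{d}u_k u_k^{\top}$, which is symmetric and idempotent. I would interpret $\vz_i^{PCA}=P\vz_i$ and compute
\begin{equation*}
\langle \vz_1^{PCA},\vz_2^{PCA}\rangle \;=\; \vz_1^{\top}P^{\top}P\,\vz_2 \;=\; \vz_1^{\top}P\,\vz_2 \;=\; \sum_{k=1}^{d}\langle u_k,\vz_1\rangle\,\langle u_k,\vz_2\rangle,
\end{equation*}
to be compared with the hypothesis $\langle \vz_1,\vz_2\rangle=\sum_{k=1}^{D}\langle u_k,\vz_1\rangle\,\langle u_k,\vz_2\rangle=0$.

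The first key step is the observation that, viewed as a change of basis on the whole of $\mathbb{R}^{D}$, the map $\vz\mapsto(\langle u_k,\vz\rangle)_{k=1}^{D}$ is an isometry, so inner products (and hence orthogonality) are preserved in full PCA coordinates. The remaining task is therefore to control what happens when we truncate to the first $d$ coordinates, i.e.\ to show that the ``tail'' sum $\sum_{k=d+1}^{D}\langle u_k,\vz_1\rangle\,\langle u_k,\vz_2\rangle$ vanishes. The plan is to note that, by construction, the eigenvectors corresponding to nonzero eigenvalues of the sample covariance span exactly the (centered) range of $D_1$, so one can choose $d$ to equal the rank of the data, making the principal subspace coincide with $\mathrm{span}(D_1)$ and the residual subspace with its orthogonal complement $\mathrm{span}(D_1)^{\perp}$.

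The main obstacle is that without some extra structural relation between $\{\vz_1,\vz_2\}$ and $D_1$, the truncated sum need not vanish for arbitrary orthogonal pairs, so the lemma cannot be a purely formal consequence of $\langle\vz_1,\vz_2\rangle=0$ alone. I would therefore close the argument by invoking the dichotomy that applies in every use of this lemma in the paper: one of $\vz_1,\vz_2$ lies in $\mathrm{span}(D_1)$ (hence has zero tail coordinates and satisfies $P\vz=\vz$), while the other lies in $\mathrm{span}(D_1)^{\perp}$ (hence $P\vz=\vec 0$). In that case either $P\vz_1=\vz_1,\,P\vz_2=\vec 0$ or the symmetric statement holds, and the projected inner product collapses to $0$ immediately. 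This is exactly the situation that is invoked to prove Theorem~\ref{NECO_theorem}: NC1 and NC2 force the ID class means to lie inside the principal subspace of $\vH$, while NC5 forces $\mu_G^{\mathrm{OOD}}$ to be orthogonal to every class mean, and therefore to their span, so the lemma applies with $\vz_1=\mu_c$ and $\vz_2=\mu_G^{\mathrm{OOD}}$.
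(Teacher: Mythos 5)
Your argument is correct, but it takes a genuinely different route from the paper, and the difference is substantive. The paper's own proof is a two-line computation: with $P\in\mathbb{M}_{D,d}(\mathbb{R})$ the PCA matrix it writes $(\vz_1^{PCA})^{\top}\vz_2^{PCA}=\vz_1^{\top}PP^{\top}\vz_2$ and then asserts both $P^{\top}P=I_d$ and $PP^{\top}=I_D$, so that orthogonality is carried over verbatim; the second identity holds only when $d=D$, since for $d<D$ the matrix $PP^{\top}$ is the rank-$d$ orthogonal projector onto the principal subspace --- which is exactly the obstruction you flag (a counterexample with $D=2$, $d=1$, $u_1=e_1$, $\vz_1=(1,1)$, $\vz_2=(1,-1)$ shows the statement fails for arbitrary orthogonal pairs under truncation). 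In effect the paper proves the lemma for PCA viewed as a full orthogonal change of basis, whereas you concede that truncation kills the purely formal argument and close it instead with the structural dichotomy: one vector inside the principal subspace (so $P\vz=\vz$), the other in $\mathrm{span}(D_1)^{\perp}$ (so $P\vz=\vec{0}$), making the projected inner product vanish trivially. That is precisely the configuration invoked in the proof of Theorem~\ref{NECO_theorem}, where NC1+NC2 place the class means inside the $d=C$ principal subspace and NC5 places $\mu_G^{\text{OOD}}$ orthogonal to their span. What each buys: the paper's version is shorter and stays within the lemma's stated hypotheses, but is only sound without dimensionality reduction; yours is the statement actually strong enough for NECO's truncated projection, at the cost of importing hypotheses absent from the lemma but supplied by NC1, NC2 and NC5 at the point of use. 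One point to tighten in your write-up: the identification of the principal subspace with $\mathrm{span}(D_1)$ requires the nonzero-eigenvalue eigenvectors, which span the \emph{centered} data, so the step $P\vz=\vec{0}$ for $\vz\in\mathrm{span}(D_1)^{\perp}$ needs $d$ at most the rank of the centered data matrix and an explicit remark on centering, which your parenthetical ``(centered) range'' gestures at but does not state.
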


\begin{proof}
If   $ \vz_1  \perp  \vz_2$, then we have $ \vz_1^{\top} \vz_2=0$.
Let us define $P \in  \mathbb{M}_{D,d}(\mathbb{R})$ the projection matrix of the  $PCA$.\\
By definition $P$ is an orthogonal matrix, hence $P^{\top} P = I_d$ and $P P^{\top} = I_D$, where $I_D$ and $I_d$ are the identity matrices of $\mathbb{M}_{D\times D}(\mathbb{R})$ and $\mathbb{M}_{d\times d}(\mathbb{R})$, respectively. 
We have 
$(\vz_1^{PCA})^{\top} \vz_2^{PCA}= (P^{\top} \vz_1)^{\top}(P^{\top} \vz_2) = \vz_1^{\top} P P^{\top} \vz_2 = \vz_1^{\top} I_D \vz_2= \vz_1^{\top} \vz_2 = 0, \text{ thus proving the orthogonality of PCA projections.}$
\end{proof}

This lemma asserts that the PCA projection keeps the orthogonal property of two vectors, which will be of great importance towards the proof of the next theorem.

For completeness, we reiterate the theorem, followed by the proof.

\begin{theorem}[NC1+NC2+NC5 imply NECO]
We consider two datasets living in $\mathbb{R}^{D}$, $\{ D_{\text{OOD}}, D_{\tau} \}$ and a DNN $f_{\vomega}(\cdot)=(g_{\vomega}\circ h_{\vomega})(\cdot)$ that satisfy NC1, NC2 and NC5. 
There $ \exists ~d \ll D$ for PCA on $D_{\tau}$ s.t.
$\mbox{NECO}(\mu_G ^{\mbox{OOD}}) = 0$. Conversely, for $\vx \in D_{\tau}$ and considering $\vx \neq \vec{0}$ we have that $\mbox{NECO}(\vx) \neq 0$.
\end{theorem}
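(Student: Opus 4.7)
The plan is to use NC1 and NC2 to explicitly identify the range of the PCA projector, then to apply NC5 to show $\mu_G^{\text{OOD}}$ is orthogonal to that range (so its projection vanishes), and finally to reuse NC1 with the ETF geometry to obtain a non-zero projection for any ID sample.

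First, under NC1 every $h_{\vomega}(\vx_i)$ for $\vx_i \in D_\tau$ of class $c$ reduces to $\mu_c$, so $\Sigma_W = 0$ and $\Sigma_T = \Sigma_B$ has range exactly $\mathrm{span}\{\mu_c - \mu_G\}_{c=1}^C$. By NC2 the vectors $\{\mu_c - \mu_G\}_c$ form a simplex ETF, hence are linearly independent up to the single relation $\sum_c (\mu_c - \mu_G) = \vec{0}$ and span a subspace of dimension exactly $C - 1 \ll D$. Choosing $d = C - 1$ (or $d = C$ if one instead works with the uncentered second-moment matrix, in which case the range is $\mathrm{span}\{\mu_c\}_{c=1}^C$) pins down the PCA projector $P$ so that $\mathrm{Range}(P^\top)$ equals this span.

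Second, invoke NC5: $\langle \mu_c, \mu_G^{\text{OOD}}\rangle = 0$ for every $c$. Since $\mu_G$ is a (class-weighted) linear combination of the $\mu_c$'s, linearity yields $\langle \mu_G, \mu_G^{\text{OOD}}\rangle = 0$ and hence $\langle \mu_c - \mu_G, \mu_G^{\text{OOD}}\rangle = 0$ for each $c$. So $\mu_G^{\text{OOD}}$ lies in the orthogonal complement of $\mathrm{Range}(P^\top)$, giving $P\mu_G^{\text{OOD}} = \vec{0}$ and therefore $\mathrm{NECO}(\mu_G^{\text{OOD}}) = \|P\mu_G^{\text{OOD}}\|/\|\mu_G^{\text{OOD}}\| = 0$. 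For the converse, fix any $\vx \in D_\tau$ of class $c$ with $\vx \neq \vec{0}$. By NC1, $h_{\vomega}(\vx) = \mu_c$. Under the uncentered reading, $\mu_c \in \mathrm{Range}(P^\top)$ and NC2 forces $\mu_c \neq \vec{0}$, so $Ph_{\vomega}(\vx) = \mu_c \neq \vec{0}$; under the centered reading, the projected quantity is $P(h_{\vomega}(\vx) - \mu_G) = \mu_c - \mu_G$, which has strictly positive norm by the equinormality statement of NC2. Either way, $\mathrm{NECO}(\vx) \neq 0$.

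The main obstacle is bookkeeping around PCA conventions: the NECO formula $\|Ph_{\vomega}(\vx)\|/\|h_{\vomega}(\vx)\|$ omits an explicit centering while textbook PCA operates on centered data, and one must fix a convention consistently so that the identifications $\mathrm{Range}(P^\top) = \mathrm{span}\{\mu_c\}$ (dimension $C$) or $\mathrm{span}\{\mu_c - \mu_G\}$ (dimension $C - 1$) hold. NC5 is linear in $\{\mu_c\}$, so the orthogonality argument carries through under either choice. Minor extra care is required to handle unbalanced class sizes (use class-weighted means when expressing $\mu_G$ as a combination of the $\mu_c$'s) and the fact that NC1, NC2, NC5 are asymptotic properties, so the conclusion is an idealized equality in the limit where the NC conditions hold exactly.
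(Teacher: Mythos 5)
Your proof is correct and rests on the same core idea as the paper's: NC1 and NC2 identify the PCA principal subspace with the span of the class means, and NC5 makes $\mu_G^{\text{OOD}}$ orthogonal to that span, so its projection vanishes while ID points (which collapse onto nonzero class means inside the subspace) retain a nonzero projection. The differences are in execution, and yours is the tighter argument. The paper frames the proof as a contradiction, but the contradiction adds nothing --- it simply proves both claims directly and then declares the assumed negation contradicted; your direct formulation is cleaner. More substantively, the paper routes the orthogonality step through an ``orthogonality conservation'' lemma whose proof asserts $PP^{\top}=I_D$ for a $D\times d$ matrix with $d<D$, which is false ($P^{\top}P=I_d$ holds, but $PP^{\top}$ is a rank-$d$ projector); orthogonality of projections is in general \emph{not} preserved by PCA, and the lemma only works here because $\mu_G^{\text{OOD}}$ is orthogonal to the entire principal subspace --- which is exactly the fact you prove directly, bypassing the faulty lemma. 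You are also more careful about the geometry: the paper claims NC1+NC2 make $\{\mu_c\}_{c=1}^{C}$ an \emph{orthogonal} basis, whereas a simplex ETF has equiangular, negatively correlated centered means; your observation that what actually matters is only that $\{\mu_c-\mu_G\}$ spans a $(C-1)$-dimensional subspace (because of the relation $\sum_c(\mu_c-\mu_G)=\vec{0}$), together with your explicit handling of the centered-versus-uncentered PCA convention and the extension of NC5 to the centered means by linearity, fills genuine gaps that the paper leaves open. No missing steps on your side; just note, as you do, that the conclusion holds as an idealized equality in the limit where NC1, NC2 and NC5 are exact.
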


\begin{proof}
Assuming, for the sake of contradiction, that we have $\forall d \leq D$ s.t. $\mbox{NECO}(\mu_G ^{\mbox{OOD}}) \neq 0$ and $\mbox{NECO}(\vx) = 0$ for data points $\vx$ in $D_{\tau}$.
Given a PCA dimension $d$ s.t. $d \geq C$ and based on the mathematical underpinnings of PCA (\ie, linear nature of projections) coupled with the NC1+NC2 assumptions, most data points $\vx$ in $D_{\tau}$ satisfy the condition that $\| P h_{\vomega}(\vx) \| $ is maximized and therefore not equal to zero, implicitly the NECO score not being equal to zero.\\
Considering our assumption of NC1 + NC2, we can infer that each class $c$ resides within a one-dimensional space that is aligned with $\mu_c$. As a result, the set of vectors $\{ \mu_c \}_{c=1}^C$ forms an orthogonal basis. According to the PCA definition, the dimensional space resulting from PCA, when $d = C$, should be spanned by the set $\{ \mu_c ^{\mbox{PCA}} \}_{c=1}^C$. Here, for each class $c$, $\mu_c ^{\mbox{PCA}}$ represents the PCA projection of $\mu_c$.\\
Utilizing the orthogonality conservation lemma and considering our assumption of NC5, we can deduce that each $\mu_c^{\mbox{PCA}}$ is orthogonal to $\mu_G ^{\mbox{OOD}}$. Consequently, we can confidently state that $\mbox{NECO}(\mu_G ^{\mbox{OOD}}) = 0$, thereby achieving the sought contradiction.
\end{proof}

\section{Details on Baselines Implementations}
\label{baselines}

In this section, we provide an overview of the various baseline methods utilized in our experiments. We explain the mechanisms underlying these baselines, detail the hyperparameters employed, and offer insights into the process of determining hyperparameters when a baseline was not originally designed for a specific architecture.

\paragraph{ASH.}

ASH, as introduced by \cite{djurisic2022extremely}, employs activation pruning at the penultimate layer, just before the application of the DNN classifier. This pruning threshold is determined on a per-sample basis, eliminating the need for pre-computation of ID data statistics. The original paper presents three different post-hoc scoring functions, with the only distinction among them being the imputation method applied after pruning.
In ASH-P, the clipped values are replaced with zeros. ASH-B substitutes them with a constant equal to the sum of the feature vector before pruning, divided by the number of kept features. ASH-S imputes the pruned values by taking the exponential of the division between the sum of the feature vector before pruning and the feature vector after pruning.
Since our study employs different models than the original ASH paper by \cite{djurisic2022extremely}, we fine-tuned their method using a range of threshold values, specifically $[65, 70, 75, 80, 85, 90, 95, 99]$, for their three variations of ASH, consistent with the thresholds used in the original paper. The optimized ASH thresholds are presented in Table \ref{tab:tabel8}.

\begin{table}[!ht]
\centering
\resizebox{0.6\textwidth}{!}{%

\begin{tabular}{C{2cm}|C{3cm}|C{2cm}|C{2cm}|C{2cm}}
\hline
Ash Variant & \backslashbox{Model}{ID dataset}
&\begin{tabular}{@{}c@{}} \textbf{CIFAR-10}\\\end{tabular} 
&\begin{tabular}{@{}c@{}} \textbf{CIFAR-100}\\\end{tabular} 
&\begin{tabular}{@{}c@{}} \textbf{ImageNet}\\\end{tabular} 
\\
 \hline
 \multirow{4}{4em}{ASH-B}&ViT &75  &60 &  70 \\
& SwinV2-B/16 & -  & -  & 99\\
& DeiT-B/16 & - & - & 60\\
& ResNet-18 & 75 & 90 &- \\
\hline
 \multirow{4}{4em}{ASH-P}&ViT &60  &  60 & 60  \\
& SwinV2-B/16 & - & - &60 \\
& DeiT-B/16 & - & - & 60\\
& ResNet-18 &95 &85  & -\\
\hline
 \multirow{4}{4em}{ASH-S}&ViT &60  & 60  & 60 \\
& SwinV2-B/16 & - &-  & 99\\
& DeiT-B/16 & - & - & 99\\
& ResNet-18 &60  &85  & -\\
\hline
\end{tabular}
}
\caption{ \label{tab:tabel8} Ash optimised pruning threshold per-model and per-dataset. Values are in percentage.}
\end{table}

\paragraph{ReAct.}
For the ReAct baseline, we adopt the ReAct+Energy setting, which has been identified as the most effective configuration in prior work \cite{react}. The original paper determined that the 90th percentile of activations estimated from the in-distribution (ID) data was the optimal threshold for clipping. However, given that we employed different models in our experiments, we found that using a rectification percentile of $p = 99$ yielded better results in terms of reducing the false positive rate (\fpr) for models like ViT and DeiT, while a percentile of $p =0.95$ was more suitable for Swin.
In our reported results, we use the best threshold setting for each model accordingly to ensure optimal performance.
\paragraph{ViM and Residual.} Introduced by \cite{wang2022vim}, ViM decomposes the latent space into a principal space $P$ and a null space $P^{\perp}$. The ViM score is then computed by using the  projection of the features on the null space to create a virtual logit, using the norm of this projection along with the logits. In addition, they calibrate the obtained norm by a constant alpha to enhance their performance. Alpha is the division of the sum of the Maximum logits, divided by the sum of the norms in the null space projection, both measured on the training set.\\
As for Residual, the score is obtained by computing the norm of the latent vector projection in the null space.\\
We followed ViM official paper \citep{wang2022vim} recommendation for the null space dimenssion in the ViT, DeiT and swin, i.e., [512,512,1000] respectively. However,  the case of ResNet-18 was not considered in the paper. We found that a 300 latent space dimension works best in reducing the FPR. 

\paragraph{Mahalanobis.}
We have also implemented the Mahalanobis score by utilizing the feature vector before the final classification layer, which corresponds to the DNN classifier, following the methodology outlined in \cite{DBLP:journals/corr/abs-2106-03004}. To estimate the precision matrix and the class-wise average vector, we utilize the complete training dataset. It is important to note that we incorporate the ground-truth labels during the computation process.

\paragraph{KL-Matching.}
We calculate the class-wise average probability by considering the entire training dataset. In line with the approach described in \cite{Hendrycks2022ScalingOD}, we rely on the predicted class rather than the ground-truth labels for this calculation.
\paragraph{Softmax score.} \cite{msp} uses the maximum softmax
probability (MSP) of the model as an OOD scoring function.
\paragraph{MaxLogit.} \cite{Hendrycks2022ScalingOD} uses the maximum logit value (Maxlogit) of the model as an OOD scoring function.
\paragraph{GradNorm.} \cite{Huang_2021_CVPR} computes the norm of gradients between the softmax output and a uniform probability distribution.
\paragraph{Energy.} \cite{Liu2020EnergybasedOD} proposes to use the energy score for OOD detection. The
energy function maps the logit outputs to a scalar $Energy(x;f) \in \mathbb{R}$. In order to keep the convention that the score need to be lower for ID data, \cite{Liu2020EnergybasedOD} used the negative energy as the OOD score. 
\section{Additional Experiment Details}

\subsection{OOD datasets}\label{ood_datasets}

For experiments involving ImageNet-1K as the inliers dataset (ID), we assess the model's performance on five OOD benchmark datasets. 
Textures \citep{Cimpoi_2014_CVPR} dataset comprises natural textural images, with the exclusion of four categories (\textit{bubbly}, \textit{honeycombed}, \textit{cobwebbed}, \textit{spiralled}) that overlap with the inliers dataset. 
Places365 \citep{DBLP:journals/corr/ZhouKLTO16} is a dataset consisting of images depicting various scenes. 
iNaturalist \citep{DBLP:journals/corr/HornASSAPB17} dataset focuses on fine-grained species classification. 
We utilize a subset of 10\,000 images sourced from \citep{Huang_2021_CVPR}. 
ImageNet-O \citep{Hendrycks_2021_CVPR} is a dataset comprising adversarially filtered images designed to challenge OOD detectors. 
SUN \citep{Xiao2010SUNDL} is a dataset containing images representing different scene categories. Figure \ref{fig:ood_samples} shows five examples from each of these datasets.\\
For experiments where CIFAR-10 (resp. CIFAR-100)
serves as the ID dataset, we employ CIFAR-100 (resp. CIFAR-10) as OOD dataset. Additionally, we include the SVHN dataset \citep{Netzer2011ReadingDI} as OOD datasets in these experiments. The standard dataset splits, featuring 50\,000 training images and 10\,000 test images, are used in these evaluations.

\begin{figure}[!ht]
\begin{center}
\includegraphics[width=.6\linewidth]{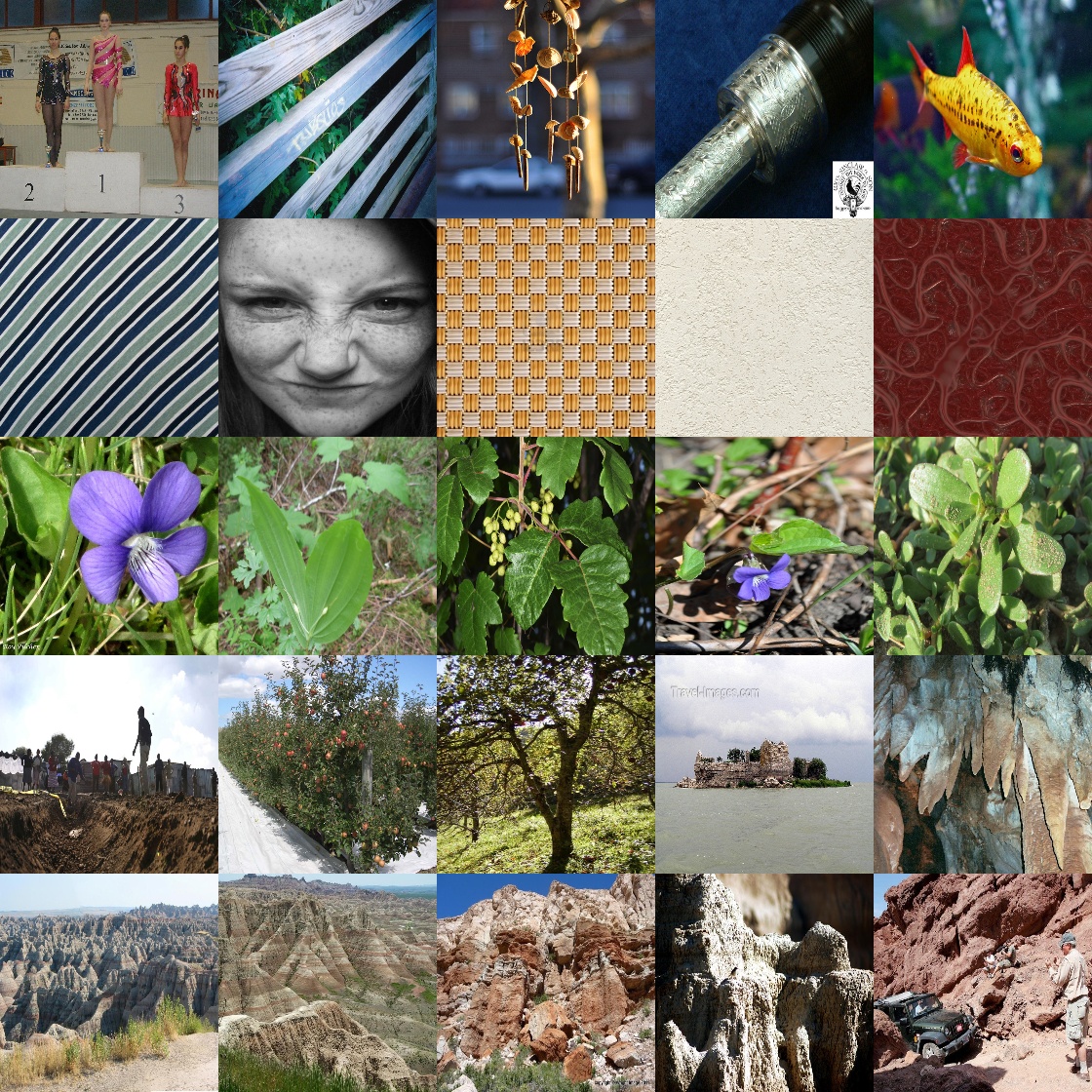}
\end{center}
\caption{Example images from ImageNet-1k considered OOD datasets. Each row representing an OOD dataset: ImageNet-O, Textures, iNaturalist, SUN and Places365 respectively from top to bottom.}
\label{fig:ood_samples}
\end{figure}

\subsection{NECO pseudo-code \label{sec:necopseudo-code}}
Algorithm \ref{algo:neco} presents the pseudo-code of the process utilised to compute NECO during inference. This assumes that a PCA is already computed on the training data, the DNN is trained and a threshold on the score is already identified.
\begin{center}

\begin{algorithm}[H]
\KwData{ $X$ } \tcp*{\tiny{inference data samples}}

\textbf{Inputs:} 
 $model$, $pca$, $thres$  \tcp*{\tiny{where $model$ is the DNN, $pca$ is model estimated on training data using K main principal components, and $thres$ a threshold selected after the validation with the ROC Curve}} 
$X_{Features} = model(X) $\\ 
$ETF_{X_{features}}=pca(X)$\\ 
$score=\frac{Norm(ETF_{X_{features}})}{Norm(X_{Features})}$\\ 
\eIf{$score>threshold$}{$isOOD$=false}{$isOOD$=true}
\KwRet{$score$, $isOOD$}
 \caption{NECO OOD detection pseudo code \label{algo:neco}}
\end{algorithm}
\end{center}

\subsection{Effect of MaxLogit multiplication}
Table \ref{tab:tabel_ablation} shows the performance of NECO with and without the maximum logit multiplication. We observe that using maximum logit considerably improve the performance average than the standalone NECO score. This gap can be explained by the limited size of the feature space in vision transformers (\textit{e.g.} $768$ in the case of ViT) which is insufficient to converge perfectly to the Simplex ETF structure due to the number of classes within ImageNet-1k dataset.
In addition, we can observe a larger performance gap for the DeiT case. We suggest that this might be due due to the specific  training procedure of DeiT, relying on a distillation process, which might hinder its collapse. 
\begin{table}[htbp]
\begin{center}
\resizebox{0.9\textwidth}{!}{%
\begin{tabular}{cccccccc}\hline
 Model&Method 
&\begin{tabular}{@{}c@{}} \textbf{ImageNet-O}\\AUROC$\uparrow$ FPR$\downarrow$\end{tabular} 
&\begin{tabular}{@{}c@{}} \textbf{Textures}\\AUROC$\uparrow$ FPR$\downarrow$\end{tabular} 
&\begin{tabular}{@{}c@{}} \textbf{iNaturalist}\\AUROC$\uparrow$ FPR$\downarrow$\end{tabular} 
&\begin{tabular}{@{}c@{}} \textbf{SUN}\\AUROC$\uparrow$ FPR$\downarrow$\end{tabular} 
&\begin{tabular}{@{}c@{}} \textbf{Places365}\\AUROC$\uparrow$ FPR$\downarrow$\end{tabular}
&\begin{tabular}{@{}c@{}} \textbf{Average}\\AUROC$\uparrow$ FPR$\downarrow$\end{tabular}\\\hline
  \multirow{2}{3em}{ViT} &\textbf{NECO  w/o Maxlogit} &   93.27 30.05& 89.02 46.96
  &\textbf{99.48} \textbf{1.47} &  89.90 38.24  & 85.89 52.15& 91.51 33.77  \\ 
 &\textbf{NECO   } &\textbf{94.53} \textbf{25.20}   & 
  \textbf{92.86} \textbf{32.44}& 99.34 3.26&   \textbf{93.15} \textbf{33.98} &\textbf{ 90.38} \textbf{42.66} & \textbf{94.05}  \textbf{27.51}   \\  \hline  
  \multirow{2}{3em}{SwinV2} 
  &  \textbf{NECO w/o Maxlogit}& 63.17 87.95 & 75.52 69.63  & 90.72 36.37 &  81.59 \textbf{57.19} & 80.52 \textbf{ 60.77}&78.31 62.38  \\ 
  &  \textbf{NECO }& \textbf{65.03} \textbf{80.55} & \textbf{82.27} \textbf{54.67}  & \textbf{91.89} \textbf{34.41}  & \textbf{82.13 }  62.26  & \textbf{81.46} 64.04&  \textbf{80.56} \textbf{59.19}  \\  \hline
   \multirow{2}{3em}{DeiT}  &  \textbf{NECO w/o Maxlogit }&56.21 95.80 &   60.02 96.68 & 71.41 84.49  & 57.29 84.10  &55.54 82.74 &60.10 88.76   \\ 
    & \textbf{NECO  } & \textbf{62.72} \textbf{84.10}  & \textbf{80.82} \textbf{59.47}  & \textbf{89.54} \textbf{42.26} & \textbf{77.64} \textbf{65.55} & \textbf{76.48} \textbf{68.13}  &  \textbf{77.44} \textbf{63.90} \\
    \hline
    \multirow{2}{3em}{ResNet-50}  &   \textbf{NECO   w/o Maxlogit }&\textbf{71.01} \textbf{85.44}  & 84.86 60.79  &  81.04 84.85 & 60.42 96.77  &  61.92 96.34  & 71.85 84.83  \\
    &  \textbf{NECO  }& 69.80 86.43  &  \textbf{88.09} \textbf{51.53}&   \textbf{87.94}\textbf{ 62.69}&\textbf{ 75.56} \textbf{77.55} & \textbf{73.07} \textbf{78.62}  & \textbf{78.89} \textbf{71.36} \\ \hline
\end{tabular}}
\end{center}
\caption{ \label{tab:tabel_ablation}OOD detection performance for NECO with versus without Maxlogit multiplication. Both metrics \auc  and \fpr. are in percentage. A pre-trained ViT-B/16 is used for testing.}
\vspace{-0.1\in}
\end{table}

\subsection{Fine-tuning details}\label{fine-tuning}

The fine-tuning setup for the ViT model is as follows: we take the official pre-trained weights on ImageNet-21K \citep{DBLP:journals/corr/abs-2010-11929} and fine-tune them on ImageNet-1K, CIFAR-10, and CIFAR-100. For ImageNet-1K, the weights are fine-tuned for 18\,000 steps, with 500 cosine warm-up steps, 256 batch size, 0.9 momentum, and an initial learning rate of $2\mathrm{x}10^{-2}$. 
For CIFAR-10 and CIFAR-100 the weights are fine-tuned for 500
and
1000 steps respectively. With 100 warm-up steps, 512 batch size, and the rest of the training parameters being equal to the case of ImageNet-1K.
Swin \citep{DBLP:journals/corr/abs-2111-09883} is also a transformer-based classification model. We use the officially released SwinV2-B16 model, which is pre-trained on ImageNet-21K and fine-tuned on ImageNet-1K.
 ResNet-18 \citep{DBLP:journals/corr/HeZRS15} is a CNN-based model. For both CIFAR-10 and CIFAR-100, the model is trained for 200 epochs with 128 batch size, $5\mathrm{x}10^{-4}$ weight decay and 0.9 momentum. The initial learning
rate is 0.1 with a cosine annealing scheduler.
 When estimating the simplex ETF space, the entire training set is used.
Note That for in the imagenet case, we used an input size of 384 for ViT, while the image size was only 224 for Swin and DeiT models, which might contribute to the higher performance of the ViT model. 

\subsection{Dimension of Equiangular Tight Frame (ETF)}
\label{ETF_diimension_details}
In Figures \ref{fig:cifar10_etf},\ref{fig:cifar100_etf} and \ref{fig:imagenet_etf}, we show the performances, in terms of \auc and \fpr, for different models, with different ID data: ImageNet, CIFAR-10 or CIFAR-100. We observe that both performance metrics remain relatively stable once the dimension rises over a certain threshold for each case study. This suggests that including more dimensions beyond this threshold adds little to no meaningful information about the ID data. We hypothesize that the inflection
point in these curves indicates the best approximate subspace dimension of the Simplex ETF relevant to our ID data. 
Therefore, for greater adaptability, Table \ref{tab:best_etf}   shows the best ETF values that minimize the FPR (and maximize the AUC in case of equal FPR)  for each tuple dataset. 
Hence, our results depicted in Section \ref{sec:xp_results} utilize the best  ETF values for each tuple <model, ID, OOD>.
In the more general case, according to Figures \ref{fig:cifar10_etf},\ref{fig:cifar100_etf} and \ref{fig:imagenet_etf}, we still have the flexibility to find a single ETF dimension per ID dataset and model if the specific application demands it while maintaining state-of-the-art performance. For the evaluation/inference phase, the threshold can be fixed at a dimension $d$ such that the first $d$ principal components that explain at least 90\% of the ID variance from the train dataset.

\begin{table}[!ht]
\centering
\resizebox{\linewidth}{!}{%
\begin{tabular}{C{2cm}|C{3cm}|C{2cm}|C{2cm}|C{2cm} |C{2cm}|C{2cm}|C{2cm}|C{2cm}|C{2cm}}
\hline
ID Data & Model 
& \textbf{ImageNet-O}
& \textbf{Textures}
& \textbf{iNaturalist}
& \textbf{SUN}
& \textbf{Places365}
& \textbf{CIFAR-10}
& \textbf{CIFAR-100}
& \textbf{SVHN}\\
\hline
\multirow{3}{*}{ImageNet}& ViT-B/16   & 140 & 270   & 490 & 270 &  360 & - & - & -    \\

& SwinV2-B/16   & 70 & 510  & 450   &  560 & 670 & - & - & -  \\

&DeiT-B/16   & 430 & 670   &370   & 710  &    730 & - & - & -  \\
\hline    
\multirow{2}{*}{CIFAR-10}& ViT-B/16   & - & -   & - & - &  - & - & 40 & 100    \\
& ResNet-18 & - & -  & -   &  - & - & - & 130 & 250  \\
\hline    
\multirow{2}{*}{CIFAR-100}& ViT-B/16   & - & -   & - & - &  - & 130 & - & 260    \\
& ResNet-18 & - & -  & -   &  - & - & 150 & - & 470  \\
\hline
\end{tabular}
}
\caption{ \label{tab:tabel5} Best Simplex ETF approximate dimension, for different DNN architectures and ID data case studies: ImageNet, CIFAR-10, and CIFAR-100. The Best ETF dimension is found by minimizing the \fpr, then maximizing the \auc in case of equal \fpr across the dimensions. The values are presented for different DNN architectures and ID data (ImageNet, CIFAR-10, and CIFAR-100) and their respective OOD test cases.}  \label{tab:best_etf}
\end{table}

\begin{figure}[htbp]
\begin{center}
  \begin{subfigure}[t]{0.95\linewidth}
    \centering
    \includegraphics[width=0.49\linewidth]{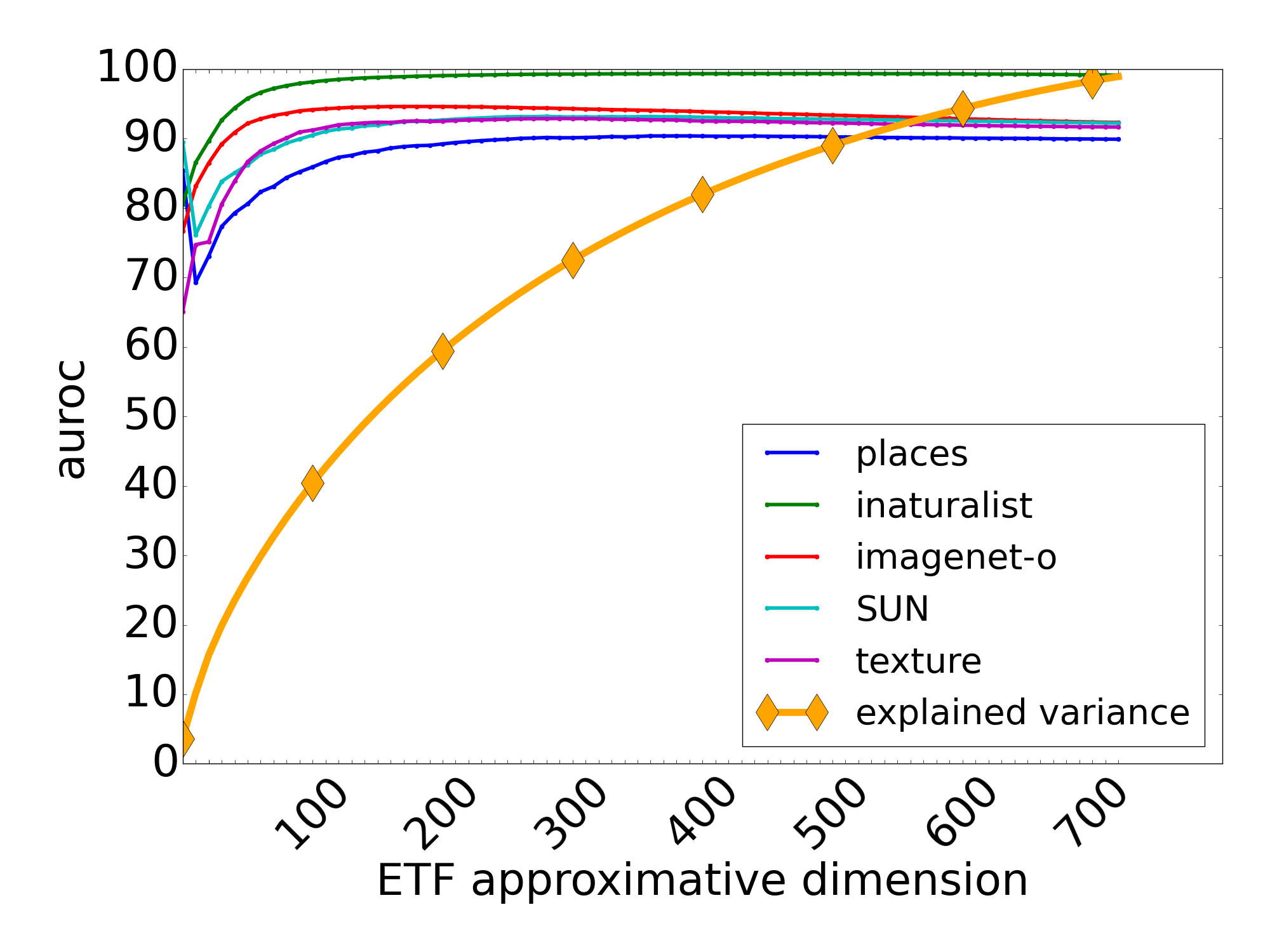}
    \includegraphics[width=0.49\linewidth]{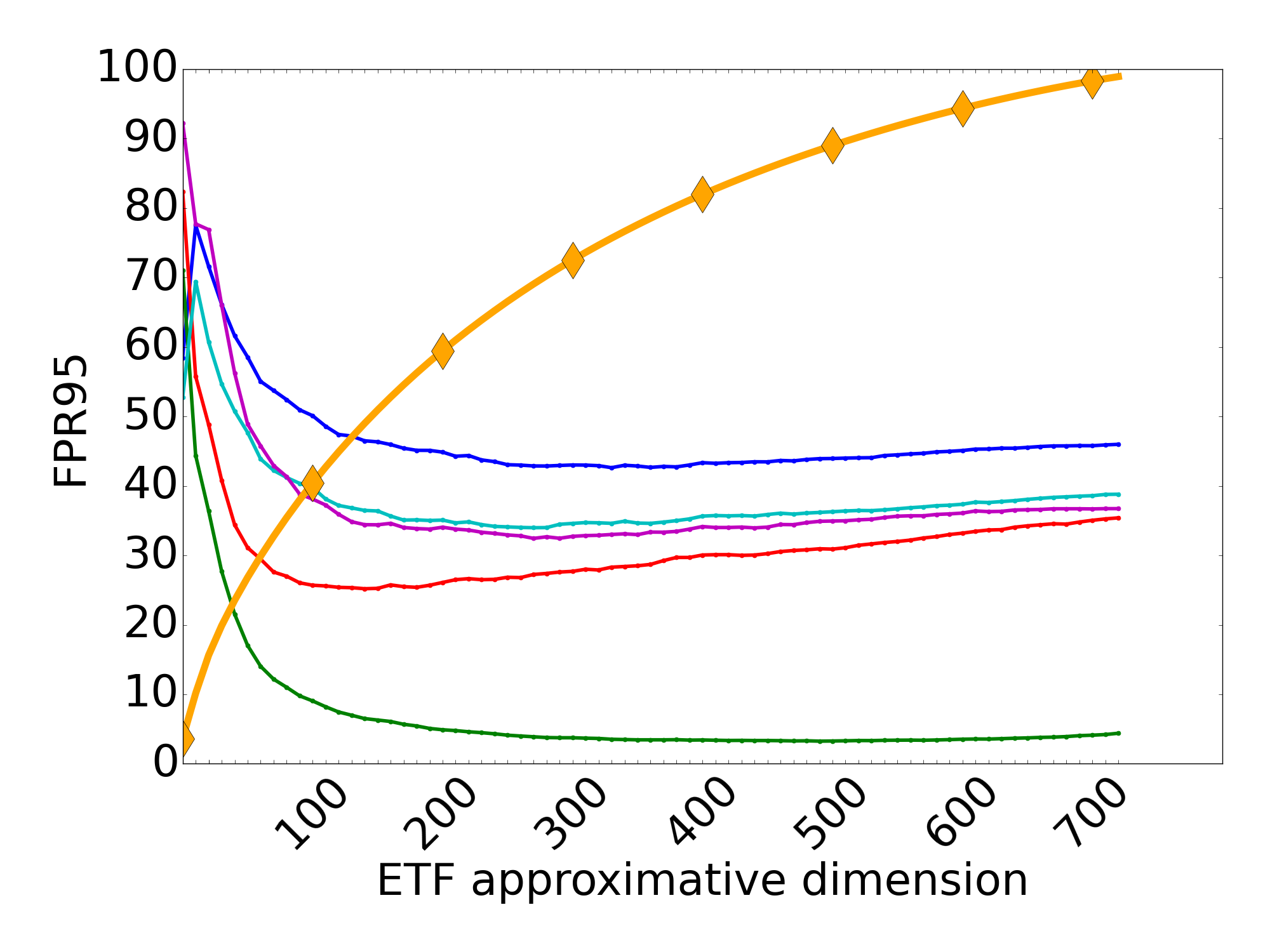}
    \vspace{-.2in}
    \caption{ViT Model}\label{fig:imagenet_etf_vit}
  \end{subfigure}
\hfill
  \begin{subfigure}[t]{0.95\linewidth}
    \centering
    \includegraphics[width=0.49\linewidth]{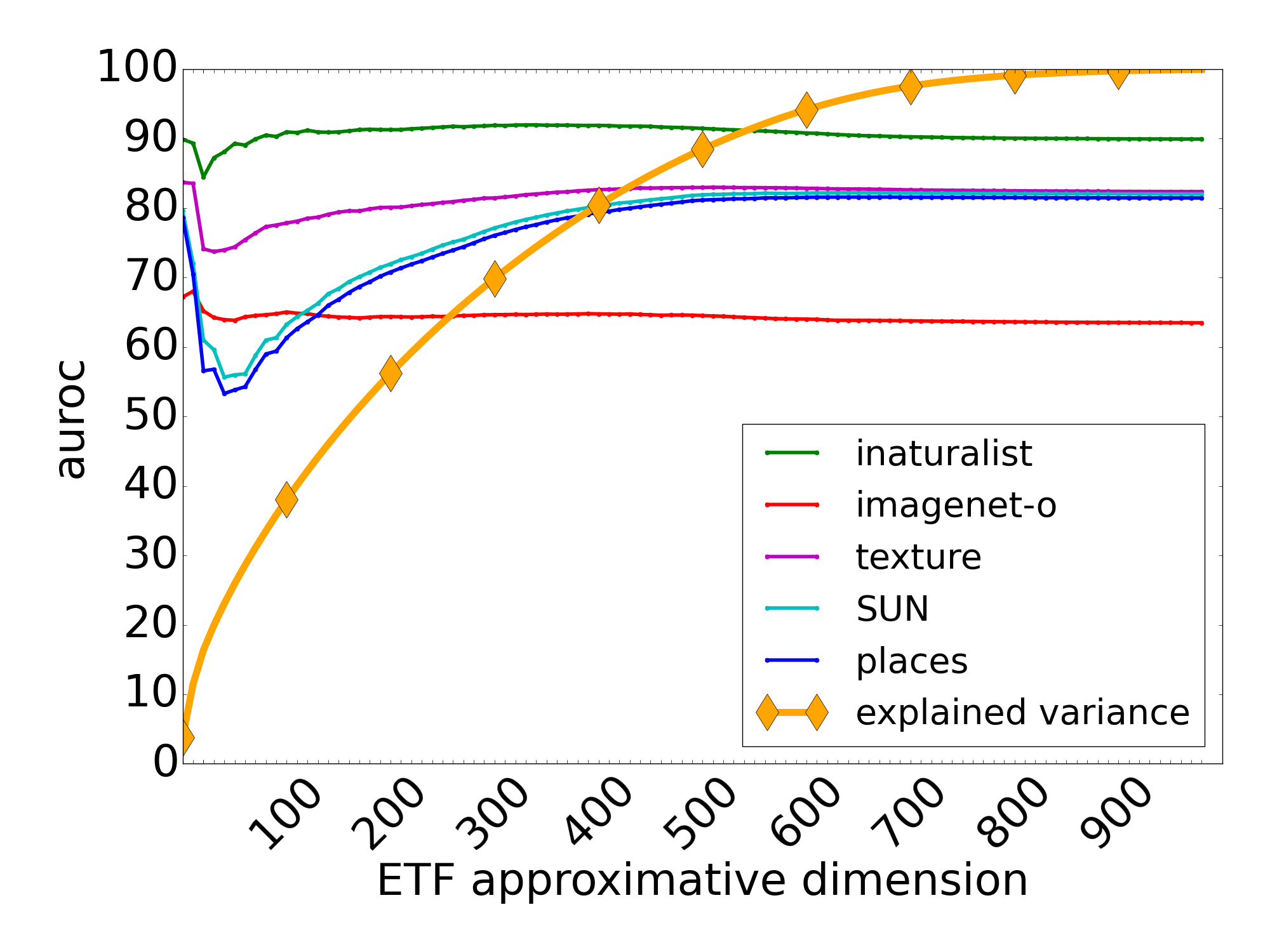}
    \includegraphics[width=0.49\linewidth]{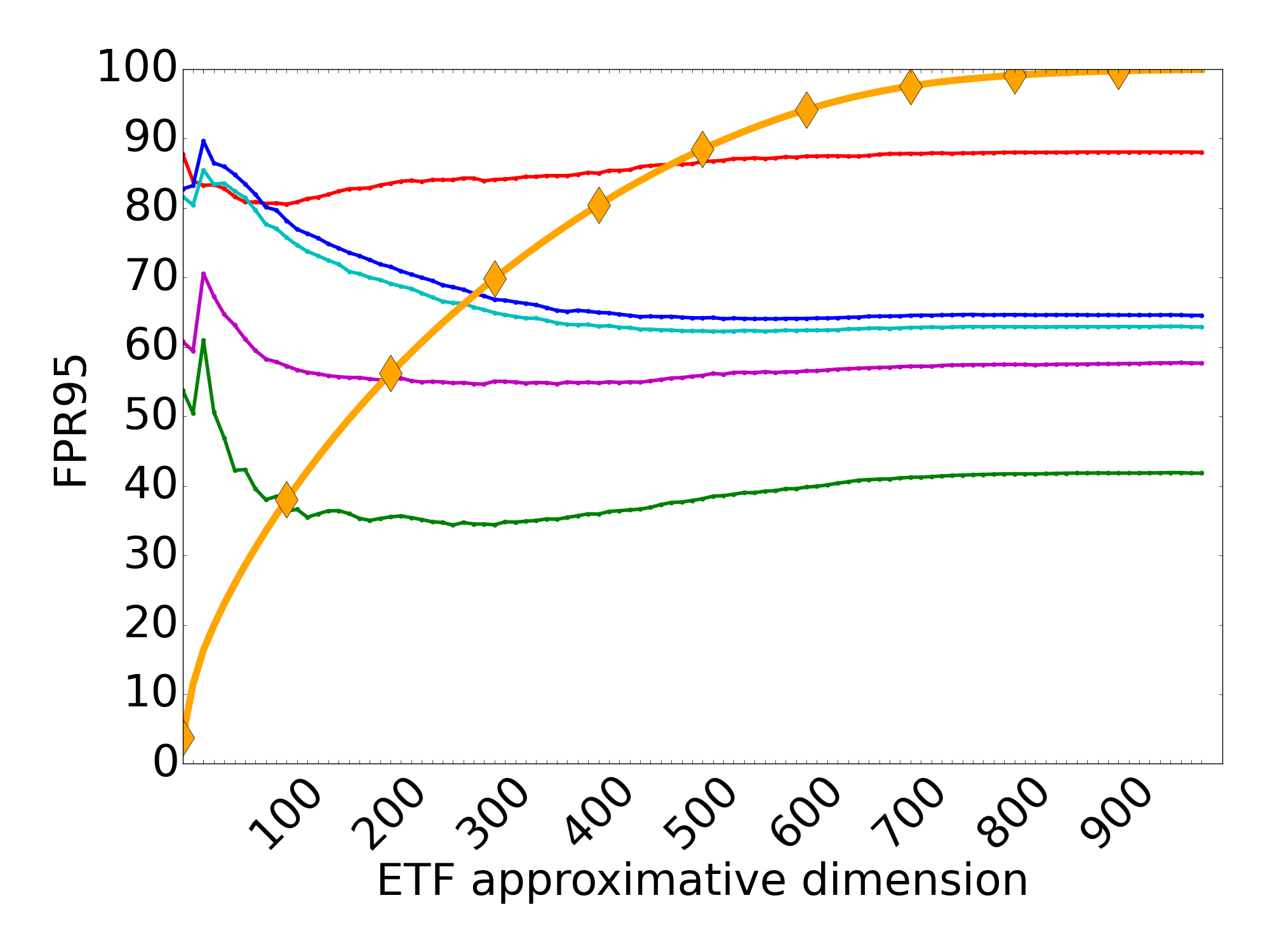}
    \vspace{-.2in}
    \caption{ SwinV2 Model}\label{fig:imagenet_etf_swinv2}
  \end{subfigure}  
\end{center}
    \vspace{-.15in}
  \caption{Comparison of Performance metrics --- \auc (left) and \fpr (right) ---  against the principal space dimension, for ViT (Top) and SwinV2 (Bottom), with ImageNet as ID data and different OOD datasets: \textcolor{blue}{iNaturalist}, \textcolor{Green(matplotlib)}{ImageNet-O}, \textcolor{red}{Textures}, \textcolor{cyan}{SUN}, \textcolor{Purple(matplotlib)}{Places365}. \label{fig:imagenet_etf}}
\end{figure}

\begin{figure}[htbp]
\begin{center}
  \begin{subfigure}[t]{0.95\linewidth}
    \centering
    \includegraphics[width=0.49\linewidth]{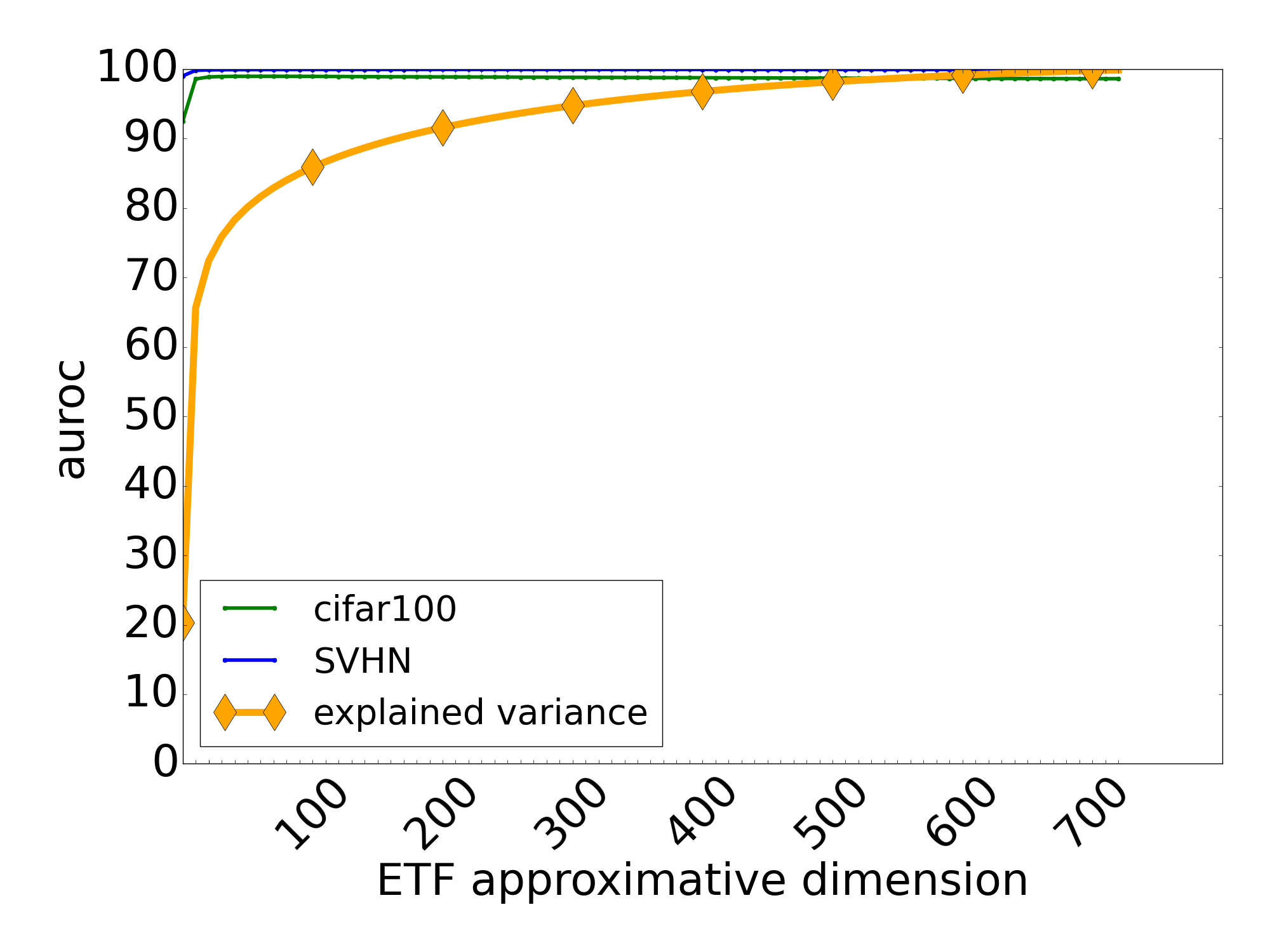}
    \includegraphics[width=0.49\linewidth]{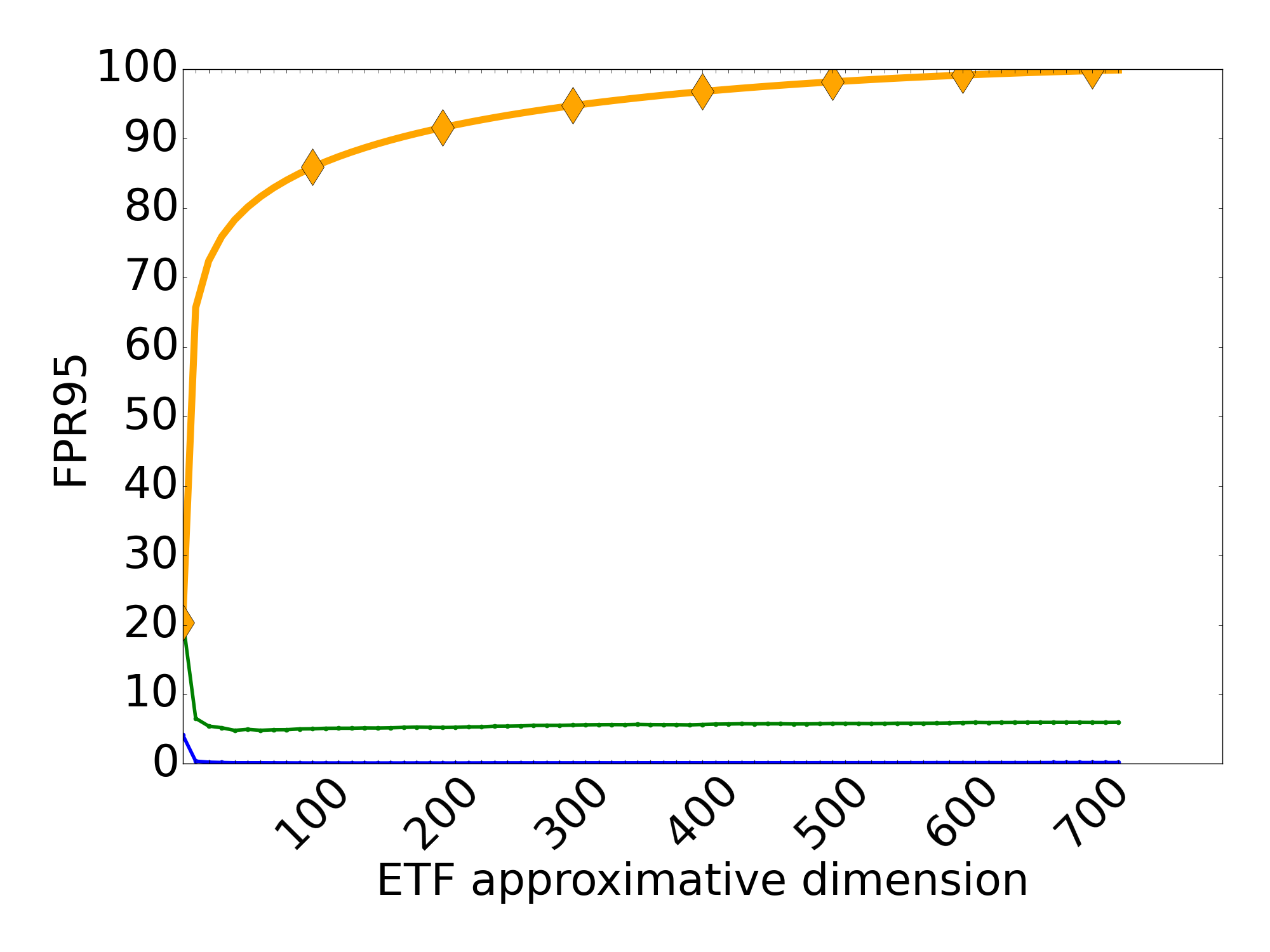}
    \vspace{-.2in}
    \caption{ViT Model}\label{fig:cifar10_etf_vit}
  \end{subfigure}
\hfill
  \begin{subfigure}[t]{0.95\linewidth}
    \centering
    \includegraphics[width=0.49\linewidth]{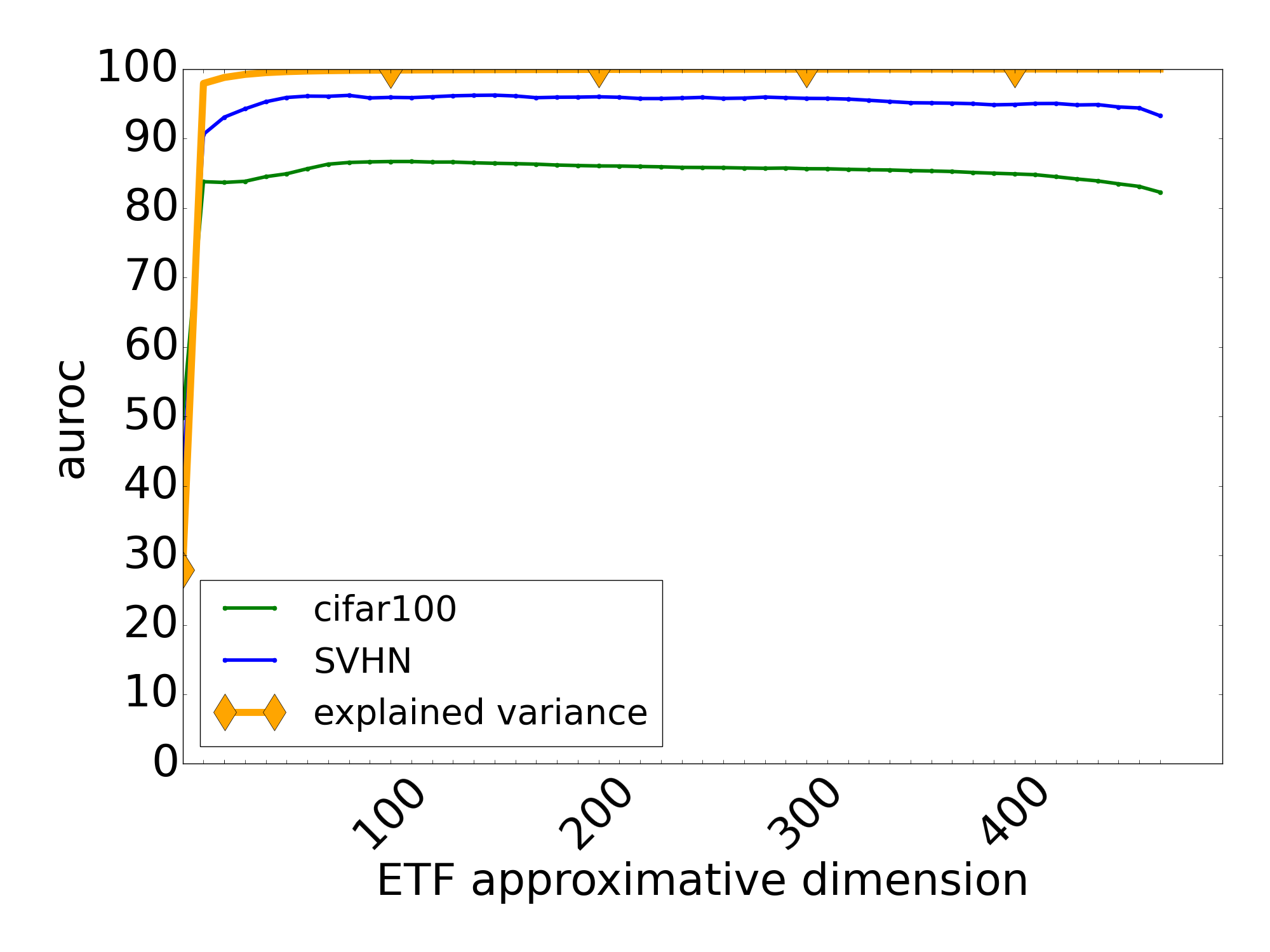}
    \includegraphics[width=0.49\linewidth]{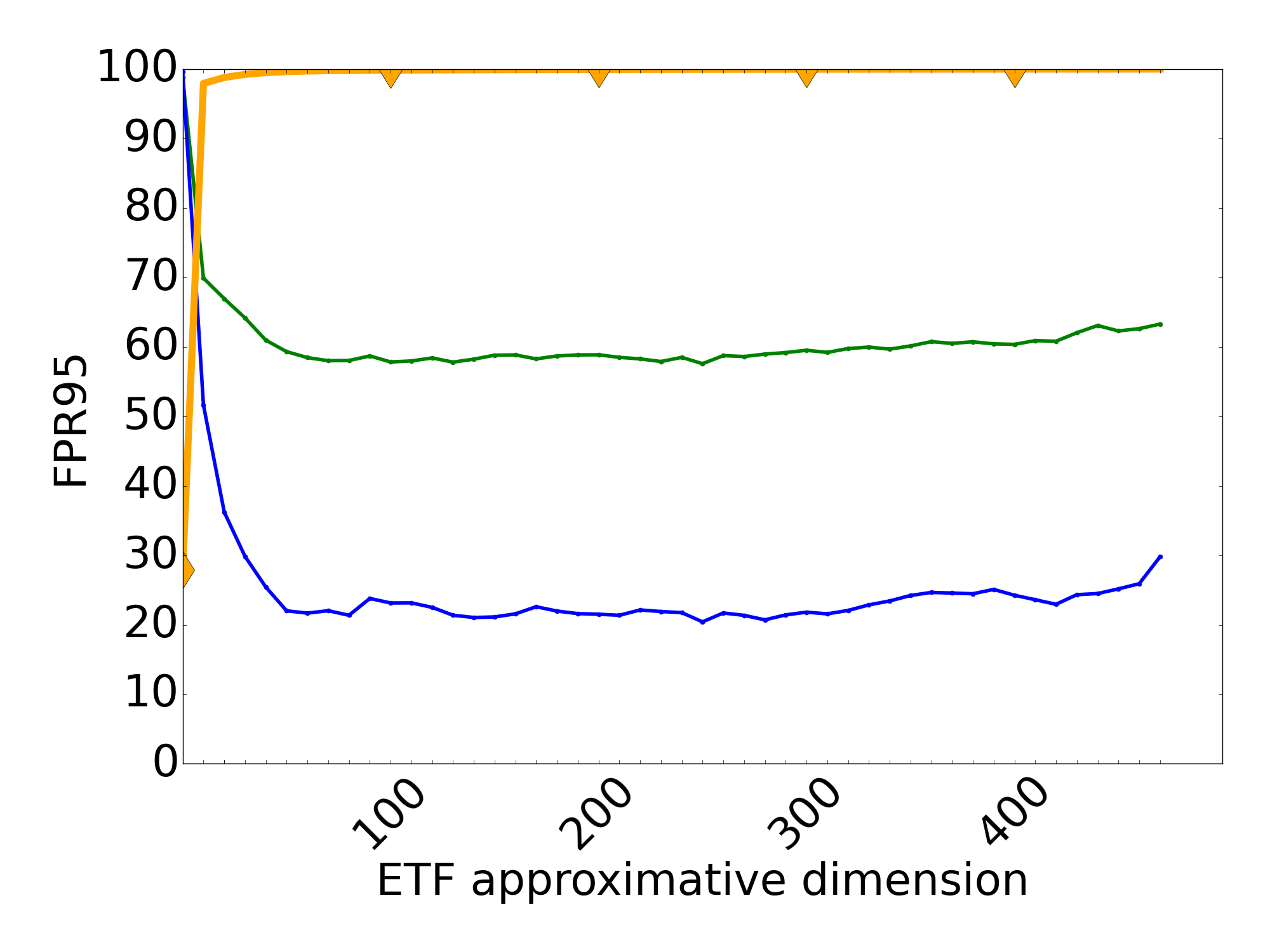}
    \vspace{-.2in}
    \caption{ ResNet-18 Model}\label{fig:cifar10_etf_ResNet18}
  \end{subfigure}  
\end{center}
    \vspace{-.15in}
  \caption{Comparison of Performance metrics --- \auc (left) and \fpr (right) ---  against the principal space dimension, for ViT (Top) and ResNet-18 (Bottom), with CIFAR-10 as ID data, and different OOD datasets: \textcolor{blue}{SVHN}, \textcolor{Green(matplotlib)}{CIFAR-100}.\label{fig:cifar10_etf}}
\end{figure}

\begin{figure}[htbp]
\begin{center}
    
  \begin{subfigure}[t]{0.95\linewidth}
    \centering
    \includegraphics[width=0.49\linewidth]{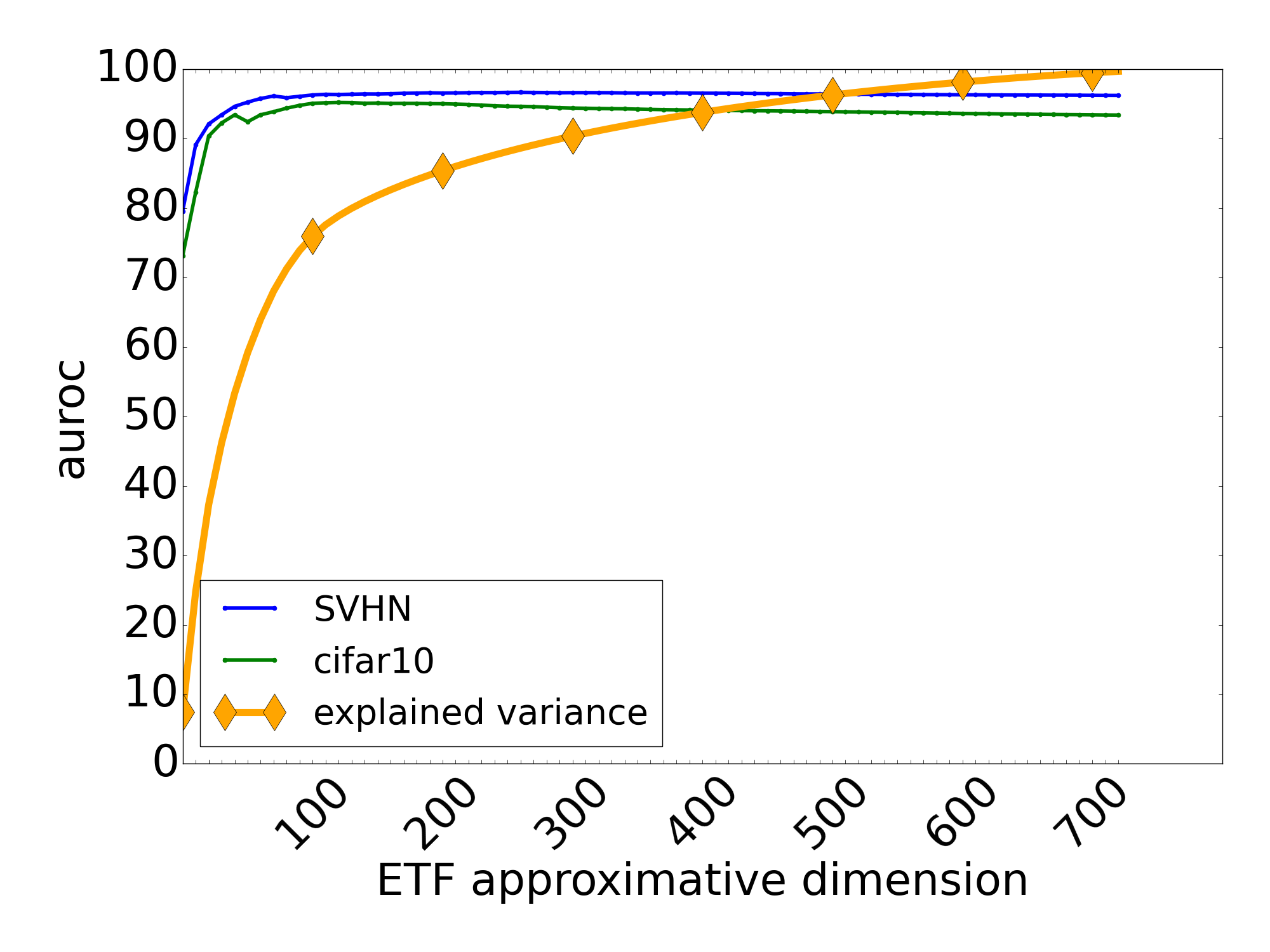}
    \includegraphics[width=0.49\linewidth]{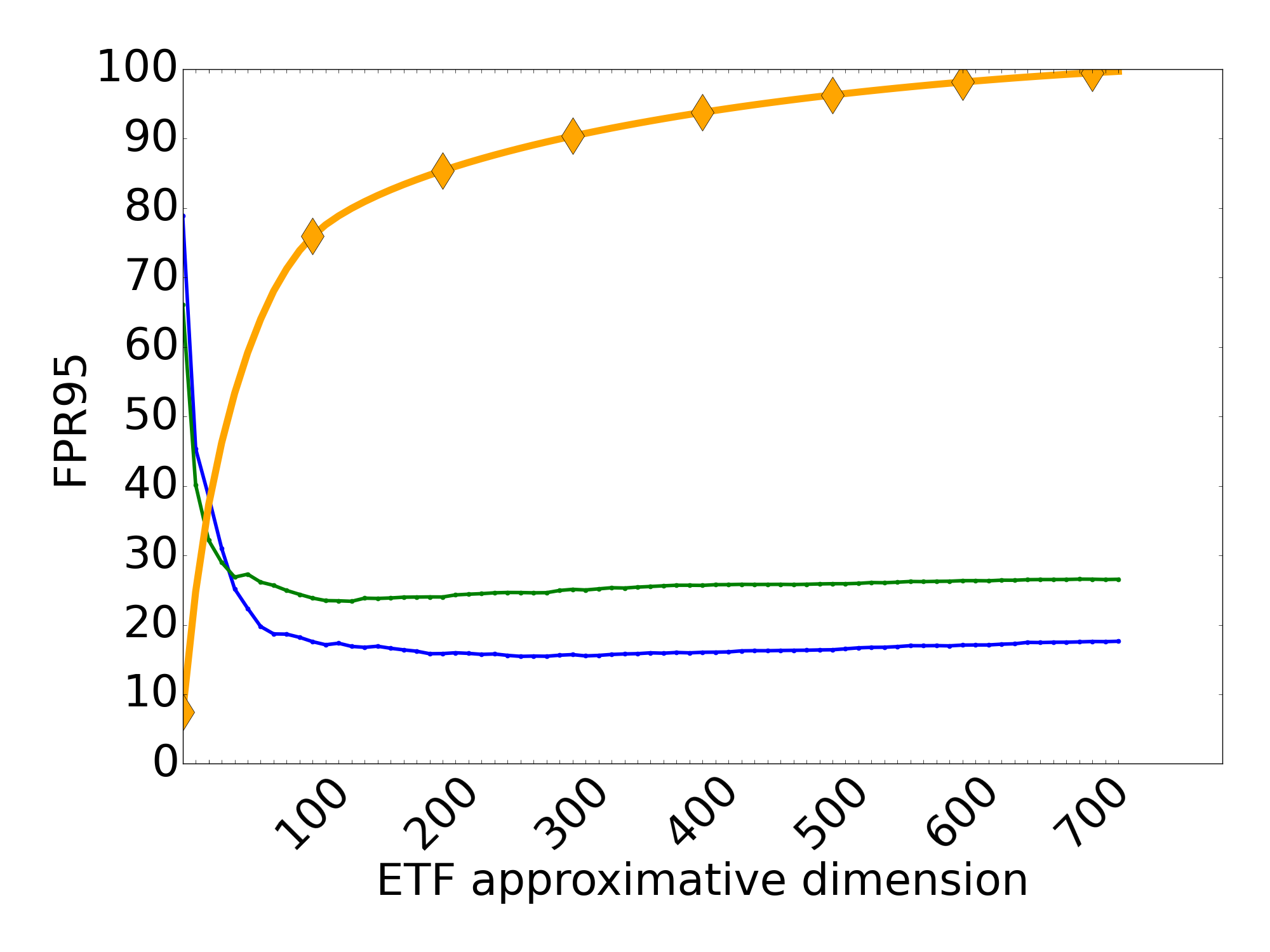}
    \vspace{-.2in}
    \caption{ViT Model}\label{fig:cifar100_etf_vit}
  \end{subfigure}
\hfill
  \begin{subfigure}[t]{0.95\linewidth}
    \centering
    \includegraphics[width=0.49\linewidth]{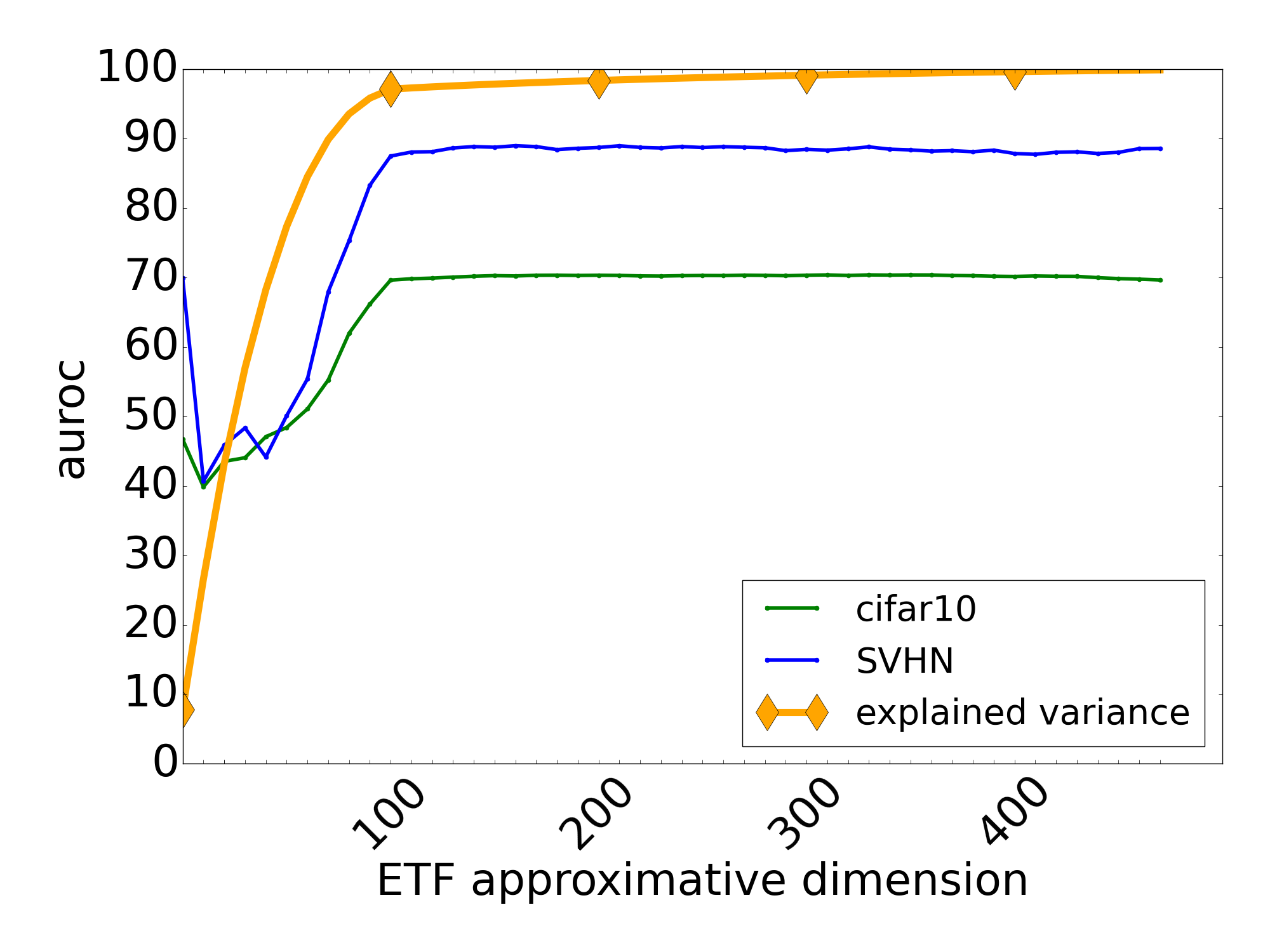}
    \includegraphics[width=0.49\linewidth]{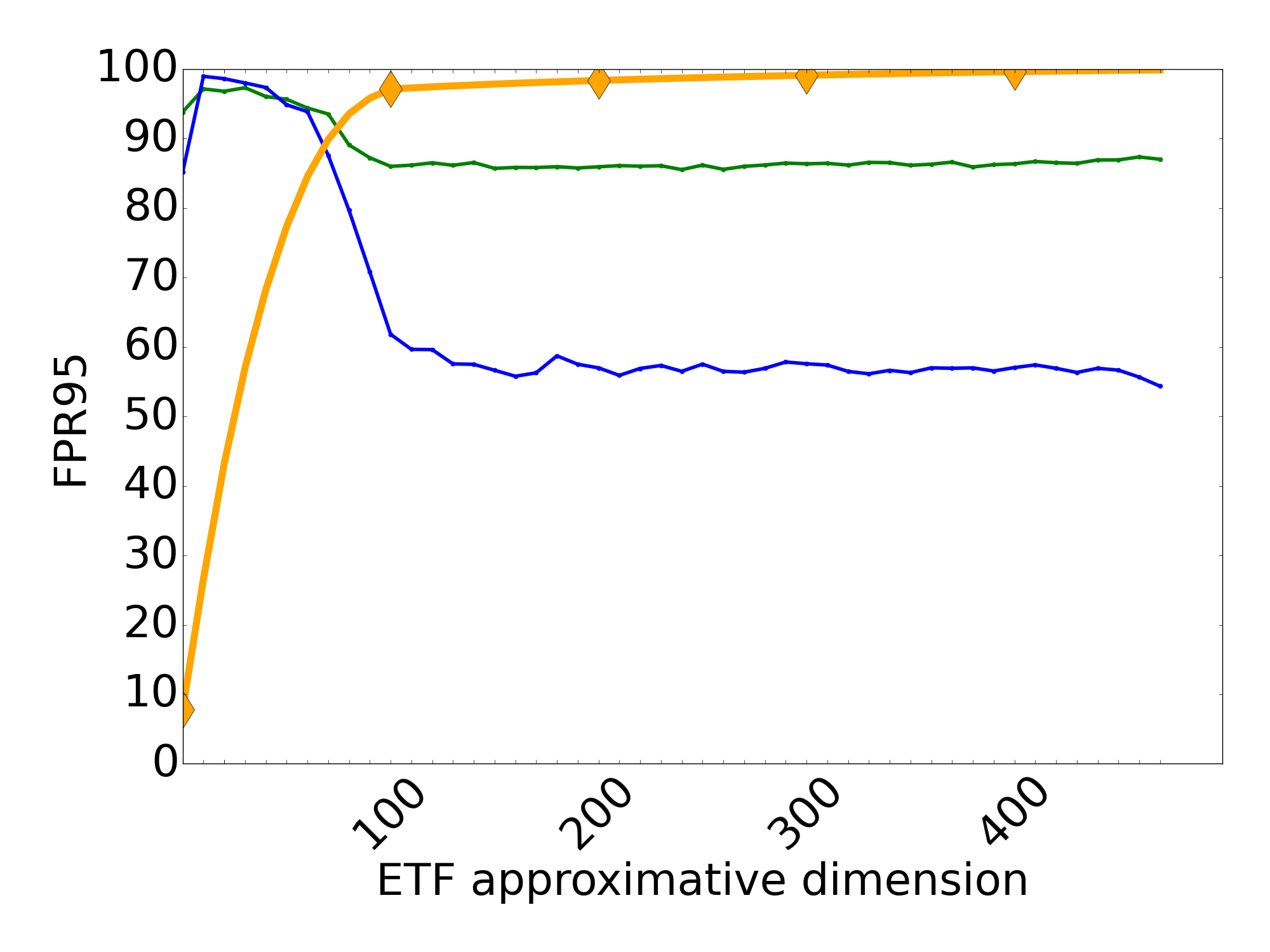}
    \vspace{-.2in}
    \caption{ ResNet-18 Model}\label{fig:cifar100_etf_ResNet18}
  \end{subfigure}  
\end{center}
    \vspace{-.15in}
  \caption{Comparison of Performance metrics --- \auc (left) and \fpr (right) ---  against the principal space dimension, for ViT (Top) and ResNet-18 (Bottom), with CIFAR-100 as ID data, and different OOD datasets: \textcolor{blue}{SVHN}, \textcolor{Green(matplotlib)}{CIFAR-10}.\label{fig:cifar100_etf}}
\end{figure}

\subsection{Classification Accuracy}

Considering the post hoc nature of our method, when an input image is identified as in-distribution (ID), it is important to note that one can always use the original DNN classifier. This switch comes with a negligible overhead and ensures optimal performance for both classification and out-of-distribution (OOD) detection. Consequently, the classification performance of the final model remains unaltered in our approach. We refer the reader
to Table \ref{tab:table_acc_fined_model} for the observed classification accuracy.
\begin{table}[htbp]
\centering
\resizebox{0.75\textwidth}{!}{%
\begin{tabular}{C{2cm}| C{3cm}|C{3cm}|C{2.5cm} |C{2cm} }
\hline
ID Data & Model & \textbf{Architecture} & \textbf{Pre-trained dataset}   &\textbf{Accuracy (top1\%)}\\
\hline
\multirow{2}{*}{ImageNet} & ViT-B/16  &Transformer   & ImageNet-21k & 83.84\\
& SwinV2-B/16   &Transformer &ImageNet-21k &    81.20   \\ 
& DeiT-B/16   &Transformer &ImageNet-21k &    81.80   \\ 
\hline
\multirow{2}{*}{CIFAR-10} &ViT-B/16   &Transformer   & ImageNet-21k & 98.89\\
& ResNet-18 &Residual Convnet &     - & 95.46     \\ 
\hline
\multirow{2}{*}{CIFAR-100}& ViT-B/16 &Transformer   & ImageNet-21k & 92.16\\
 & ResNet-18 &Residual Convnet & - &78.59      \\ 
\hline    
\end{tabular}
}
\caption{
Top 1\% accuracy on ID data for the original classification task, for the models and ID dataset considered in our experiments, including relevant model details (from the type of architecture to with/without pretraining distinction). \label{tab:table_acc_fined_model} 
}

\end{table}

\section{Details about Neural Collapse\label{NC_sup}}
\subsection{Additional neural collapse properties}
Here, we present the equations for the remaining two properties of ''neural collapse''. Additionally, Figure \ref{fig:ETF_3D} shows a visual illustration of the Simplex ETF structure in the case of four classes. \\ We evaluate \textbf{the convergence to Self-duality (NC3)}
using the following expression:
\begin{equation}
  \operatorname{Self-duality} =  \| \Tilde{W}^\top - \Tilde{M}  \|^2_2 \label{eq:3} ~,
\end{equation}
\begin{figure}[!ht]
\begin{center}
\includegraphics[width=.4\linewidth]{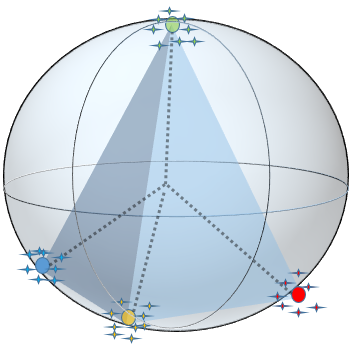}
\end{center}
\caption{Visualisation of the Simplex ETF structure.}
\label{fig:ETF_3D}
\end{figure} 
In this equation, $W$ represents the matrix of the penultimate layer classifier weights, $M$ is the matrix of class means, and $[\Tilde{\cdot}]$ denotes a matrix with unit-normalized means over columns. Additionally, $\| \cdot \|^2_2$ signifies the square of the $L2$ norm.

Moreover, the Simplification to Nearest Class-Center (NC4, see Equation \ref{NC_equation4}) is measured as the proportion of training set samples that are misclassified when we apply the simple decision rule of assigning each feature space activation to the nearest class mean:

\begin{equation}
\operatorname{Simplification-to-NCC} =  \operatorname{argmin}_c\|z-\mu_c\|_2 \label{eq:4}~, 
\end{equation}
where $z$ represents the feature vector for a given input $\vx$.

\subsection{Neural collapse additional results}
In this Section, we present additional results related to the convergence of the NC equations discussed in the main paper. 
The experiments follow the same training procedure as detailed in Section \ref{NC_analysis} and  intend to give additional empirical results and theories on the neural collapse properties in the presence of OOD data.

\paragraph{Convergence of NC1.}

Figure \ref{fig:NC1_CIFAR-100_ID_OOD} presents the convergence of the NC1  variability collapse on ViT or ResNet-18 on CIFAR-10 or CIFAR-100. We observe that variability collapse on ID data tends to converge to zero, hence confirming that the DNN reaches the variability collapse in accordance with \citep{NC}. In addition, we evaluate NC1 directly only on OOD data for CIFAR-100 or SVHN using a model trained on ID data.
We observe that values tend to diverge or stabilize to a higher NC1 value. 
According to \eqref{eq:NC1}, a high value for NC1 suggests a low inter-class covariance with a high intra-class covariance. 
The low inter-covariance of the OOD data clusters may be interpreted as evidence that they are getting grouped together as a single cluster with a mean $\mu_G^{OOD}$, hence supporting our initial hypothesis.\\
Besides this, 
we observe that for the CIFAR-100 as ID data and CIFAR-10 as OOD data pair, the NC1 values are
always relatively lower than the remainder of OOD cases. This is probably due to the fact that the CIFAR-10 label space is included entirely within the CIFAR-100 label space, hence confusing the model due to its similarity.

\begin{figure}[htbp]
\begin{center}
\begin{subfigure}[t]{0.9\linewidth} %
    \includegraphics[width=0.49\linewidth]{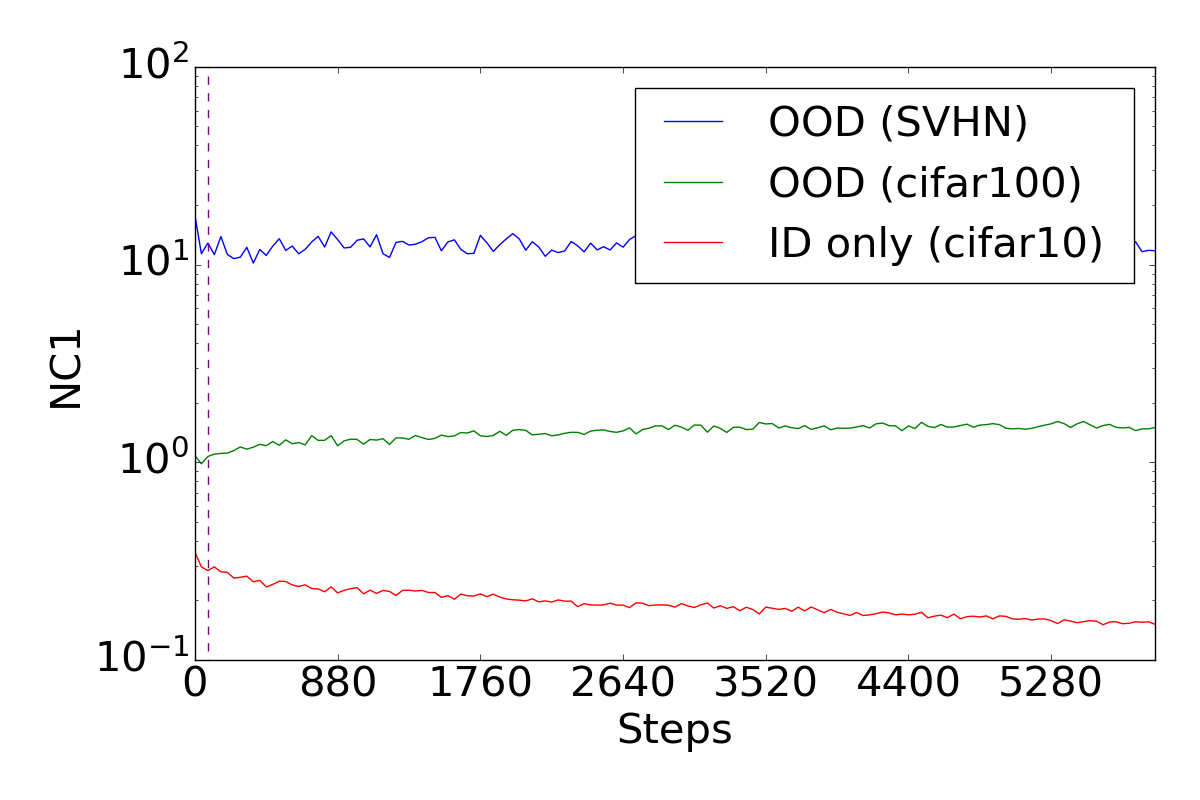}
    \includegraphics[width=0.49\linewidth]{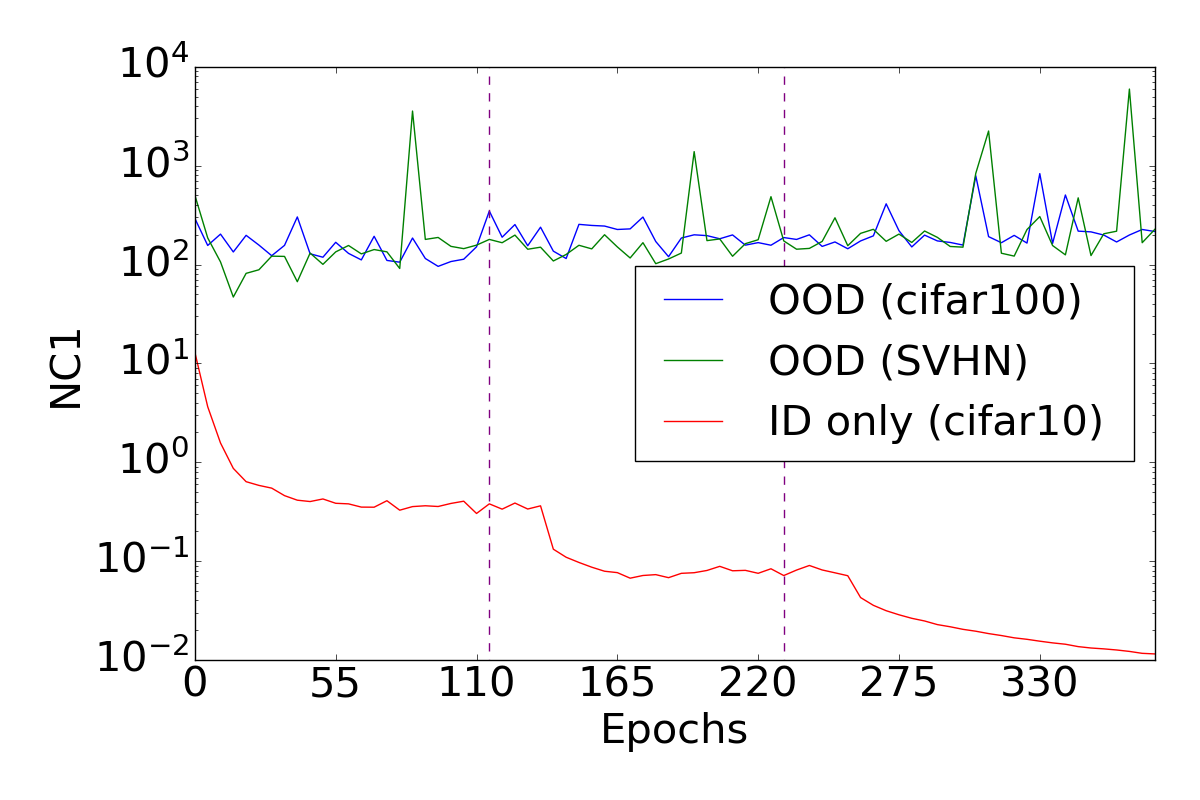}
\vspace{-0.1in}
\caption*{CIFAR-10}    
\end{subfigure}
\begin{subfigure}[t]{0.9\linewidth} %
\begin{subfigure}[t]{0.49\linewidth}
    \includegraphics[width=\linewidth]{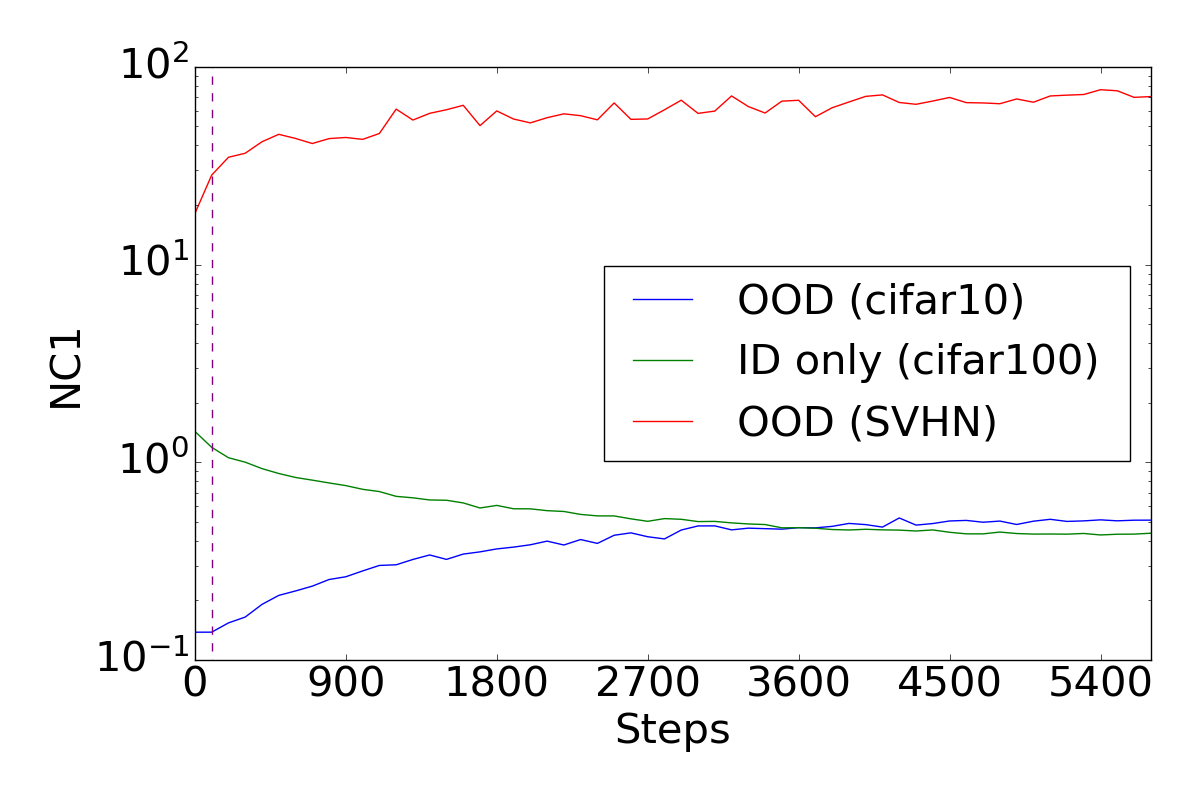}
    \vspace{-0.15in}
    \caption{ViT-B/16}
\end{subfigure}
\begin{subfigure}[t]{0.49\linewidth}
    \includegraphics[width=\linewidth]{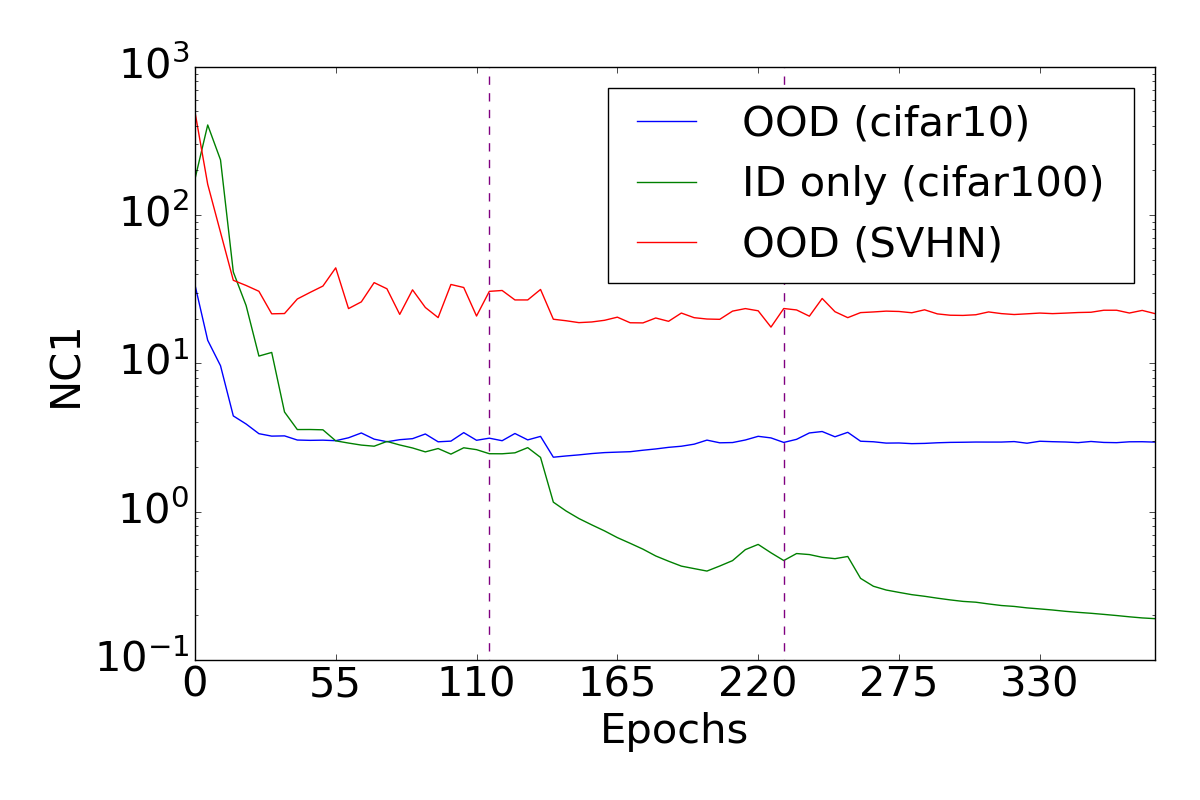}
    \vspace{-0.15in}
    \caption{ResNet-18}
\end{subfigure}
\vspace{-0.1in}
\caption*{\hspace{5mm} CIFAR-100}    
\end{subfigure}
\end{center}
\caption{Convergence towards variability collapse (NC1) for ID/OOD data separately for ViT-B (left),
ResNet-18 (right) both trained on CIFAR-100 as ID. Dashed purple lines indicate the end of warm-up
steps in the case of ViT and learning rate decay epochs for ResNet-18.  }
\label{fig:NC1_CIFAR-100_ID_OOD}
\end{figure}

\paragraph{Convergence of NC2 in presence of OOD.}

In order to assess ``neural collapse'' properties in the presence of OOD data, we investigate the convergence of NC2 equiangularity when the validation set contains OOD data. 
To this end, for the computation of NC2, we consider the OOD as one supplementary class. 
Figure \ref{fig:NC2_CIFAR-10_CIFAR_100_ID_OOD_mix_equiangularity} compares the convergence of NC2 when only ID data are used (in \textcolor{black}{dashed black}), or when OOD are included as extra classes (in \textcolor{blue}{blue} and \textcolor{Green(matplotlib)}{green}).
In all scenarios the NC2 value tends to converge to a plateau below $0.10$. The examples with OOD data follow the same trend as those without OOD, except for experiments with CIFAR-10 as ID data that have higher NC2 values. 
According to NC2, this suggests that the equiangularity property between ID clusters and OOD clusters is respected, thus favoring a good separation between ID and OOD data in terms of NECO scores. 
We are to reinforce this observation in the next paragraph discussing NC5 convergence. 

\begin{figure}[htbp]
\begin{center}
\begin{subfigure}[t]{0.9\linewidth} %
    \includegraphics[width=.49\linewidth]{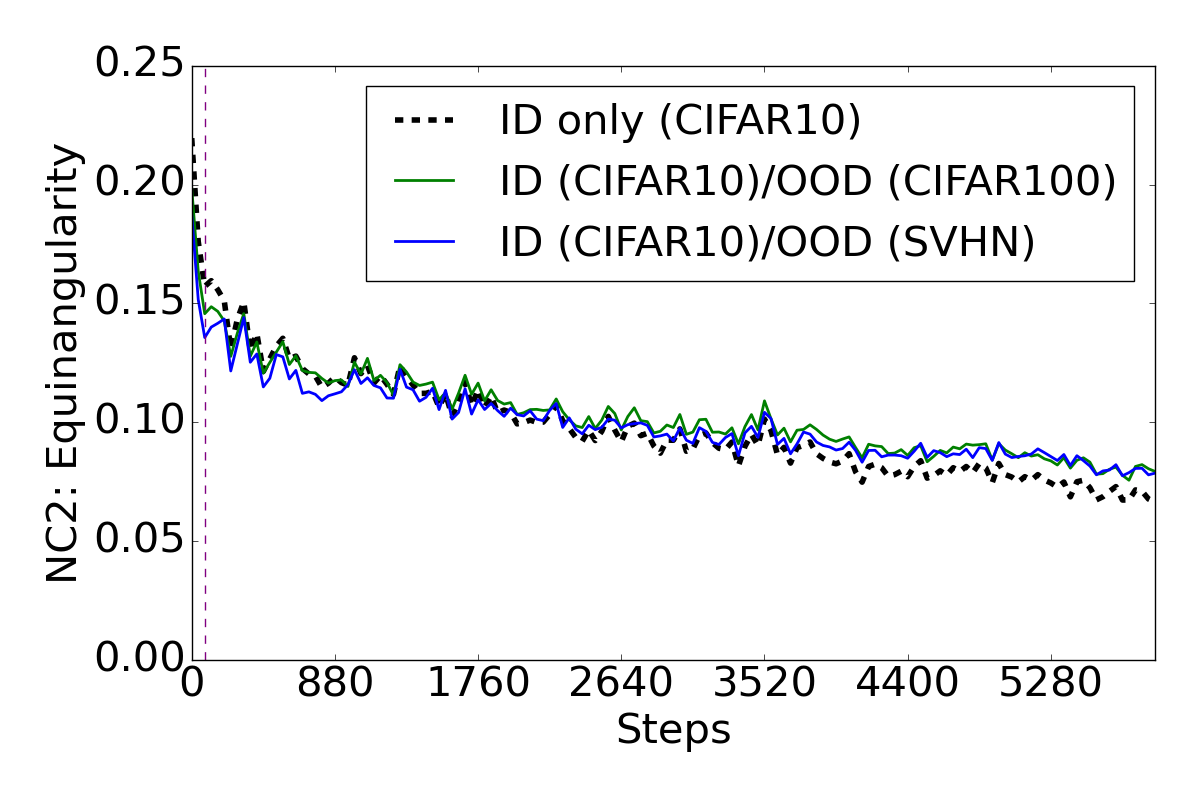}
    \includegraphics[width=.49\linewidth]{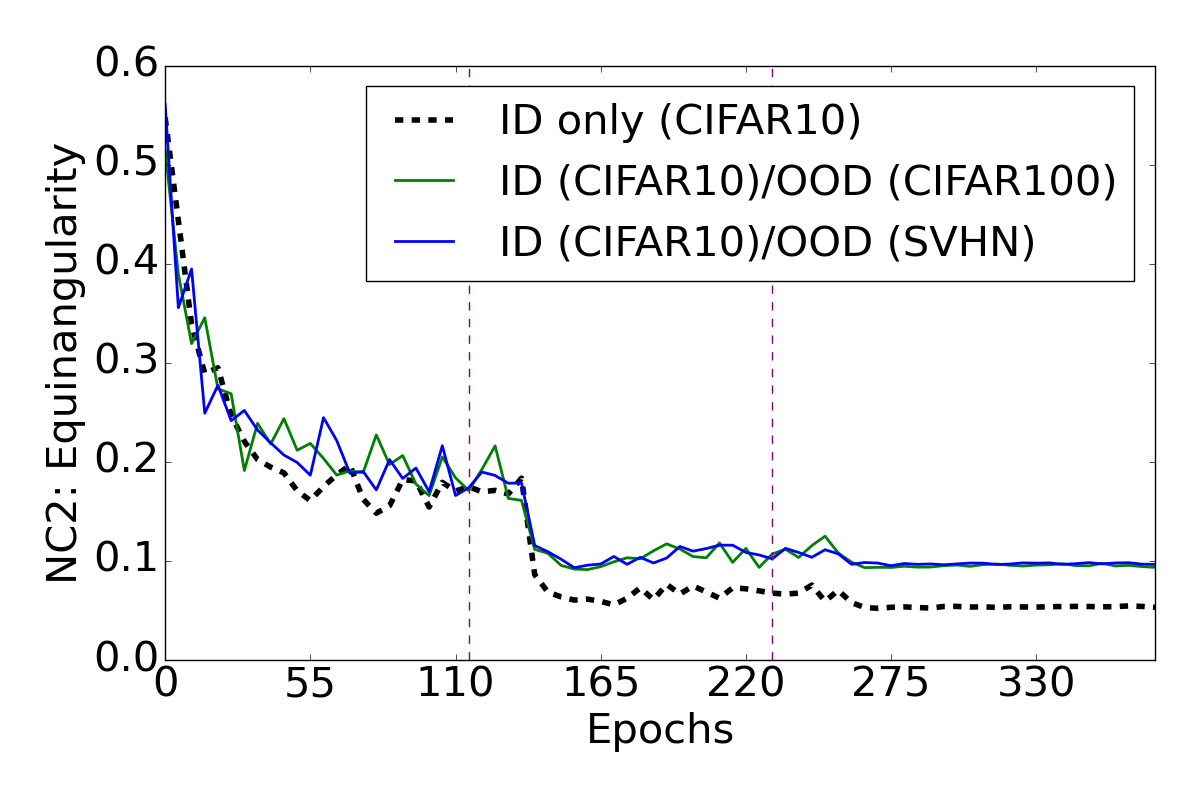}
\vspace{-0.2in}
\caption*{CIFAR-10}    
\end{subfigure}
\begin{subfigure}[t]{0.9\linewidth} %
\begin{subfigure}[t]{0.49\linewidth}
    \includegraphics[width=\linewidth]{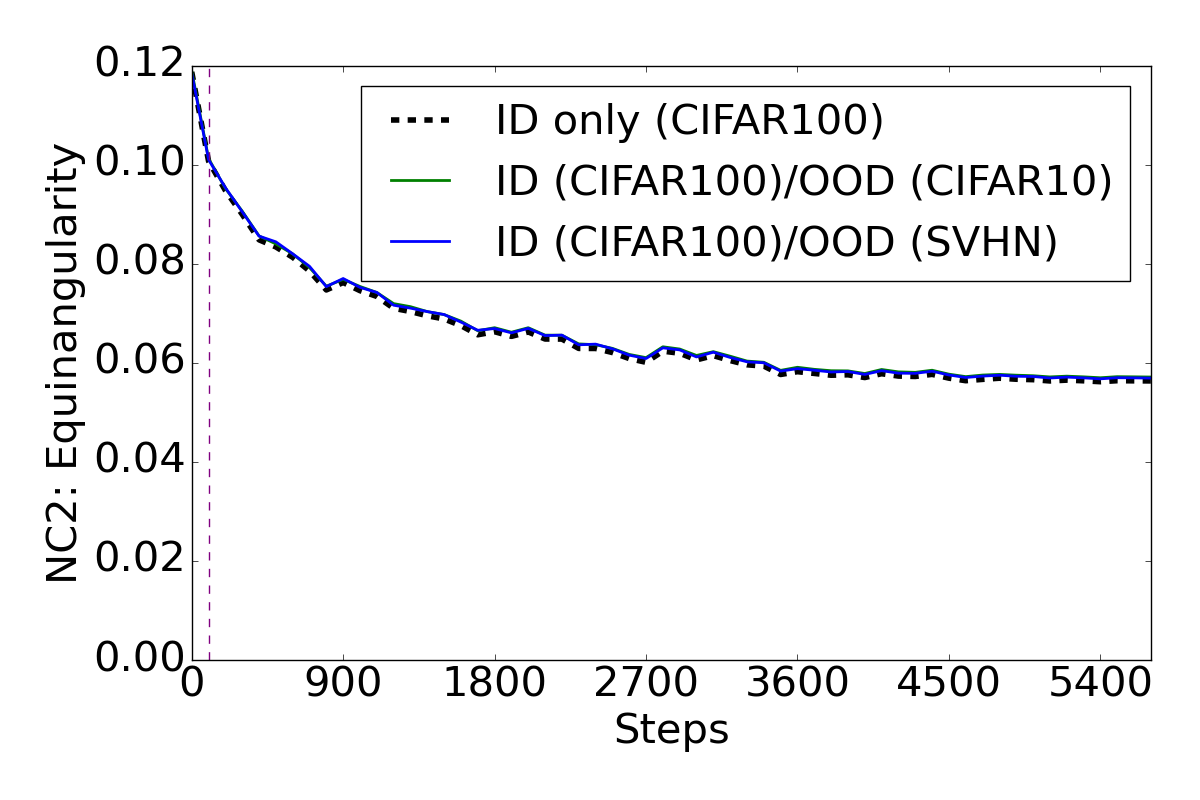}
    \caption{\hspace{5mm} ViT-B/16}
\end{subfigure}
\begin{subfigure}[t]{0.49\linewidth}
    \includegraphics[width=\linewidth]{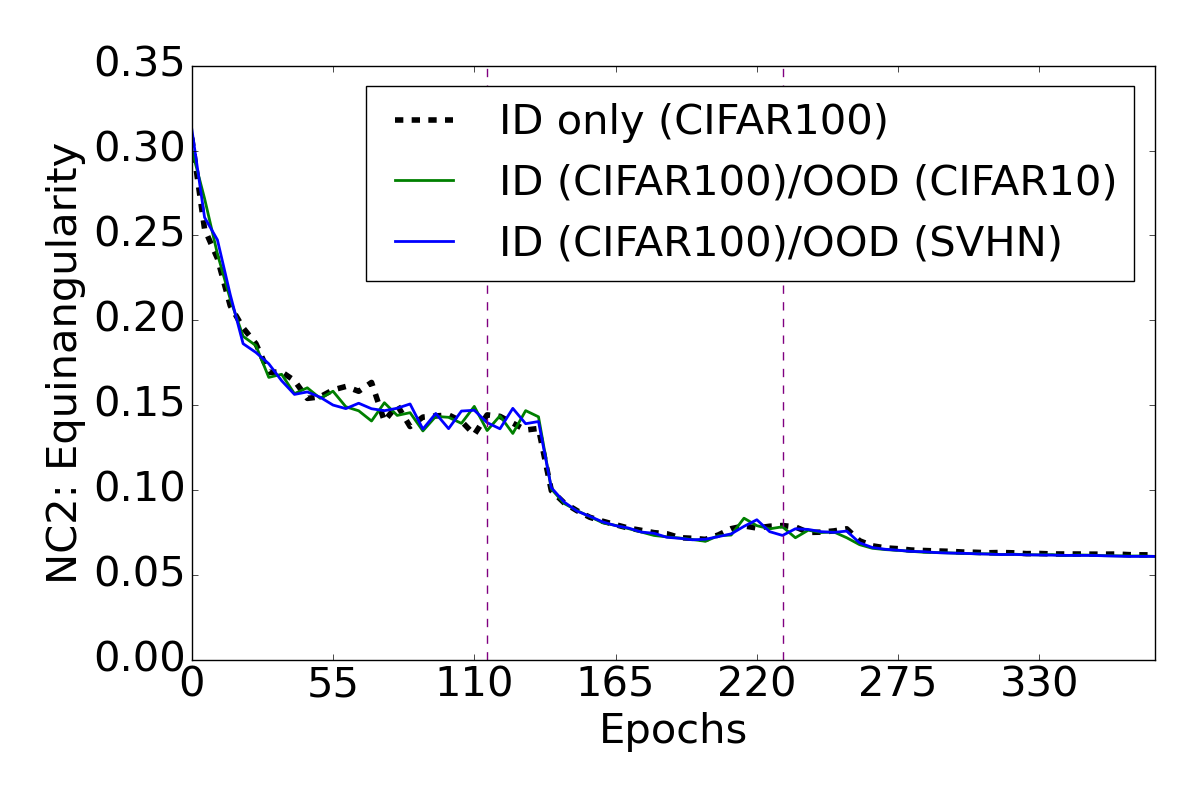}
    \caption{ResNet-18}
\end{subfigure}
\vspace{-0.2in}
\caption*{\hspace{5mm} CIFAR-100}    
\end{subfigure}
\end{center}
\caption{Convergence of NC2 for the ID/OOD Equiangularity --- in the presence of OOD data--- for ViT-B (left), ResNet-18 (right) both trained on CIFAR-10 as ID. Dashed purple lines indicate the end of warm-up steps in the case of ViT and learning rate decay epochs for ResNet-18.  }
\label{fig:NC2_CIFAR-10_CIFAR_100_ID_OOD_mix_equiangularity}
\end{figure} 
As for the second part of NC2, the equinormality property, its convergence on the mix of ID/OOD data is not relevant to OOD detection. Since if NC5 is verified, the norm of the OOD data will be reduced to zero in the NECO score. However, in order to guarantee an unbiased score towards any ID class, NC2 equinormality needs to be verified on the set of ID data. Since if an ID data cluster has a smaller norm than the others, it is more likely that it will be mistaken as OOD, since its closer to the null vector. Figure \ref{fig:NC2_CIFAR-10_CIFAR_100_ID_OOD_mix_equinormality} shows the convergence of this property on our ID data, hence guaranteeing an unbiased score.

\begin{figure}[htbp]
\begin{center}
\begin{subfigure}[t]{0.9\linewidth} %
    \includegraphics[width=.49\linewidth]{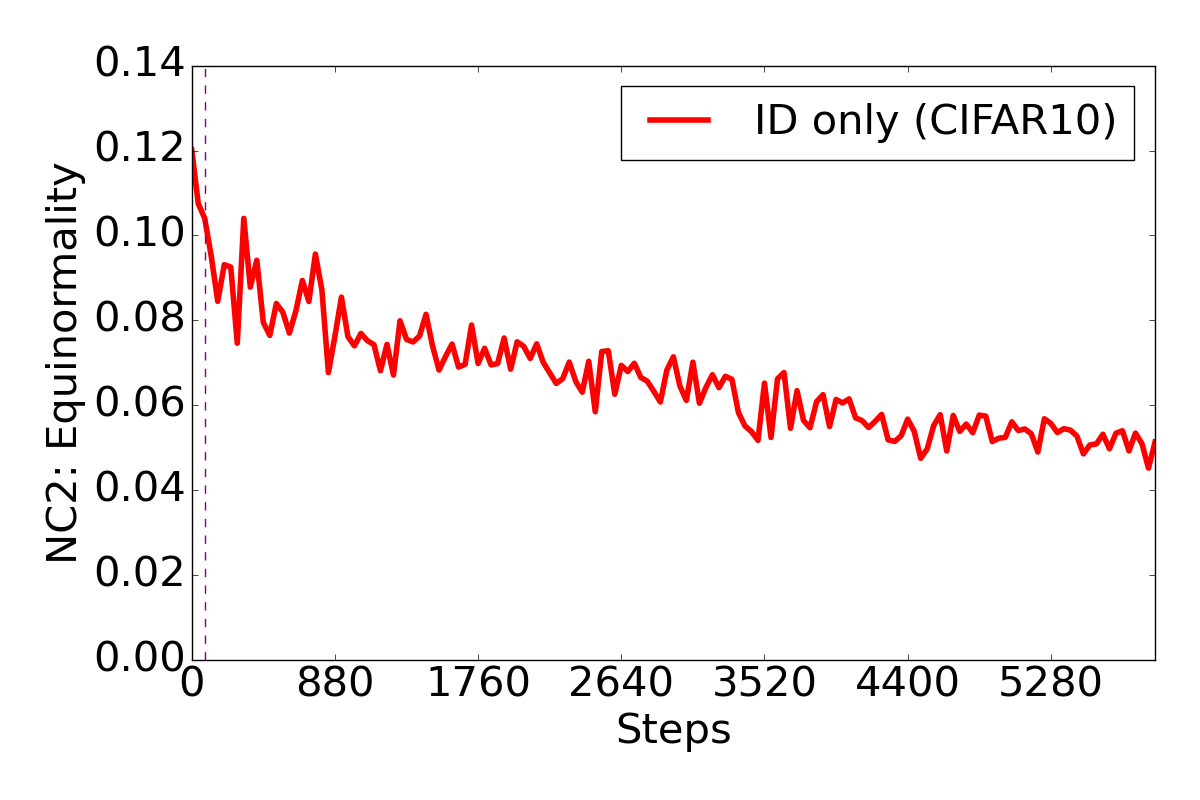}
    \includegraphics[width=.49\linewidth]{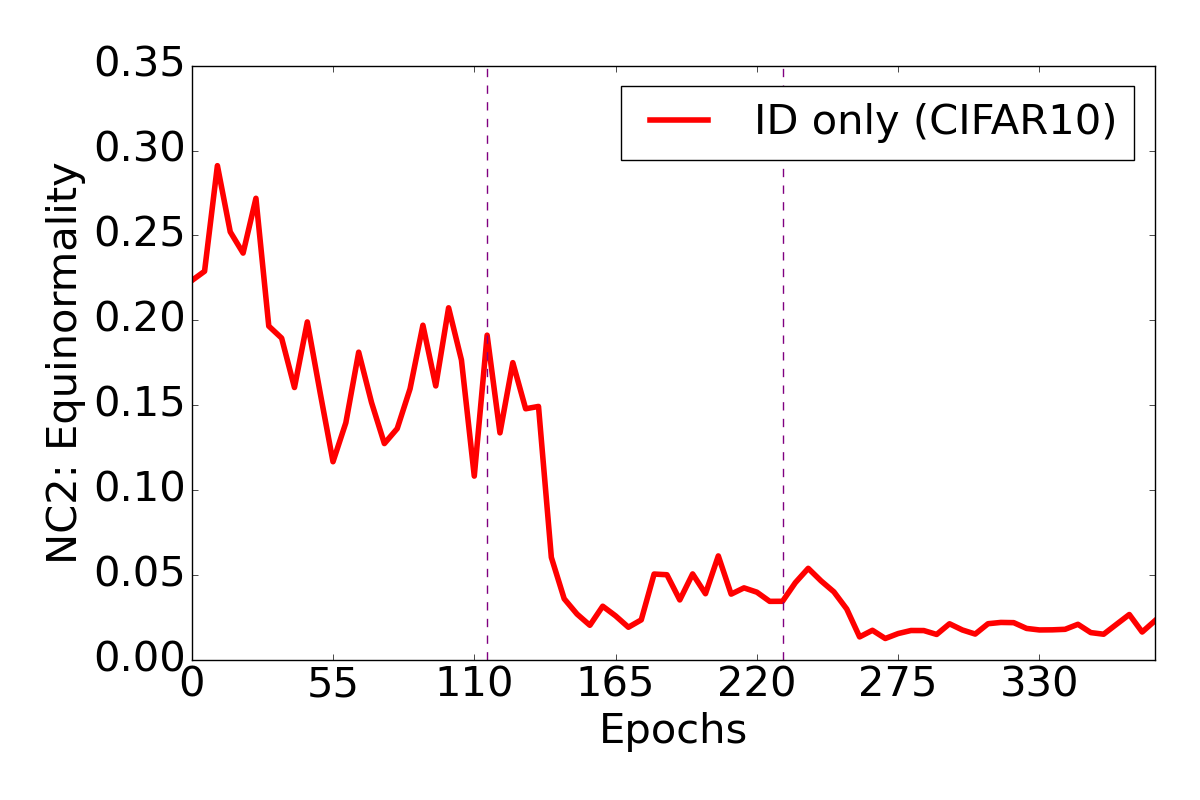}
\vspace{-0.2in}
\caption*{CIFAR-10}    
\end{subfigure}
\begin{subfigure}[t]{0.9\linewidth} %
\begin{subfigure}[t]{0.49\linewidth}
    \includegraphics[width=\linewidth]{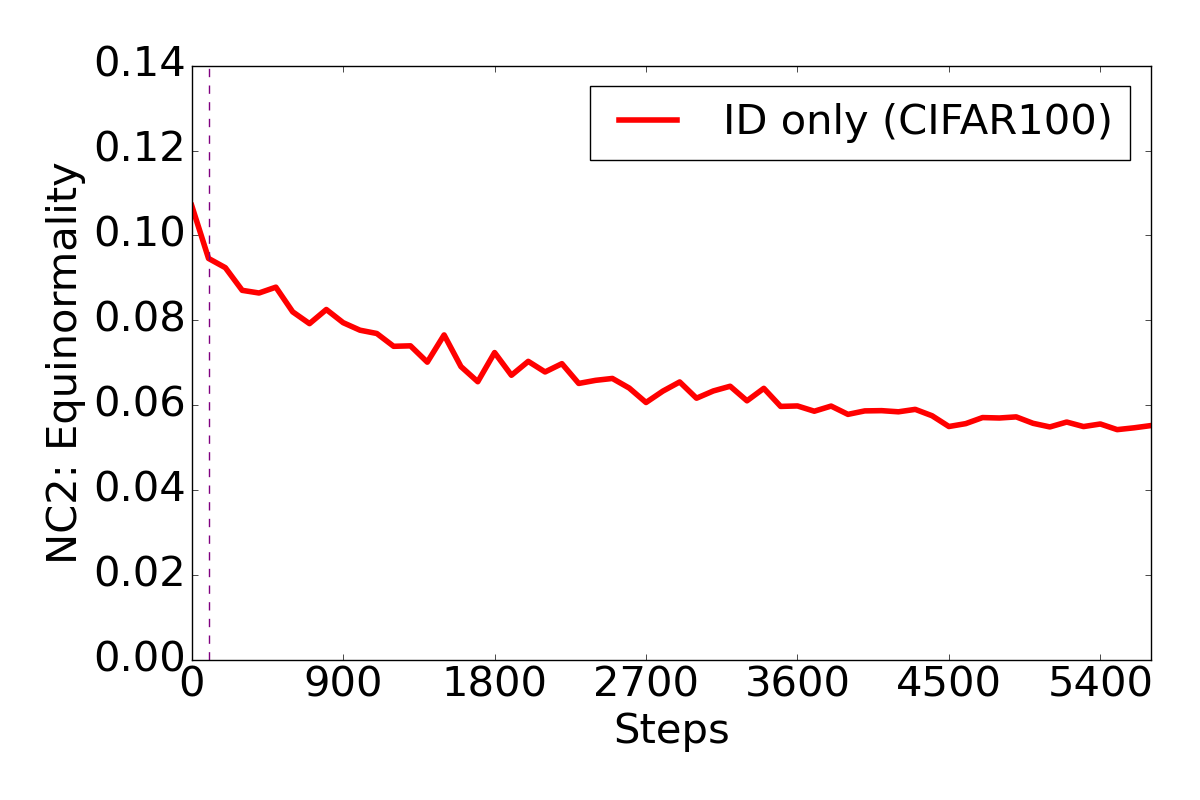}
    \caption{\hspace{5mm} ViT-B/16}
\end{subfigure}
\begin{subfigure}[t]{0.49\linewidth}
    \includegraphics[width=\linewidth]{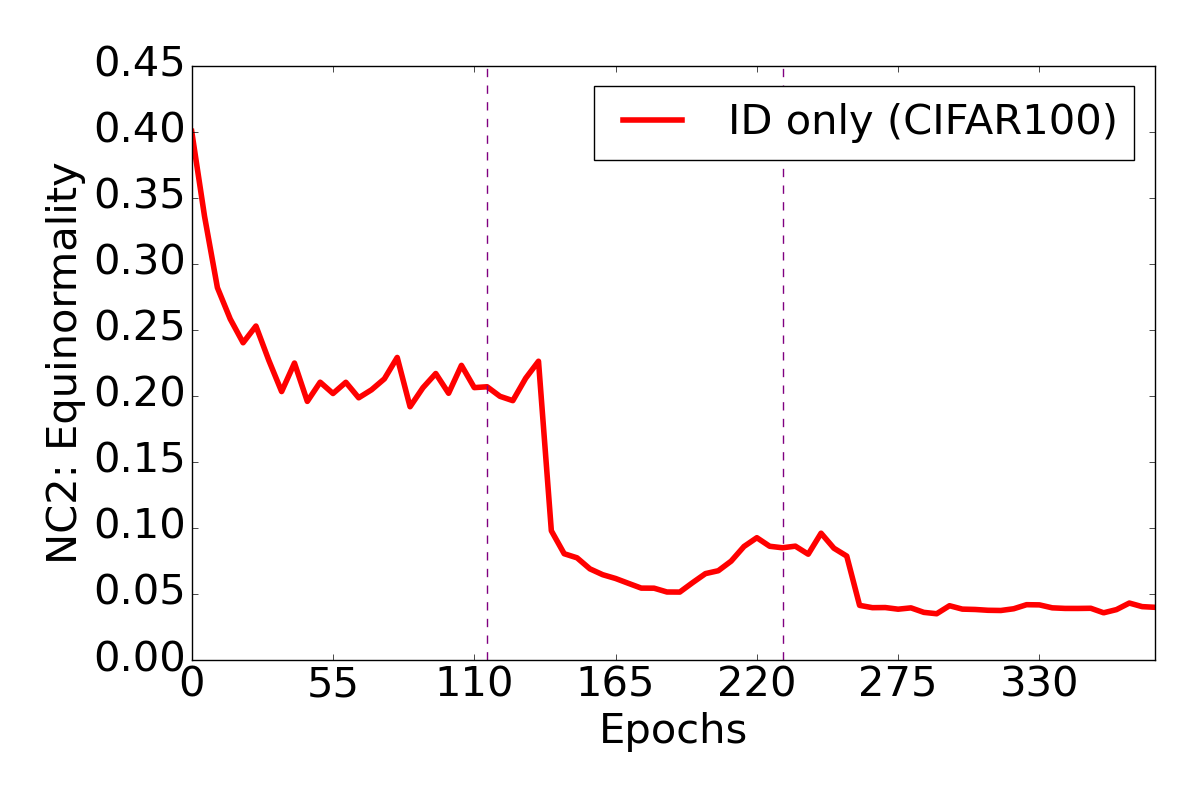}
    \caption{ResNet-18}
\end{subfigure}
\vspace{-0.2in}
\caption*{\hspace{5mm} CIFAR-100}    
\end{subfigure}
\end{center}
\caption{Convergence of NC2 equinormality on ID data  for ViT-B (left), ResNet-18 (right) both trained on CIFAR-10 as ID. Dashed purple lines indicate the end of warm-up steps in the case of ViT and learning rate decay epochs for ResNet-18.  }
\label{fig:NC2_CIFAR-10_CIFAR_100_ID_OOD_mix_equinormality}
\end{figure} 

\paragraph{Convergence of NC5.}
 Figure \ref{fig:NC5_CIFAR-10_CIFAR-100} illustrates the convergence of NC5 when CIFAR-10 or CIFAR-100 are used as ID datasets. In all cases, we observe for both models ( \ie, ViT-B/16 and ResNet-18) very low values for NC5 (Equation \ref{eq:NCOOD5}). Hence, according to our hypothesis, the models tend to maximize the ID/OOD orthogonality as we reach the TPT. 
While all cases exhibit convergence of the OrthoDev, we observe that ViT tends to converge to a much lower value than ResNet-18. Hence, this may explain the performance gap between the ViT-B/16 and the ResNet-18 models.\\ \\
In Section \ref{sec:xp_results} we empirically demonstrate that ViT-B/16 (pre-trained on ImageNet-21K) with NECO outperfoms all baseline methods. 
In Figure \ref{fig:NC5_imagenet} we illustrate the NC5 OrthoDev during the fine-tuning process on ImageNet-1K against different OOD datasets (see Section \ref{sec:xp_results} for the dataset details).
Despite a very low initial value ($2\times 10^{-1}$ for the worst example), we observe that NC5 tends to slightly decrease. We believe that the low initial value is due to the pretraining on the ImageNet-21K dataset, hence favoring a premature TPT in the fine-tuning phase.\\ \\

\begin{figure}[htbp]
\begin{center}
\begin{subfigure}[t]{0.9\linewidth} %
    \includegraphics[width=.49\linewidth]{figs/ID_cifar10_vit_ood_k_plus_one_NC5_log_scale_False.png}
    \includegraphics[width=.49\linewidth]{figs/ID_cifar10_ood_k_plus_one_NC5_log_scale_False.png}
\vspace{-0.2in}
\caption*{CIFAR-10}    
\end{subfigure}
\hfill
\begin{subfigure}[t]{0.9\linewidth} %
\begin{subfigure}[t]{0.49\linewidth}
    \includegraphics[width=\linewidth]{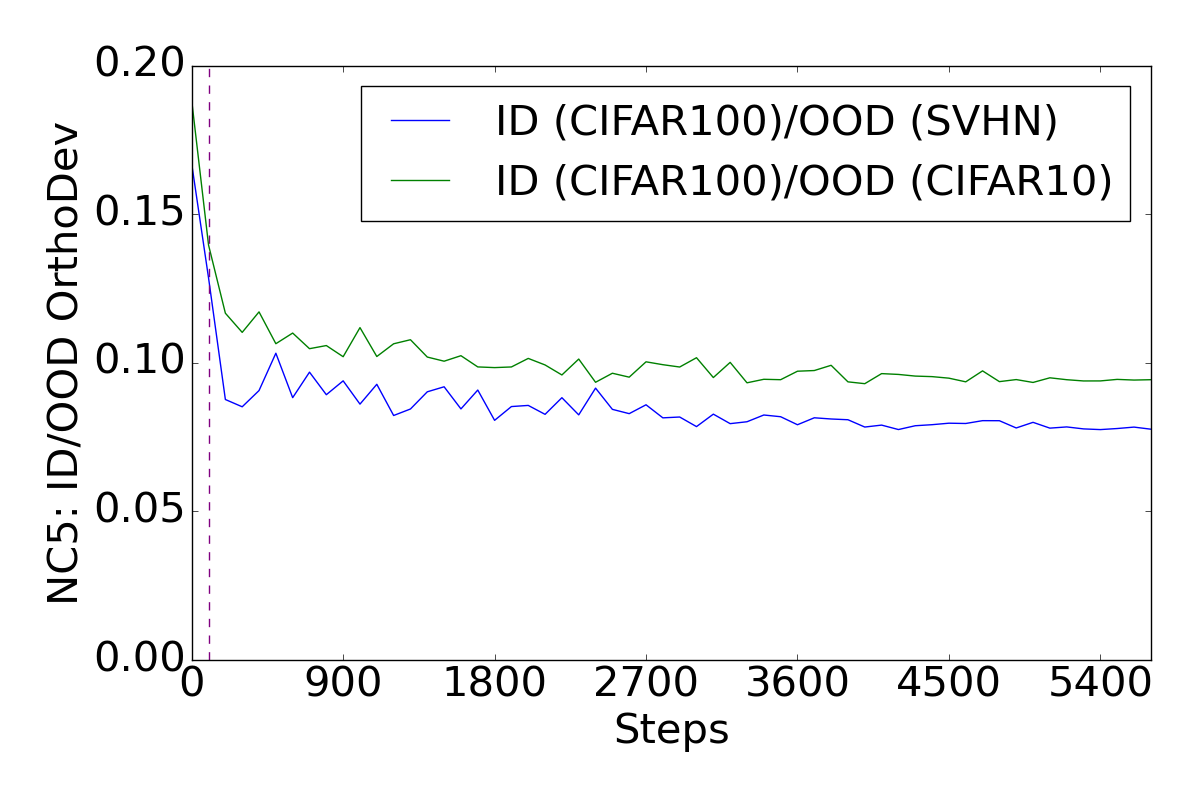}
    \caption{ViT-B/16}
\end{subfigure}
\begin{subfigure}[t]{0.49\linewidth}
    \includegraphics[width=\linewidth]{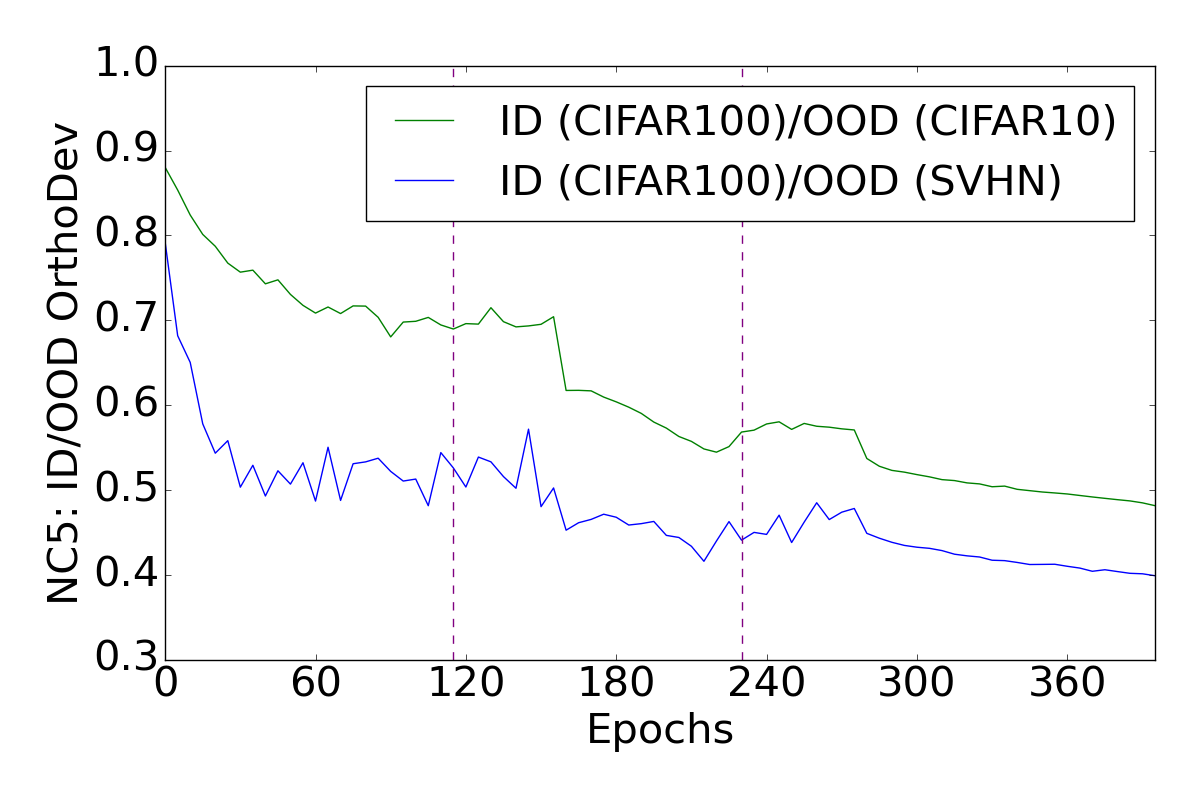}
    \caption{ResNet-18}
\end{subfigure}
\vspace{-0.2in}
\caption*{CIFAR-100}    
\end{subfigure}
\end{center}
\caption{Convergence of NC5 for the ID/OOD orthogonality---   in the presence of OOD data ---  for ViT-B (left), ResNet-18 (right) both trained on CIFAR-10 (Top) and CIFAR-100 (Bottom) as ID. Dashed purple lines indicate the end of warm-up steps in the case of ViT and learning rate decay epochs for ResNet-18.}
\label{fig:NC5_CIFAR-10_CIFAR-100}
\end{figure} 

\begin{figure}[!ht]
\begin{center}
\includegraphics[width=.5\linewidth]{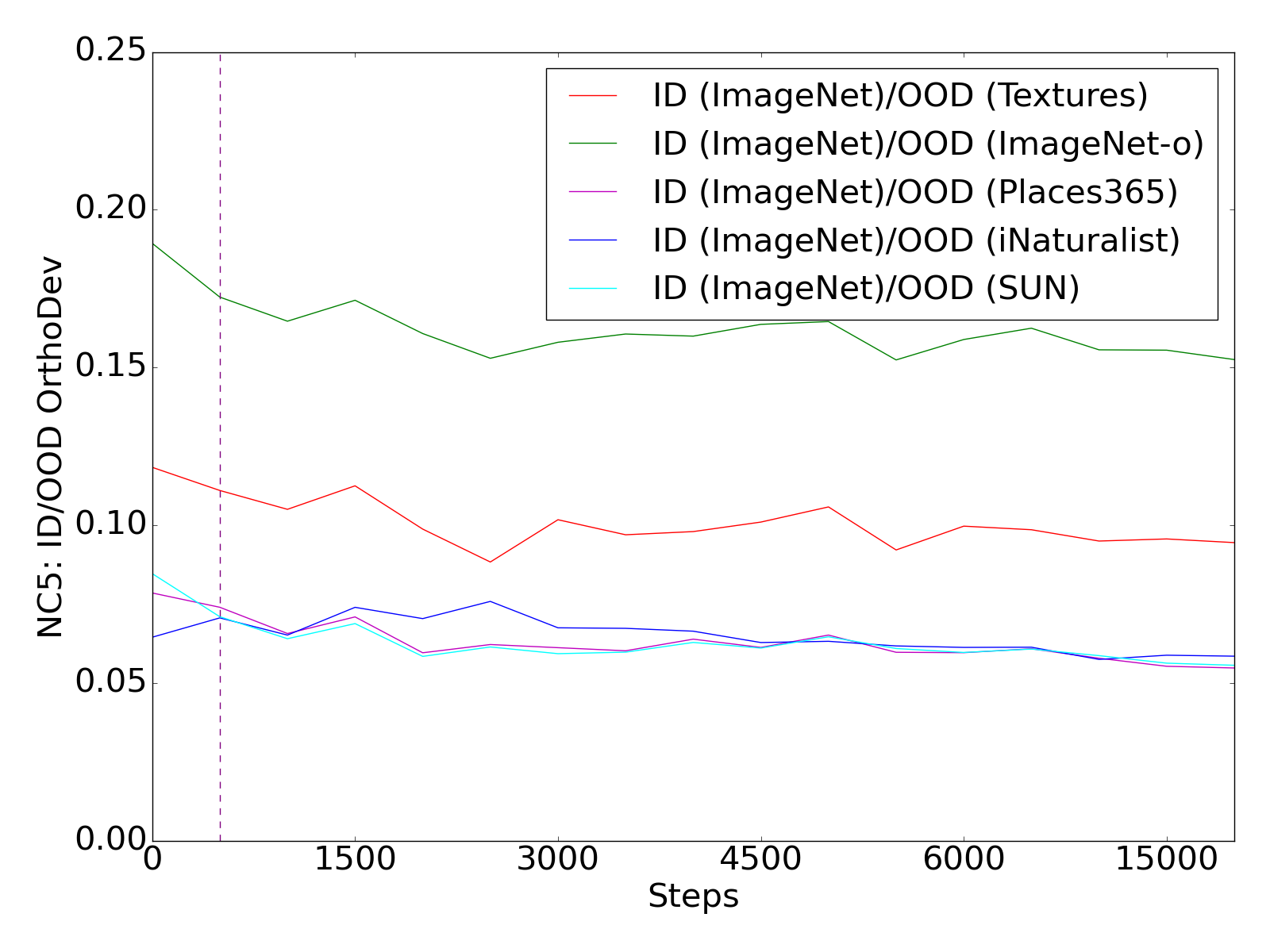}
\end{center}
\caption{Convergence towards OrthoDev minimization, for a ViT-B/16 model pre-trained on ImageNet-1K and finetuned on ImageNet-1k. Dashed purple lines indicate the end of the warm-up steps.}
\label{fig:NC5_imagenet}
\end{figure} 

In Figure \ref{fig:from_scratch_NC}, We investigate the neural collapse when the DNN is trained without pre-trained weights, and show the convergence of the NC1, NC2 and NC5.
We observe that the curves tend to be steeper than experiments with pre-trained weights, reinforcing the hypothesis about the contribution of the pretraining towards reaching the TPT faster.
While we are able to reach reasonable performance in the OOD cases, we observe that the NC5 value is slightly higher than experiments with pretrained weights, suggesting their contributions in the neural collapse and its properties for the OOD detection. We hypothesize that this is due to the rich information contained in the pre-trained weights that indirectly contributes to the separation in the ETF of the ID from the OOD.
    
\begin{figure}[htbp]
\begin{center}
\begin{subfigure}[t]{0.32\linewidth}\includegraphics[width=\linewidth]{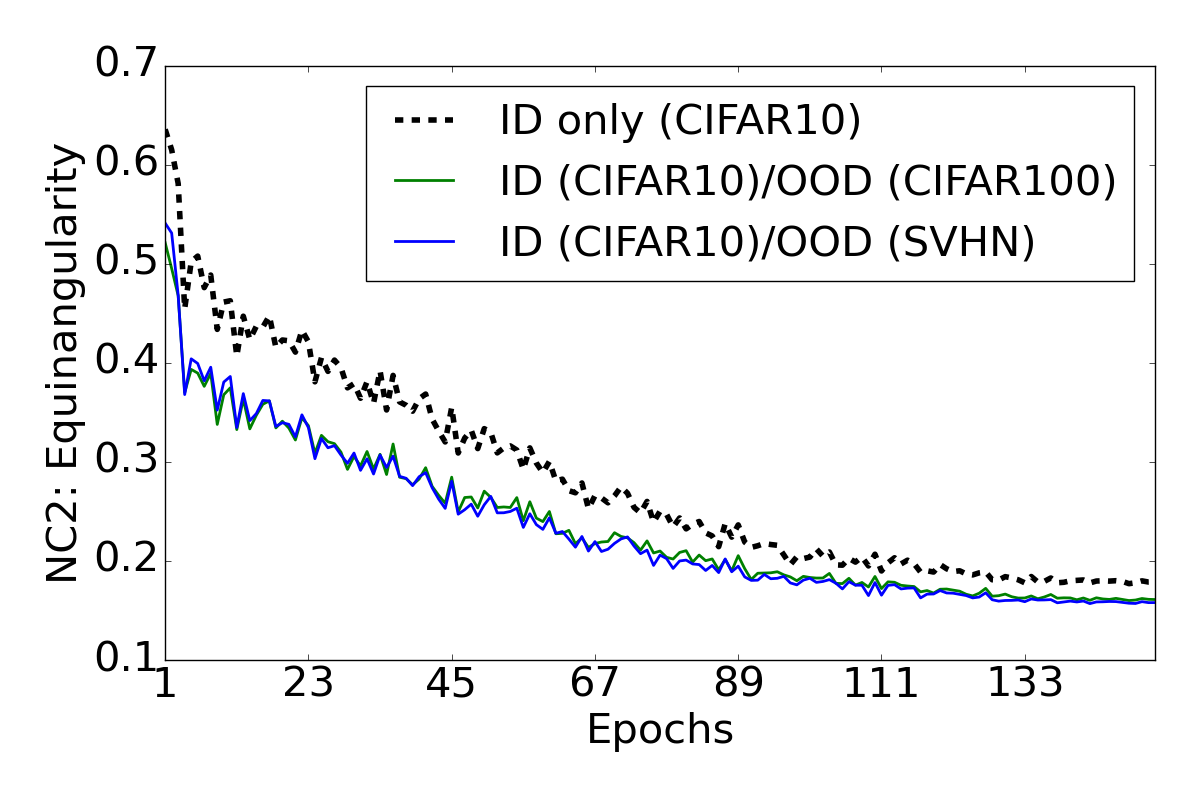}\caption{NC2}\end{subfigure}
\begin{subfigure}[t]{0.32\linewidth}\includegraphics[width=\linewidth]{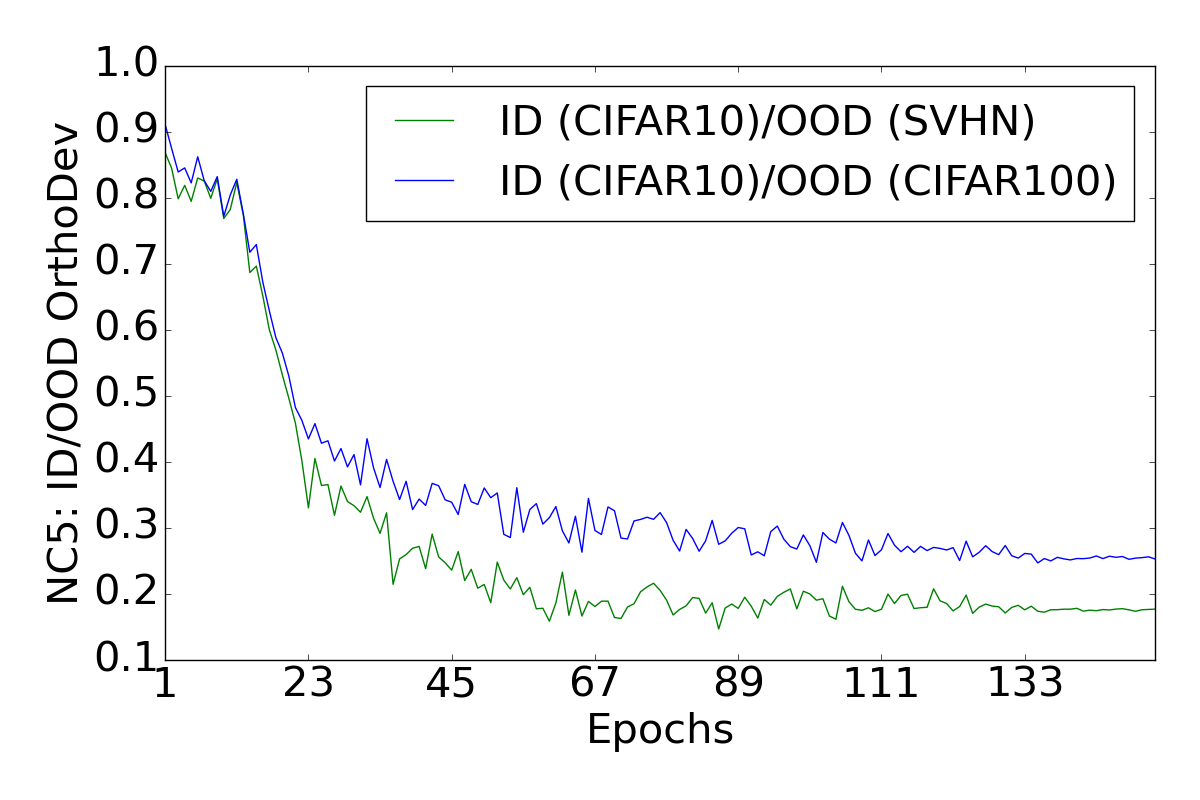}\caption{NC5}\end{subfigure}
\end{center}
\caption{Convergence of variability collapse (NC1), ID/OOD Equiangularity (NC2), and  ID/OOD orthogonality (NC5) in the presence of OOD data. A ViT-B/16 is trained (150 epochs) without pre-training on CIFAR-10 , with 
 Top 1\% accuracy=72.07\%. OOD Performance on CIFAR-10/CIFAR-100: 95.96\% \auc and 13.86\% \fpr ; on CIFAR-10 vs SVHN:\auc=99.15\%  and  \fpr=3.77\% }
\label{fig:from_scratch_NC}
\end{figure}

\paragraph{Complementary Feature Projection on Various OOD Datasets}
Finally, to further demonstrate the broad pertinence of NECO score, Figure \ref{fig:NC_projections} shows features projection of ID (CIFAR-10) and OOD (CIFAR-100, iNaturalist, SUN, Places365) datasets on the first two principal components of a PCA, fitted on CIFAR-10 ID data. Each class of the ID dataset is colored and the OOD datasets are in gray. As a result of the convergence of the NC1 property, notice how the ID classes are well clustered despite the low dimensional representation. Conversely, the OOD samples are all centered at the null vector position. This suggests that the NECO score for all these cases will be highly separable between ID and OOD data, as we are to discuss in Appendix \ref{detail_neco_performance}.

\begin{figure}[tbp]
\begin{center}
\begin{subfigure}[t]{0.4\linewidth}
    \includegraphics[width=\linewidth]{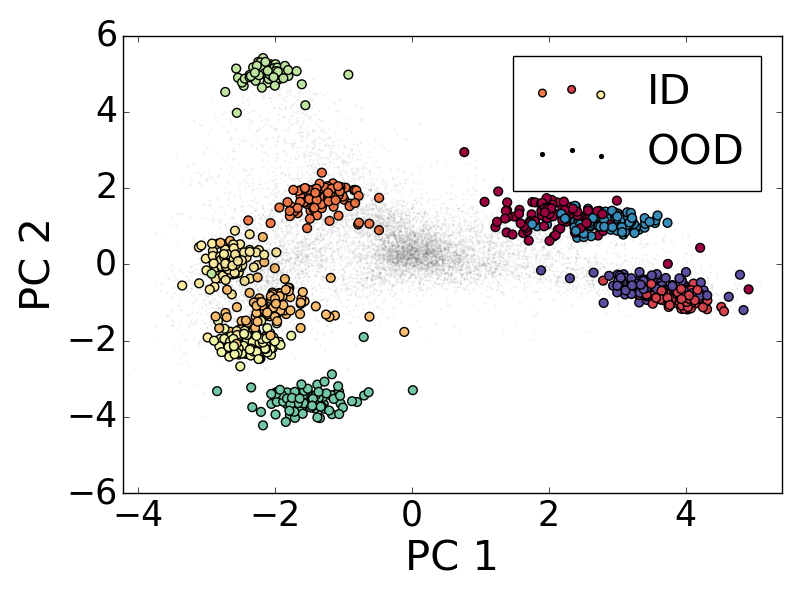}
    \vspace{-.2in}
    \caption{CIFAR-100 }
\end{subfigure}
\begin{subfigure}[t]{0.40\linewidth}
    \includegraphics[width=\linewidth]{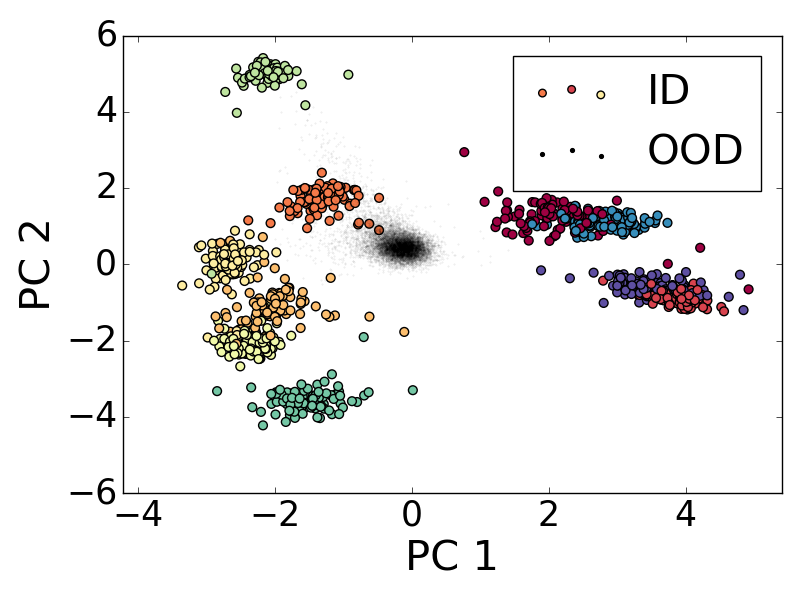}
    \vspace{-.2in}
    \caption{iNaturalist }
\end{subfigure}
\\
\begin{subfigure}[t]{0.4\linewidth}
    \includegraphics[width=\linewidth]{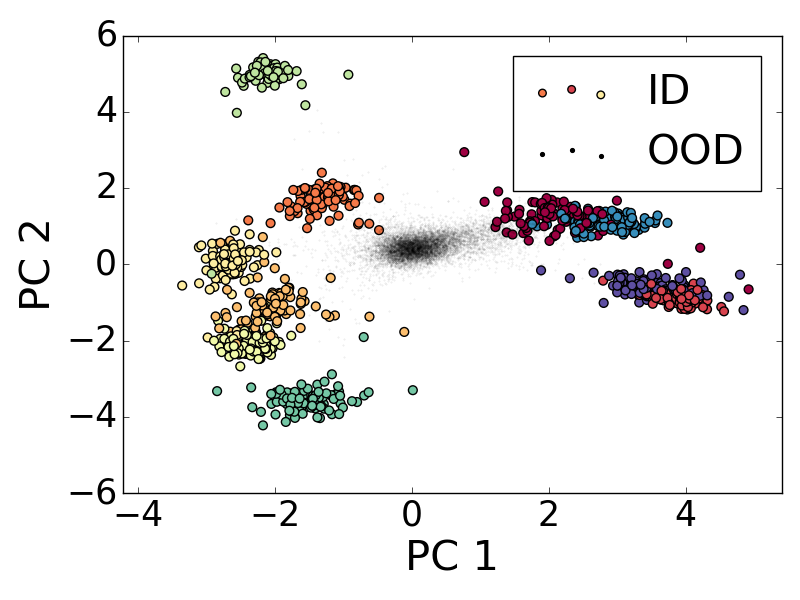}
    \vspace{-.2in}
    \caption{SUN }
\end{subfigure}
\begin{subfigure}[t]{0.4\linewidth}
    \includegraphics[width=\linewidth]{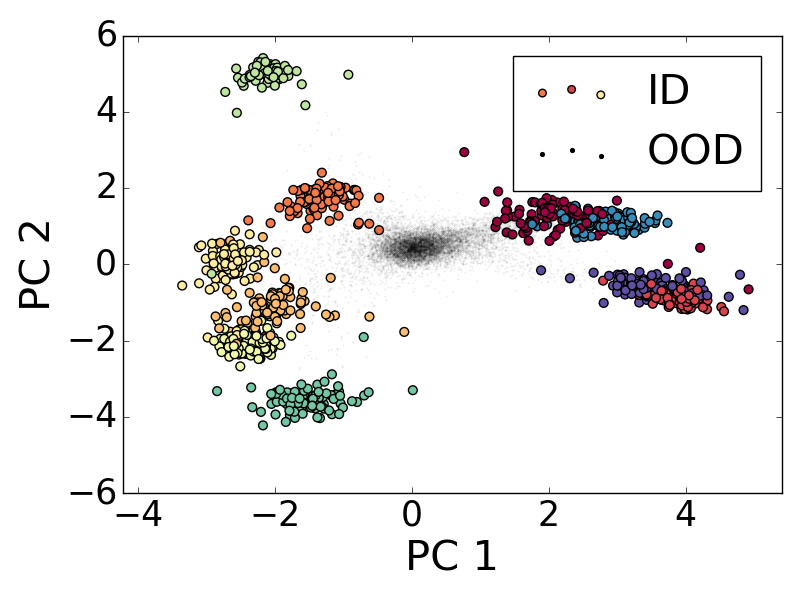}
    \vspace{-.2in}
    \caption{Place365 }
\end{subfigure}
\end{center}
\caption{%
Projections on the first two principal components of a PCA fitted on CIFAR-10 using the ViT penultimate
layer representation. The colored points indicate the ID dataset classes while the gray points are the OOD.}
\label{fig:NC_projections}
\end{figure}

\section{Complementary Results}
\label{sec:complementary}
In this section, we present additional results that extend our findings, with either complementary DNN architectures or OOD benchmarks. 
The intent is to gain a better understanding of the versatility of our proposed approach, \textbf{NECO}, against different cases.

\paragraph{NECO with DeiT and ResNet-50.} Table \ref{tab:id_imagenet_deit} presents the results for NECO applied on the DeiT model fine-tuned on ImageNet-1K and evaluated on different datasets against baseline methods, as well as ResNet-50. 
The best performances are highlighted in bold. 
Contrary to the results with ViT, we observe that on average NECO with DeiT only surpasses the baselines in terms of \fpr. Whilst on ResNet-50, NECO ranks as the third best method.

\begin{table}[htbp]
\begin{center}
\resizebox{0.9\textwidth}{!}{%
\begin{tabular}{cccccccc}
\hline
Model & Method 
&\begin{tabular}{@{}c@{}} \textbf{ImageNet-O}\\AUROC$\uparrow$ FPR$\downarrow$\end{tabular} 
&\begin{tabular}{@{}c@{}} \textbf{Textures}\\AUROC$\uparrow$ FPR$\downarrow$\end{tabular} 
&\begin{tabular}{@{}c@{}} \textbf{iNaturalist}\\AUROC$\uparrow$ FPR$\downarrow$\end{tabular} 
&\begin{tabular}{@{}c@{}} \textbf{SUN}\\AUROC$\uparrow$ FPR$\downarrow$\end{tabular} 
&\begin{tabular}{@{}c@{}} \textbf{Places365}\\AUROC$\uparrow$ FPR$\downarrow$\end{tabular}
&\begin{tabular}{@{}c@{}} \textbf{Average}\\AUROC$\uparrow$ FPR$\downarrow$\end{tabular}\\
\hline
\multirow{13}{*}{DeiT-B/16}
& Softmax score & 63.50 87.70  & 81.56 65.18  &88.44 51.79 &  81.18 67.64 & \textbf{80.57} 69.68 & 79.05 68.39 \\
& MaxLogit  & 61.30 85.05  &  80.32 62.15 & 85.28 52.99 &  76.70 67.53 &75.74 69.88 &   75.86 67.52\\   
& Energy  & 60.59 83.80 &   77.64 65.27 & 78.40 65.52&  71.23 74.99 &  69.70 76.53 &   71.51 73.22 \\ 
& Energy+ReAct  & 64.24 \textbf{82.30}  & 80.25 63.94  & 84.05 59.69 &   77.19 69.54 & 75.78 71.41 &   76.30 69.37 \\
& ViM  & 73.93 89.20 &  78.62 85.67 & 88.75 71.68&   77.70 82.60& 76.66 80.04 &  79.13 81.83 \\
& Residual  &  72.80 92.15 & 75.37 89.75  & 87.08 77.99 & 75.67 86.33   &  74.99 83.51 & 77.18 85.95    \\
& GradNorm &  32.79 98.60&  38.93 93.85 & 27.47 98.78 &  31.21 98.13 & 30.55 98.39 &  32.19 97.55 \\
& Mahalanobis  & \textbf{75.37} 90.90 & 81.04 82.45 & \textbf{90.32} 67.52 &  80.95 81.39  & 80.21 78.80 &  \textbf{81.57} 80.21 \\
& KL Matching  & 68.36 84.25 & \textbf{82.20} 64.63 & 89.44 51.81 & \textbf{82.09} 73.20   & 81.09 74.94 &  80.63 69.76 \\
& ASH-B  &  46.86 94.05  & 60.22 85.50  & 46.24 95.00  &    35.76 97.45 &  35.39 96.91 & 44.89 93.78  \\
& ASH-P  & 32.21 99.25  &  24.71 99.06  &14.55 99.83  &   22.36 98.98  &  23.41 99.21 & 23.45  99.27  \\
& ASH-S  &   37.90 95.65 &   32.85 97.94  &   19.77 99.65&   24.98 97.97 &  27.13 97.74 &  28.53 97.79   \\
 & NuSA & 54.65 94.85 &53.96 95.46 &  72.47 85.31& 50.12 96.28& 50.74 95.47 & 56.39 93.47  \\
& \textbf{NECO (ours) } & 62.72 84.10  & 80.82 \textbf{59.47}  & 89.54 \textbf{42.26} &  77.64 \textbf{65.55} & 76.48 \textbf{68.13}  &  77.44 \textbf{63.90} \\
\hline
\multirow{13}{*}{ResNet-50}& Softmax score & 53.13 95.65 & 77.12 71.28 & 84.40 60.95  &78.74 \textbf{71.03} & 76.66 \textbf{73.41}  & 77.01 74.46   \\

 & MaxLogit  & 51.19 94.95  & 71.26 74.40 & 79.88 68.52  & 74.05 75.47 &  71.30 77.42 & 69.54 78.15   \\
  & Energy  &48.23 92.80  & 49.29 95.64 & 50.85 98.14  & 50.13 97.93 & 48.90 97.76  & 49.48 96.45  \\
   & Energy+ReAct  & 36.17 99.05 & 40.14 97.67  & 30.09 99.98  & 28.53 99.79   &   27.81 99.80&  32.55 99.26 \\
   
      & ViM  & \textbf{72.34} \textbf{81.75}  &  84.00 58.57  & 84.20 61.42  & 66.23 90.24 &  63.92 90.96 & 74.31 76.59   \\
       & Residual &  71.90 82.55&83.31 61.92  & 83.13 65.02  &65.54 91.64 &  63.54 92.37 & 73.48 78.70 \\
        & GradNorm  & 39.03 99.00 & 37.64 98.22 & 30.40 99.98   & 29.08 99.79 & 28.53 99.73 & 32.94  99.34\\
      & Mahalanobis  & 73.35 85.40 &  87.54 53.74 & \textbf{90.73} \textbf{49.18}  & 76.60 83.34 & 74.84 84.15  & \underline{80.61} \underline{71.16 } \\
      & KL-Matching & 67.55 90.45  & 86.11 63.93 & 89.51 50.04  & \textbf{81.23} 73.89 &\textbf{ 79.28} 75.98  &  \textbf{80.74} \textbf{70.86} \\
     &  RankFeat \cite{song2022rankfeat} & 51.15 97.01   &82.19 74.45 &  74.45  90.85& 54.69 95.93&  47.92 97.38  &62.08 91.12  \\
    & ASH-B   & 43.14 99.40 &46.57 98.37  &  36.24 99.99  &   32.01 99.55&  29.79 99.75&  37.55 99.41\\
    & ASH-P   & 52.93 95.10 & 50.40 96.38  &  57.54 95.64 & 58.91 95.30 & 59.59 93.66  & 55.87  95.22 \\
    & ASH-S   &  51.08 93.50 & 70.78 73.35 &  79.57 68.11  & 73.26 75.44 & 70.21 78.27  & 68.98 77.73  \\
    & NuSA  & 66.06 89.45  & 80.86 64.17&  44.60 98.24 &52.66 97.97 &   51.59 98.44 & 59.15  89.65\\
       &  \textbf{NECO  (ours) }&69.80 86.43  &  \textbf{88.09} \textbf{51.53}&   87.94 62.69& 75.56 77.55 & 73.07 78.62  & \underline{78.89}  \underline{71.36}  \\
\end{tabular}}
\end{center}
\caption{OOD detection for NECO and baseline methods. The ID dataset is ImageNet-1K, and OOD datasets are  Textures, ImageNet-O, iNaturalist, SUN, and Places365. Both metrics \auc\ and FPR95 are in percentage. A pre-trained DeiT-B/16 and a ResNet-50 models tested. The best method is emphasized in \textbf{bold}.
\label{tab:id_imagenet_deit}}
\end{table}

\paragraph{OpenOOD Benchmark --- ImageNet-1K Challenge.}

OpenOOD\footnote{Official GitHub page of OpenOOD: \href{https://github.com/Jingkang50/OpenOOD}{https://github.com/Jingkang50/OpenOOD}\label{openood_link} } is a recent test bench focusing on OOD detection proposing classic test cases from the literature. In this paragraph we focus on the ImageNet-1K challenge with a set of OOD test cases grouped in increasing difficulties: Near-OOD --- with SSB-hard \citep{vaze2021open} and NINCO \citep{bitterwolf2023out}, Far-OOD --- with iNaturalist \citep{DBLP:journals/corr/HornASSAPB17}, Textures \citep{Cimpoi_2014_CVPR}, OpenImage-O \citep{wang2022vim} , Covariate Shift --- with  ImageNet-C \citep{hendrycks2019benchmarking}, ImageNet-R \citep{hendrycks2021many}, ImageNet-V2 \citep{recht2019imagenet}. Figure \ref{fig:ood_samples_sup} shows five examples from each of these datasets.

Table \ref{tab:imagenet1k_openOOD} presents our results on each group, including a global average performance. On average over all the test sets, we observe that NECO outperforms all the baseline methods, with all DNN models. With SwinV2, NECO reaches the first rank, in terms of \fpr, and the top3, in terms of \auc, on all datasets. However, on the global average, ViT with NECO is the best strategy. 
Although NECO perform relatively well on the Covariate Shift, in particular with SwinV2, but remains limited with ViT.

\begin{figure}[htbp]
\begin{center}
\includegraphics[width=.5\linewidth]{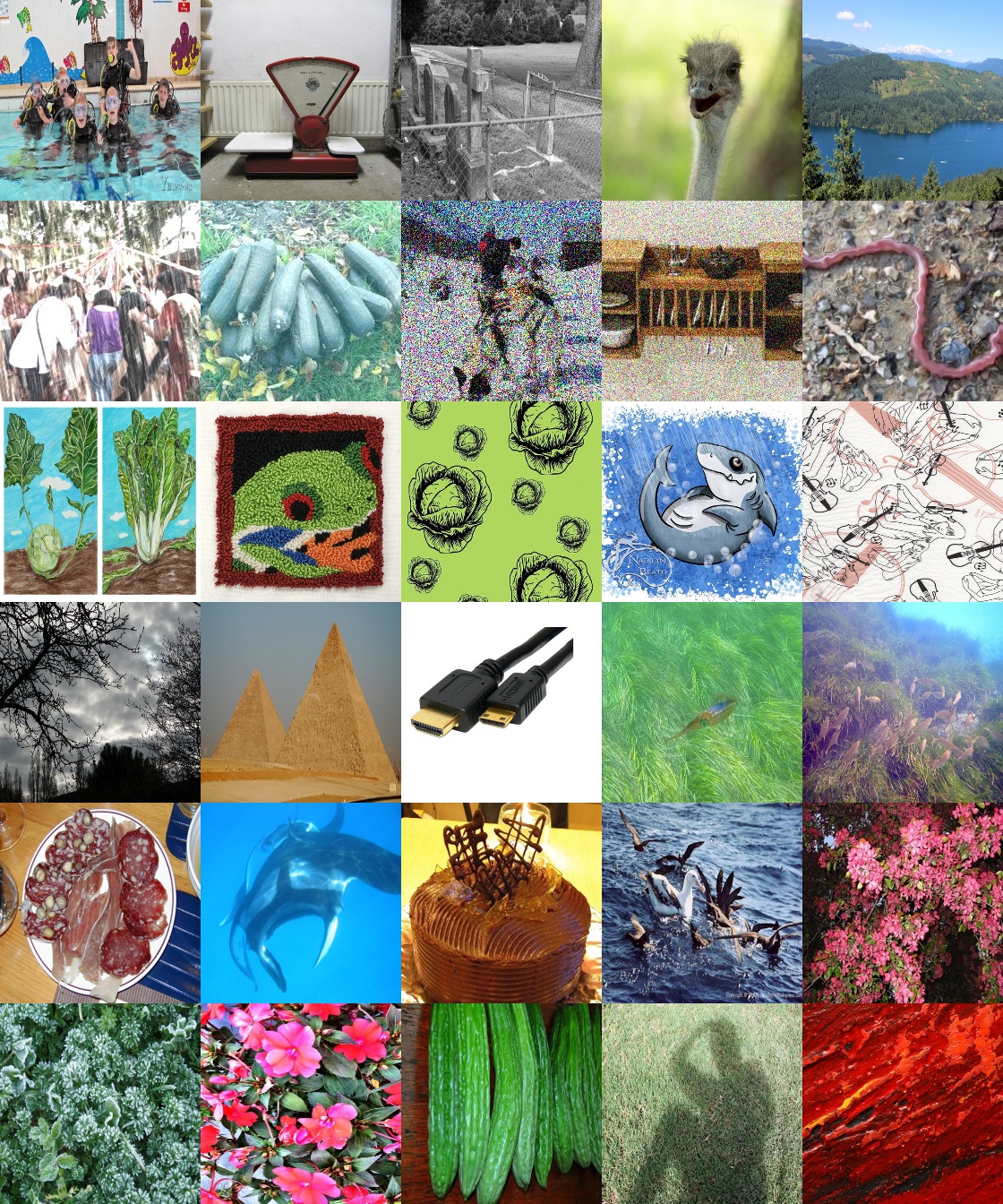}
\end{center}
\caption{Example images from OOD datasets with each row representing a dataset: ImageNet-V2,imageNet-C,imageNet-R, NINCO, SSB-hard and OpenImage-O respectively from top to bottom.}
\label{fig:ood_samples_sup}
\end{figure} 

\begin{table}[htbp]
\begin{center}
\addtolength{\leftskip}{-2.5cm}
\addtolength{\rightskip}{-2.5cm}
\resizebox{1.2\linewidth}{!}{%
\renewcommand{\arraystretch}{1.4}
\begin{tabular}{C{1.5cm}C{2.5cm}|cccc|ccc|cccc|c}
\hline
 & 
&\multicolumn{4}{c|}{\textbf{Covariate Shift}}
&\multicolumn{3}{c|}{\textbf{Near OOD}} 
&\multicolumn{4}{c|}{\textbf{Far OOD}} 
&\textbf{Global}\\ \cline{3-13}
Model & Method
&\begin{tabular}{@{}c@{}} \textbf{ImageNet-V2}\\AUROC$\uparrow$ FPR$\downarrow$\end{tabular} 
&\begin{tabular}{@{}c@{}} \textbf{ImageNet-C}\\AUROC$\uparrow$ FPR$\downarrow$\end{tabular} 
&\begin{tabular}{@{}c@{}} \textbf{ImageNet-R}\\AUROC$\uparrow$ FPR$\downarrow$\end{tabular} 
&\begin{tabular}{@{}c@{}} \textbf{Average}\\AUROC$\uparrow$ FPR$\downarrow$\end{tabular} 
&\begin{tabular}{@{}c@{}} \textbf{NINCO}\\AUROC$\uparrow$ FPR$\downarrow$\end{tabular} 
&\begin{tabular}{@{}c@{}} \textbf{SSB-hard}\\AUROC$\uparrow$ FPR$\downarrow$\end{tabular} 
&\begin{tabular}{@{}c@{}} \textbf{Average}\\AUROC$\uparrow$ FPR$\downarrow$\end{tabular} 
&\begin{tabular}{@{}c@{}} \textbf{Textures}\\AUROC$\uparrow$ FPR$\downarrow$\end{tabular} 
&\begin{tabular}{@{}c@{}} \textbf{iNaturalist}\\AUROC$\uparrow$ FPR$\downarrow$\end{tabular} 
&\begin{tabular}{@{}c@{}} \textbf{open-images}\\AUROC$\uparrow$ FPR$\downarrow$\end{tabular} 
&\begin{tabular}{@{}c@{}} \textbf{Average}\\AUROC$\uparrow$ FPR$\downarrow$\end{tabular} 
&\begin{tabular}{@{}c@{}} \textbf{Average}\\AUROC$\uparrow$ FPR$\downarrow$\end{tabular} \\
\hline
\multirow{13}{*}{ViT}
& Softmax score & \underline{58.61} 89.37& 71.75 71.02& 81.42 56.82& 70.59 72.40& 88.97 46.67& 80.31 66.03& 84.64 56.35& 86.64 49.40 & 97.21 12.14 & 93.08 29.62&   92.31 30.39 & 82.25 52.63 \\
& MaxLogit  & \textbf{58.71} \textbf{88.90}& \textbf{73.64} \underline{69.46} & \underline{86.65} 47.16& \underline{73.00} 68.51& 92.91 35.40& 87.28 53.10& 90.09 44.25& 91.69 36.90 & 98.87 5.72 & 96.75 17.33&   95.77 19.98  & 85.81 44.25 \\
& Energy& 58.59 \underline{89.20}& \underline{73.54} 69.69& \textbf{87.12} \textbf{44.52}& \textbf{73.08} \textbf{67.80}& \underline{93.13} 33.51& 87.88 49.16& \underline{90.50} 41.34   & \underline{92.13} \underline{34.15} & 99.04 4.76 & \underline{97.13} \underline{14.67}&   96.10 \underline{17.86}  & \underline{86.07} \underline{42.46} \\
& Energy+ReAct  & 58.57 89.32& 73.37 69.82& \underline{86.93} \underline{44.92} & \underline{72.96} \underline{68.02} & 93.10 33.43& 87.60 49.74& 90.35 41.59& \underline{92.08} \underline{34.50} & 99.02 4.91 & 97.13 14.76&   96.08 18.06  & \underline{85.97} \underline{42.67} \\
& ViM   & 57.33 90.38& 70.10 80.80& 86.26 52.61& 71.23 74.60& 93.12 \underline{32.36}& \underline{88.30} 45.11& \textbf{90.71} 38.73& 91.63 38.22 & \underline{99.59} \underline{2.00} & 97.13 15.90&   \underline{96.12} 18.71  & 85.43 44.67 \\
& Residual  & 55.18 92.55& 62.81 90.03& 81.90 69.06& 66.63 83.88& 89.95 44.96& 85.38 51.62& 87.67 48.29& 87.71 52.34 & \underline{99.36} \underline{2.79} & 94.84 25.77&   93.97 26.97  & 82.14 53.64 \\
& GradNorm  & 54.45 89.64& 69.44 \textbf{68.69}   & 76.54 \underline{46.91}& 66.81 \underline{68.41}& 84.21 37.14& 80.89 51.11& 82.55 44.12& 86.29 35.76 & 97.45 6.17 & 93.74 16.61&   92.49 19.51  & 80.38 44.00 \\
& Mahalanobis  & 58.24 90.07& 70.93 77.45& 86.52 48.82& 71.90 72.11& \textbf{93.90} \textbf{30.32}& 85.72 50.22& 89.81 40.27& 91.69 37.93 & \textbf{99.67} \textbf{1.55} & \underline{97.36} \underline{14.40}&   \underline{96.24} \underline{17.96}  & 85.50 43.84 \\
& KL-Matching   & 54.98 90.09& 65.36 73.41& 76.24 59.06& 65.53 74.19& 84.30 51.24& 70.92 69.70& 77.61 60.47& 84.61 52.38 & 95.07 15.31 & 90.22 35.32&   89.97 34.34  & 77.71 55.81 \\
& ASH-B  & 48.70 94.28& 44.13 93.79& 57.90 89.28& 50.24 92.45& 57.30 89.09& 57.23 92.61& 57.27 90.85& 66.05 83.29 & 72.62 80.62 & 68.34 83.42&   69.00 82.44  & 59.03 88.30 \\
& ASH-P  & 56.55 91.16& 69.94 77.33& 86.18 47.98& 70.89 72.16& 91.27 36.49& \textbf{89.38} \underline{39.29} & 90.33 \underline{37.89}   & 91.55 37.58 & 98.75 6.25 & 96.45 18.10&   95.58 20.64  & 85.01 44.27 \\
& ASH-S  & 55.88 91.29& 68.85 78.88& 85.23 50.25& 69.99 73.47& 90.34 37.67& \underline{89.30} \textbf{37.99} & 89.82 \underline{37.83}   & 91.00 39.73 & 98.50 7.28 & 95.97 19.41&   95.16 22.14  & 84.38 45.31 \\
& NuSA & 54.63 92.51 &63.42 90.01 &  79.82 73.19 & 65.96 85.23    & 89.97 39.66& 87.41 45.53& 88.69 42.59    & 88.28 51.24 &   99.30 3.12& 94.87 25.68 & 94.15 26.68 & 82.21 52.61  \\
& \textbf{NECO (ours)}  & \underline{58.69} \underline{89.00} & \underline{73.48} \underline{69.57} & 86.45 47.96& 72.87 68.84& \underline{93.38} \underline{30.98} & 87.52 \underline{41.54}& \underline{90.45} \textbf{36.26} & \textbf{92.86} \textbf{32.44}  & 99.34 3.26 & \textbf{97.55} \textbf{12.99}  &   \textbf{96.58} \textbf{16.23}& \textbf{86.16} \textbf{40.97} \\
\hline
\multirow{13}{*}{SwinV2}
& Softmax score & \underline{57.76} \underline{90.14} & \underline{75.99} 67.42& 78.35 64.50& \underline{70.70} 74.02& \underline{78.54} \underline{71.91} & \underline{70.54} \underline{82.39} & \textbf{74.54} \underline{77.15} & 81.72 60.91 & 88.59 \underline{47.66} & 85.08 \underline{57.70}&   85.13 \underline{55.42}  & \underline{77.07} \underline{67.83} \\
& MaxLogit  & 56.77 \underline{90.42}& 75.41 \underline{65.83}& 76.65 63.83& 69.61 \underline{73.36}& 74.80 \underline{73.39}& 66.68 \underline{83.64}& 70.74 \underline{78.52}   & 80.36 \underline{59.55} & 86.47 \underline{50.51} & 82.45 58.75&   83.09 \underline{56.27}  & 74.95 \underline{68.24} \\
& Energy& 55.73 91.22& 73.96 67.44& 74.23 68.22& 67.97 75.63& 70.36 78.70& 62.84 86.50& 66.60 82.60& 77.91 64.44 & 81.85 63.44 & 78.37 66.58&   79.38 64.82  & 71.91 73.32 \\
& Energy+ReAct  & 56.76 90.77& \underline{75.71} \underline{66.57} & 79.16 61.41& 70.54 \underline{72.92}& 74.73 75.23& 65.38 85.70& 70.06 80.47& \textbf{84.56} \underline{59.86}   & \underline{90.23} 51.97 & 84.72 59.51&   \underline{86.50} 57.11  & 76.41 68.88 \\
& ViM   & 56.29 90.88& 68.09 82.40& \underline{83.09} \textbf{55.67} & 69.16 76.32& 74.55 77.97& 65.33 87.30& 69.94 82.64& 81.50 61.18 & 87.54 54.09 & \underline{85.66} \underline{55.86}&   84.90 57.04  & 75.26 70.67 \\
& Residual  & 55.25 91.59& 62.84 85.38& 80.62 59.69& 66.24 78.89& 70.95 80.49& 62.68 88.64& 66.81 84.56& 77.36 65.00 & 83.23 59.77 & 81.72 60.75&   80.77 61.84  & 71.83 73.91 \\
& GradNorm  & 45.78 95.52& 48.37 86.71& 27.00 97.20& 40.38 93.14& 34.90 95.34& 40.76 95.65& 37.83 95.50& 33.84 93.31 & 31.82 95.01 & 29.53 95.38&   31.73 94.57  & 36.50 94.27 \\
& Mahalanobis  & \underline{57.14} 90.83& 70.04 83.55& \textbf{84.58} \underline{55.69} & \underline{70.59} 76.69& \textbf{78.76} 79.11   & \underline{68.94} 88.37& \underline{73.85} 83.74   & \underline{84.51} 63.35 & \underline{89.81} 57.10 & \textbf{88.20} 58.95   &   \textbf{87.51} 59.80 & \underline{77.75} 72.12 \\
& KL-Matching   & 54.74 90.95& 64.08 81.73& 73.40 70.87& 64.07 81.18& 72.29 75.40& 62.96 83.82& 67.62 79.61& 75.30 71.69 & 82.93 58.29 & 80.26 65.74&   79.50 65.24  & 70.75 74.81 \\
& ASH-B  & 50.67 95.66& 52.07 94.49& 50.13 97.54& 50.96 95.90& 53.33 96.70& 55.91 94.86& 54.62 95.78& 38.59 97.50 & 48.62 97.55 & 48.00 97.75&   45.07 97.60  & 49.66 96.51 \\
& ASH-P  & 45.39 95.91& 36.69 96.14& 25.86 98.51& 35.98 96.85& 29.17 98.37& 37.20 97.51& 33.19 97.94& 26.51 98.90 & 20.73 99.28 & 22.35 99.22&   23.20 99.13  & 30.49 97.98 \\
& ASH-S  & 45.17 95.35& 36.40 94.49& 27.52 95.78& 36.36 95.21& 28.09 97.23& 33.85 97.05& 30.97 97.14& 36.08 94.63 & 16.15 99.50 & 22.91 97.08&   25.05 97.07  & 30.77 96.39 \\
& NuSA & 55.99 92.21 & 53.01 93.80 & 61.16 86.98&  56.72  90.99 &   56.70 92.19 &  54.32 93.21&  55.51 92.70 & 62.72 83.14 & 64.01 83.58 &  64.31 82.81 & 63.68 83.17& 59.02 88.49\\
& \textbf{NECO (ours)}  & \textbf{58.04} \textbf{89.86}& \textbf{77.16} \textbf{63.05}& \underline{83.67} \underline{56.36} & \textbf{72.96} \textbf{69.76}& \underline{77.95} \textbf{66.69} & \textbf{70.63} \textbf{79.47}& \underline{74.29} \textbf{73.08} & \underline{82.27} \textbf{54.67}   & \textbf{91.89} \textbf{34.41} & \underline{86.98} \textbf{47.96}   &   \underline{87.05} \textbf{45.68} & \textbf{78.57} \textbf{61.56} \\
\hline
\multirow{13}{*}{DeiT}
& Softmax score & \textbf{58.05} 90.54   & 75.42 71.27& 78.48 64.85& \textbf{70.65} 75.55   & \underline{79.20} \underline{72.81} & \textbf{70.56} \underline{84.10} & \textbf{74.88} \underline{78.45} & \underline{81.56} 65.18 & 88.44 \underline{51.79} & 84.64 \underline{61.14}&   \textbf{84.88} \underline{59.37} & \textbf{77.04} \underline{70.21} \\
& MaxLogit  & 56.84 \underline{90.33}& \underline{75.55} \underline{67.31} & 75.40 62.66& 69.26 \underline{73.43}& 75.08 \underline{72.18}& 67.11 84.33& 71.09 \underline{78.25}   & 80.32 \underline{62.15} & 85.28 52.99 & 80.74 \underline{60.06}&   82.11 \underline{58.40}  & 74.54 \underline{69.00} \\
& Energy& 55.55 90.77& 74.25 \underline{67.74}& 72.22 65.16& 67.34 74.56& 69.82 76.91& 62.65 87.05& 66.23 81.98& 77.64 65.27 & 78.40 65.52 & 75.24 66.53&   77.09 65.77  & 70.72 73.12 \\
& Energy+ReAct  & 56.66 90.41& \underline{75.55} 67.87& 77.09 \underline{60.72}& \underline{69.77} \underline{73.00} & 74.08 75.20& 65.21 86.29& 69.64 80.75& 80.25 \underline{63.94} & 84.05 59.69 & 80.55 62.72&   81.62 62.12  & 74.18 70.86 \\
& ViM   & 56.49 \textbf{90.03}   & 65.68 91.97& \underline{84.79} \underline{59.56} & 68.99 80.52& 78.20 81.56& 64.45 91.12& 71.33 86.34& 78.62 85.67 & 88.75 71.68 & \underline{86.40} 75.08&   \underline{84.59} 77.48  & \underline{75.42} 80.83 \\
& Residual  & 55.85 90.43& 61.01 93.69& \underline{83.74} 66.02& 66.87 83.38& 76.21 84.50& 62.88 92.23& 69.55 88.36& 75.37 89.75 & 87.08 77.99 & 84.55 80.07&   82.33 82.60  & 73.34 84.33 \\
& GradNorm  & 46.54 94.62& 56.39 80.16& 26.00 98.43& 42.98 91.07& 34.63 97.48& 43.45 95.91& 39.04 96.69& 60.22 85.50 & 46.24 95.00 & 27.40 97.89&   44.62 92.80  & 42.61 93.12 \\
& Mahalanobis  & \underline{57.43} 90.36& 66.96 91.50& \textbf{85.37} \textbf{56.87}& \underline{69.92} 79.58& \textbf{80.75} 80.91   & 67.72 91.08& \underline{74.23} 86.00   & 24.71 99.06 & 14.55 99.83 & \textbf{88.01} 72.18   &   42.42 90.36  & 60.69 85.22 \\
& KL-Matching   & 55.85 91.31& 71.39 78.53& 79.96 67.42& 69.07 79.09& \underline{79.46} 74.04& \underline{68.14} \underline{83.56} & \underline{73.80} 78.80   & 32.85 97.94 & 19.77 99.65 & \underline{86.47} 61.26&   46.36 86.28  & 61.74 81.71 \\
& ASH-B  & 46.88 94.53& 40.66 97.00& 46.99 89.40& 44.84 93.64& 41.14 96.50& 34.89 98.28& 38.02 97.39& 38.93 93.85 & 27.47 98.78 & 40.78 94.89&   35.73 95.84  & 39.72 95.40 \\
& ASH-P  & 44.55 95.69& 35.21 96.49& 19.57 99.12& 33.11 97.10& 26.30 99.44& 39.02 97.10& 32.66 98.27& \underline{81.04} 82.45 & \textbf{90.32} 67.52 & 17.73 99.35&   63.03 83.11  & 44.22 92.14 \\
& ASH-S  & 48.35 94.91& 30.08 96.88& 41.46 95.86& 39.96 95.88& 31.88 97.69& 36.75 96.91& 34.31 97.30& \textbf{82.20} 64.63   & \underline{89.44} \underline{51.81} & 27.41 98.04&   66.35 71.49  & 48.45 87.09 \\
& NuSA &52.88 93.99 &59.16 87.85 & 59.01 93.53 & 57.02 91.79 & 64.76 86.90 & 57.30 87.31 & 61.03 87.10 & 53.96 95.46& 72.47 85.31 & 62.41 91.49 &  62.95 90.75 & 60.12 85.29 \\
& \textbf{NECO (ours)}  & \underline{57.28} \underline{90.13} & \textbf{76.20} \textbf{63.80}& 75.40 63.55& 69.63 \textbf{72.49}   & 78.15 \textbf{66.88}   & \underline{68.24} \textbf{80.60} & 73.19 \textbf{73.74}& 80.82 \textbf{59.47}   & \underline{89.54} \textbf{42.26} & 81.88 \textbf{57.70}   &   \underline{84.08} \textbf{53.14} & \underline{75.94} \textbf{65.55} \\
\hline
\end{tabular}
}
\end{center}
\caption{OOD detection for NECO against baseline methods, on the ImageNet-1K challenge from openOOD benchmark \citep{zhang2023openood}  that includes: open-image, ImageNet-V2,  ImageNet-R, ImageNet-C,  NINCO, SSB-hard, Textures and iNaturalist. 
Both metrics \auc and \fpr. are in percentage. Three architectures (ViT with fine tuning on ImageNet-1K, SwinV2, and DeiT) are evaluated. The best method is emphasized in bold, and the 2nd and 3rd ones are underlined.
\label{tab:imagenet1k_openOOD}}
\end{table}

\paragraph{Distribution of NECO Values.}
\label{detail_neco_performance}
To verify the discrepancy in the distribution of NECO scores between ID and OOD values, Figure \ref{fig:histograms_imagenet} depicts the density score histograms for the ImageNet benchmark when using ViT and Swin models. These histograms reveal a significant degree of separation in certain cases, such as when iNaturalist is used as the OOD dataset with the ViT model. 
However, in some other cases, like when ImageNet-O is employed as the OOD dataset with the Swin model, the separation remains less pronounced. Across all the presented scenarios, ViT-B consistently outperforms other variants of vision transformers in the OOD detection task.
This observation leads us to hypothesize that ViT-B exhibits more collapse than its counterparts, although further testing is needed to draw definitive conclusions.

\begin{figure}[htbp]
\begin{center}
\begin{subfigure}[t]{\linewidth}
\centering
\includegraphics[width=.32\linewidth]{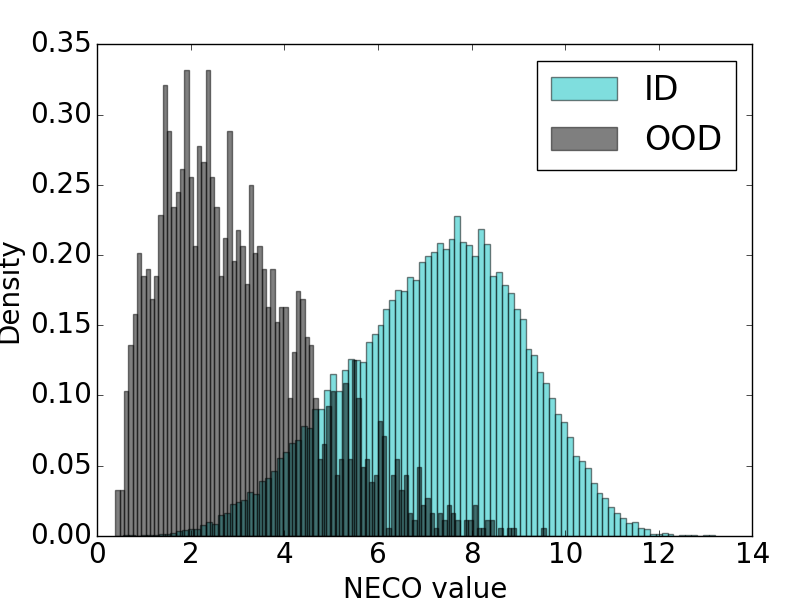}
\includegraphics[width=.32\linewidth]{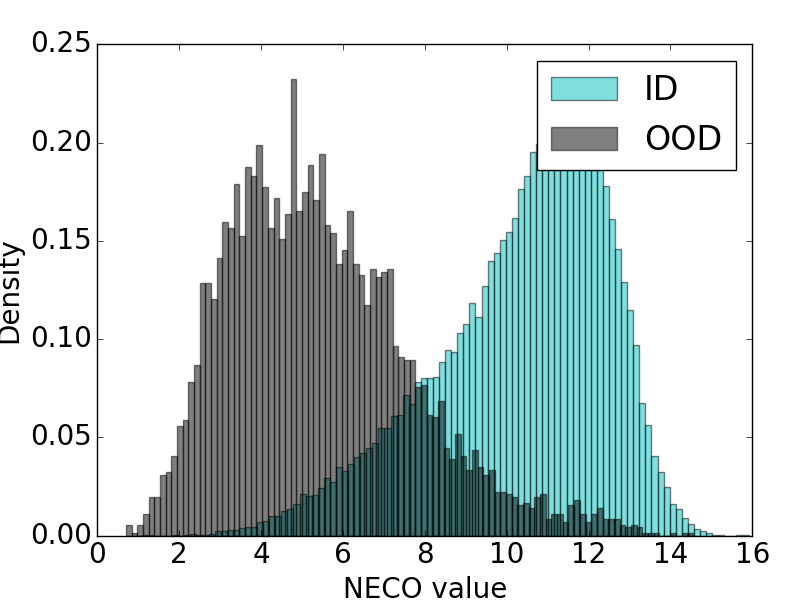}
\includegraphics[width=.32\linewidth]{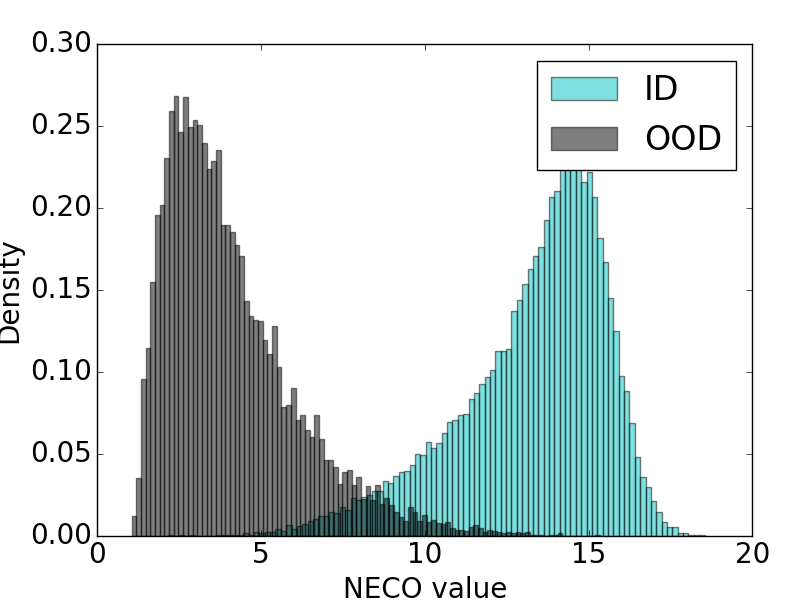}
\vspace{-.05in}
\caption{ViT-B/16}\label{fig:histogram_neco_imagenet_vit}
\end{subfigure}
\hfill
\begin{subfigure}[t]{\linewidth}
    \centering
\includegraphics[width=.32\linewidth]{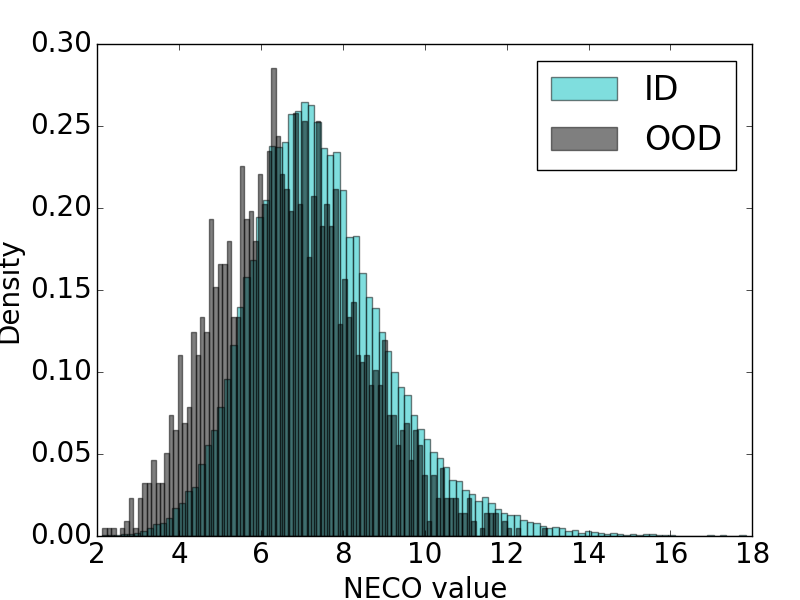}
\includegraphics[width=.32\linewidth]{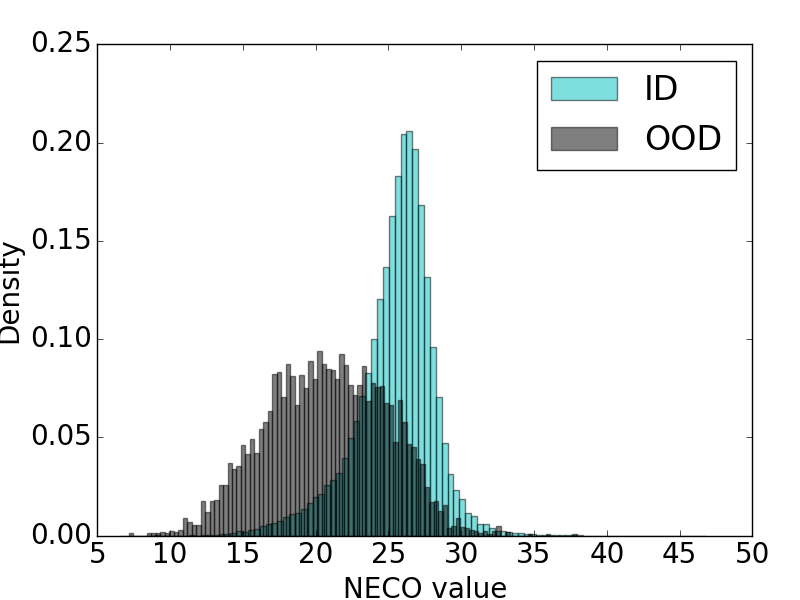}
\includegraphics[width=.32\linewidth]{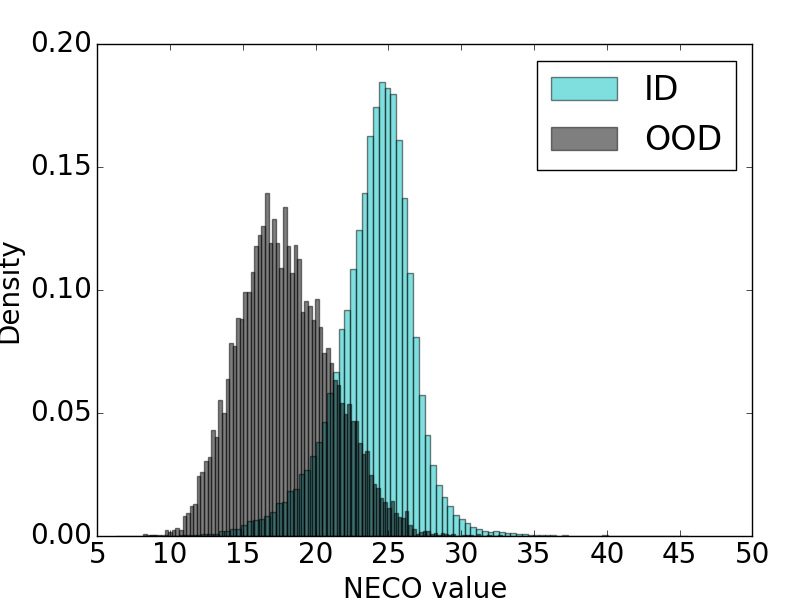}
\vspace{-.05in}
\caption{SwinV2-B/16}\label{fig:histogram_neco_imagenet_swin}
\end{subfigure}

\end{center}
\caption{ID (ImageNet) and OOD ImageNet-O (left), Textures (center) and iNaturalist (right) NECO score  distributions.}
\label{fig:histograms_imagenet}
\end{figure}

Similarly, figure \ref{fig:histograms_cifar10_cifar100} presents the density score histograms for the CIFAR-10/CIFAR-100 benchmark when using ViT and ResNet-18 models, respectively. In certain cases, such as CIFAR-10 (ID)/SVHN (OOD) with the ViT model, these histograms reveal nearly perfect separation. However, achieving separation becomes notably more challenging in the case of CIFAR-100 (ID)/CIFAR-10 (OOD). This difficulty arises because CIFAR-10 shares labels with a subset of the CIFAR-100 dataset, causing their data clusters to be much closer. This effect is particularly pronounced when using a ResNet-18 model compared to a ViT. The superior performance of the ViT model is further highlighted by its significantly lower values across all ``neural collapse'' metrics.

\begin{figure}[htbp]
\begin{small}
\begin{center}
\begin{subfigure}[t]{0.49\linewidth}

    \begin{subfigure}[t]{0.49\linewidth}
    \includegraphics[width=\linewidth]{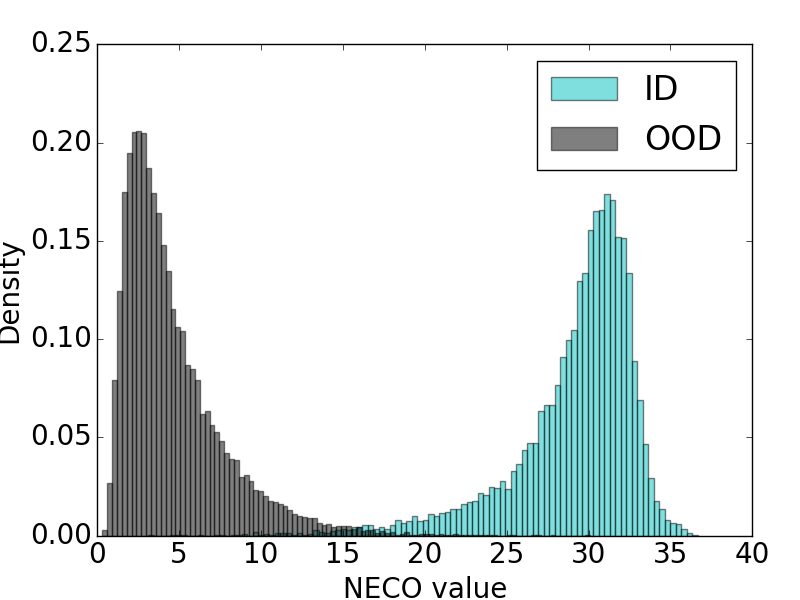} 
    \caption*{\textcolor{cyan}{CIFAR-10} vs \textcolor{gray}{SVHN}}
    \end{subfigure}
    \begin{subfigure}[t]{0.49\linewidth}
    \includegraphics[width=\linewidth]{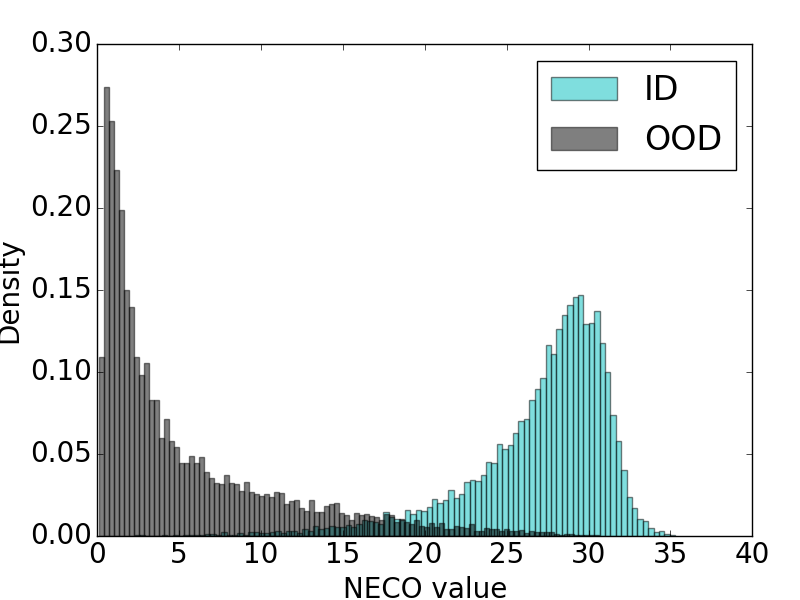} 
    \caption*{\textcolor{cyan}{CIFAR-10} vs \textcolor{gray}{CIFAR-100}}
    \end{subfigure}
    \begin{subfigure}[t]{0.49\linewidth}
    \includegraphics[width=\linewidth]{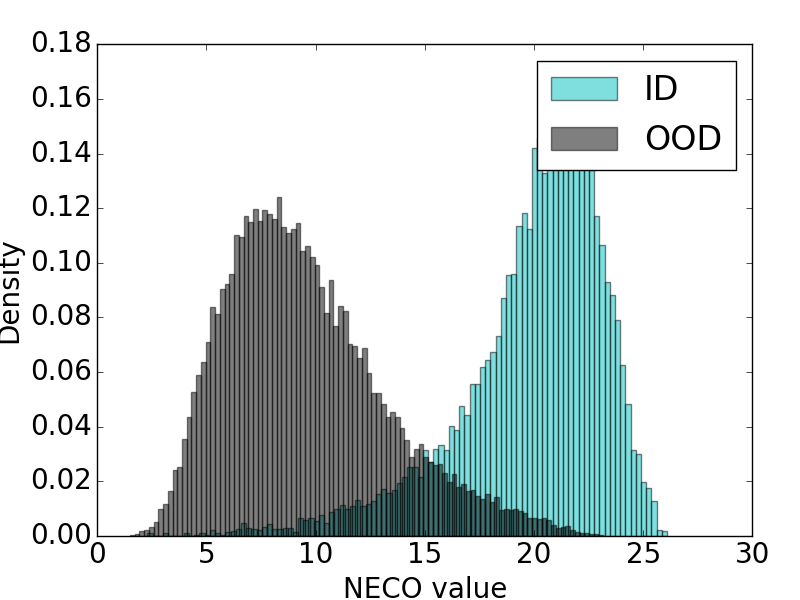} 
    \caption*{\textcolor{cyan}{CIFAR-100} vs  \textcolor{gray}{SVHN}}
    \end{subfigure}
    \begin{subfigure}[t]{0.49\linewidth}
    \includegraphics[width=\linewidth]{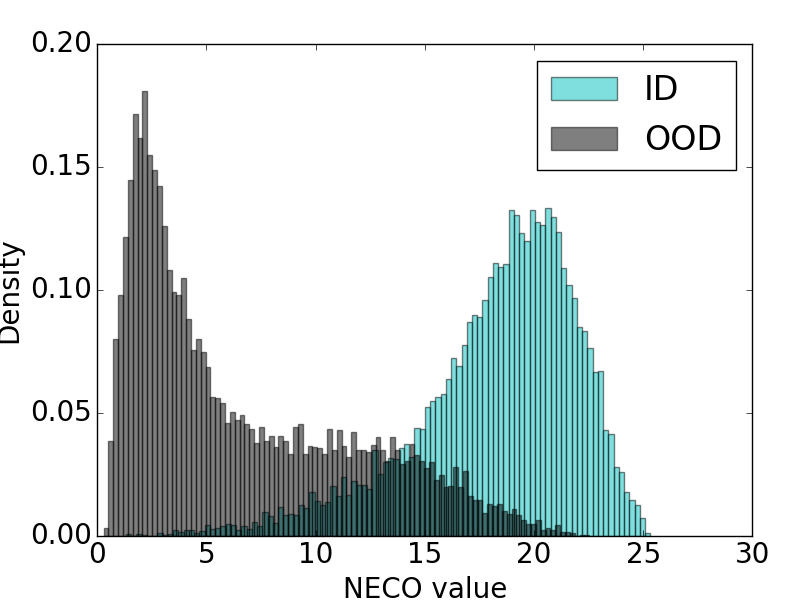} 
    \caption*{\textcolor{cyan}{CIFAR-100} vs \textcolor{gray}{CIFAR-10}}
    \end{subfigure}
    \caption*{ViT-B/16}
\end{subfigure}
\hfill
\begin{subfigure}[t]{0.49\linewidth}

    \begin{subfigure}[t]{0.49\linewidth}
    \includegraphics[width=\linewidth]{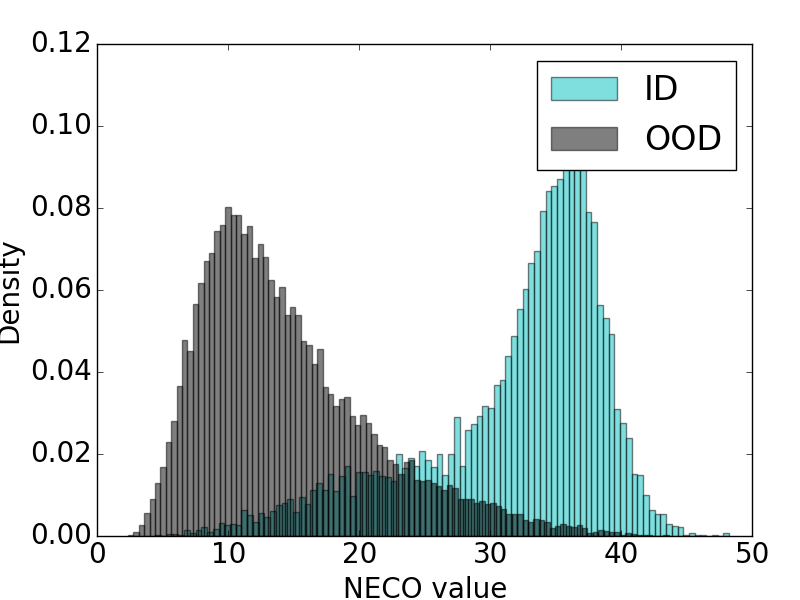} 
    \caption*{\textcolor{cyan}{CIFAR-10} vs \textcolor{gray}{SVHN} \label{fig:histo_vit_cifar10_svhn}}
    \end{subfigure}
    \begin{subfigure}[t]{0.49\linewidth}
    \includegraphics[width=\linewidth]{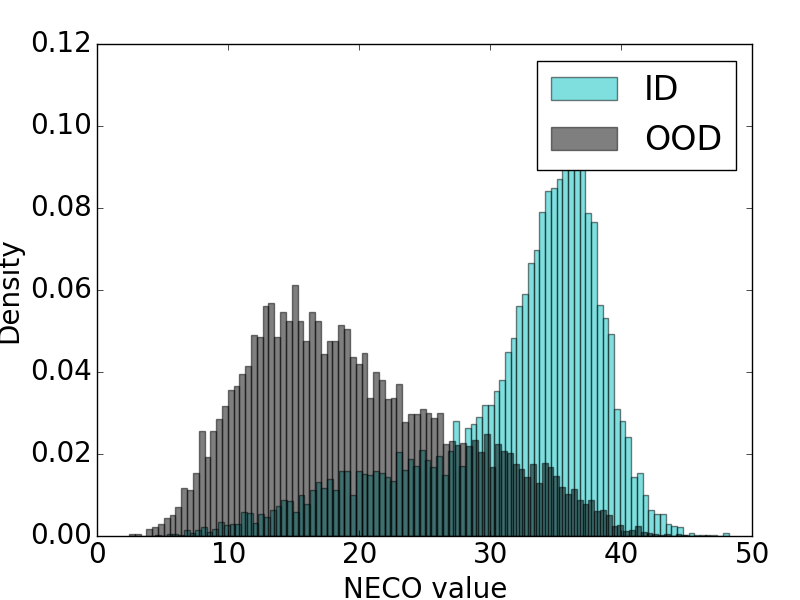} 
    \caption*{\textcolor{cyan}{CIFAR-10} vs \textcolor{gray}{CIFAR-100}}
    \end{subfigure}
    \begin{subfigure}[t]{0.49\linewidth}
    \includegraphics[width=\linewidth]{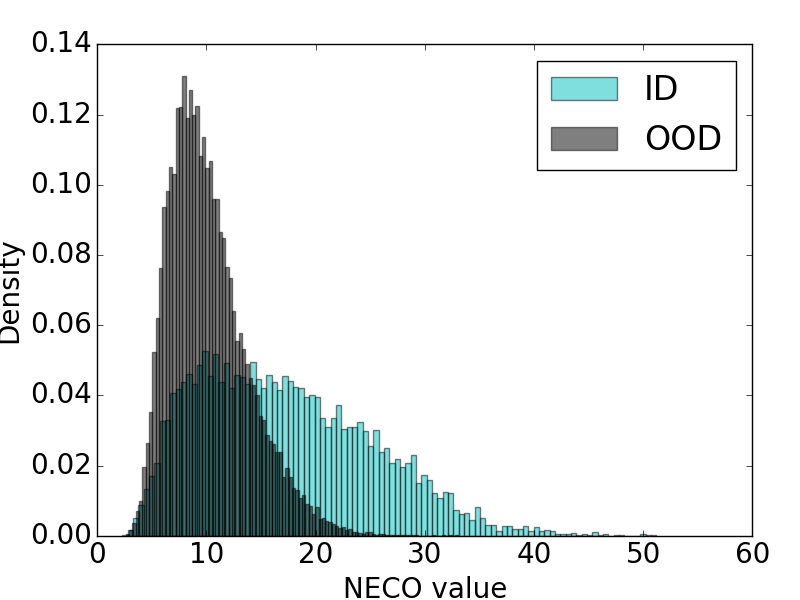} 
    \caption*{\textcolor{cyan}{CIFAR-100} vs \textcolor{gray}{SVHN}}
    \end{subfigure}
    \begin{subfigure}[t]{0.49\linewidth}
    \includegraphics[width=\linewidth]{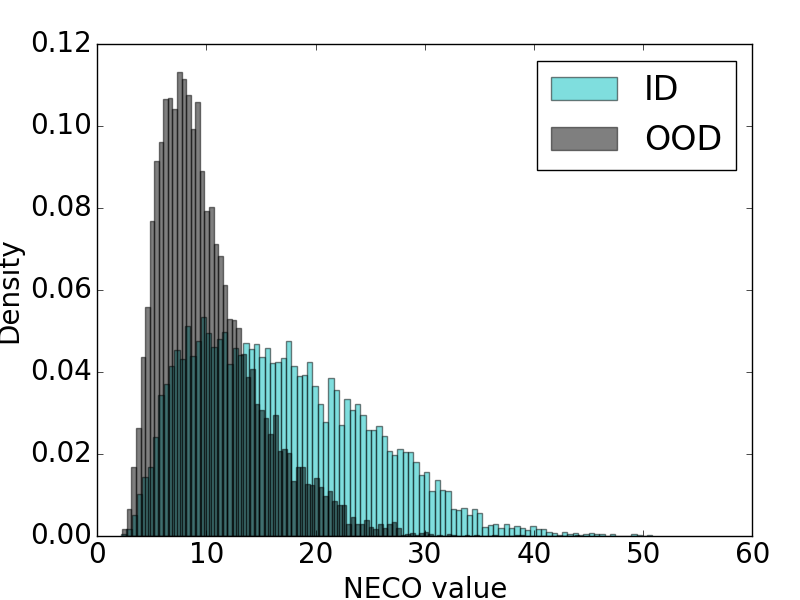} 
    \caption*{\textcolor{cyan}{CIFAR-100} vs \textcolor{gray}{CIFAR-10}}
    \end{subfigure}
    \caption*{ResNet-18}
\end{subfigure}
\end{center}
\end{small}
\caption{NECO score distributions for different ID and OOD  tuples, with a ViT-B/16 (left) and ResNet-18 (right) both fine-tuned on the ID training dataset.}
\label{fig:histograms_cifar10_cifar100}
\end{figure}

\end{document}